\documentclass[11pt]{article}
\usepackage{dacnotes} 

\usepackage{amsmath,amssymb,amsthm,mathtools}
\usepackage{booktabs}
\usepackage{enumitem}
\usepackage{natbib}
\usepackage{tikz-cd}
\usepackage{tikz}
\usetikzlibrary{positioning,fit,calc}
\usetikzlibrary{arrows.meta, positioning, shapes.geometric}
\usetikzlibrary{arrows.meta,positioning,shapes.geometric,fit,calc}
\newcommand{\ViTBlock}{%
\begin{tikzpicture}[
  baseline={([yshift=-0.5ex]current bounding box.center)},
  >=Latex,
  node distance=10mm,
  every node/.style={font=\small},
  io/.style={inner sep=2pt},
  block/.style={draw, thick, align=center, minimum height=10mm, minimum width=18mm},
  arr/.style={->, thick}
]
  \node (in)   [io] {$\mathcal{R}$};
  \node (mhsa) [block, right=of in]  {MHSA};
  \node (mlp)  [block, right=of mhsa] {FFN};
  \node (out)  [io, right=of mlp] {$\mathcal{R}'$};

  \draw[arr] (in) -- (mhsa);
  \draw[arr] (mhsa) -- (mlp);
  \draw[arr] (mlp) -- (out);

  \node[draw, thick, rounded corners, fit=(mhsa)(mlp), inner sep=14pt] (cell) {};
\end{tikzpicture}%
}

\newcommand{\TiVBlock}{%
\begin{tikzpicture}[
  baseline={([yshift=-0.5ex]current bounding box.center)},
  >=Latex,
  node distance=10mm,
  every node/.style={font=\small},
  io/.style={inner sep=2pt},
  block/.style={draw, thick, align=center, minimum height=10mm, minimum width=18mm},
  arr/.style={->, thick}
]
  \node (in)   [io] {$\mathcal{R}$};
  \node (mhsa) [block, right=of in]  {FNN};
  \node (mlp)  [block, right=of mhsa] {MHSA};
  \node (out)  [io, right=of mlp] {$\mathcal{R}'$};

  \draw[arr] (in) -- (mhsa);
  \draw[arr] (mhsa) -- (mlp);
  \draw[arr] (mlp) -- (out);

  \node[draw, thick, rounded corners, fit=(mhsa)(mlp), inner sep=14pt] (cell) {};
\end{tikzpicture}%
}

\DeclareSymbolFont{pxsymbols}{OMX}{pxex}{m}{n}

\DeclareMathSymbol{\pxintop}{\mathop}{pxsymbols}{"52}
\DeclareMathSymbol{\pxointop}{\mathop}{pxsymbols}{"48}

\let\int\undefined
\DeclareRobustCommand{\int}{\pxintop\nolimits}

\let\oint\undefined
\DeclareRobustCommand{\oint}{\pxointop\nolimits}

\usepackage{tikz}
\usepackage{rotating}
\usepackage{mathrsfs}

\newcommand{\from}{\leftarrow}

\newcommand{\softmax}{\mathrm{softmax}}

\title{The Diffusion Attention Connection}
\author{Julio Candanedo}
\date{July 2026}

\usepackage{graphicx}
\usepackage{hyperref}
\begin{document}
\maketitle

\begin{abstract}
Softmax attention is the row-normalized operator of a diffusion map: both normalize a learned score into a Markov operator, and differ only in what the score is allowed to contain. Decomposing that score reveals three geometric sectors: a metric core with Witten-Laplacian continuum limit, an exact node-potential sector corresponding to a Markov--Witten change of measure, and a circulating sector realized as irreversible Markov--Girsanov transport or as a magnetic $U(1)$ phase. Attention thereby becomes auditable in familiar mathematics: every trained head carries measurable geometry, potential, and flux, while standard mechanisms acquire geometric addresses---Coifman--Lafon normalization as an exact density correction, rotary embeddings as pure gauge, and AdaLN as a Cauchy--Green deformation combined with an Witten deformation. Experiments on pretrained diffusion transformers and language models test this decomposition: enforcing positive-semi-definite geometry is nearly free, consistent with the identification, whereas removing circulation incurs a substantial cost, sharpest on induction.
\end{abstract}

%\tableofcontents

\section{Introduction}

Transformers have become a dominant computational architecture across machine learning, from large language models \citep{vaswani2017attention} to generative vision, where Diffusion Transformers (DiTs) \citep{peebles2022dit} and flow-based variants such as SiT \citep{ma2024_sit} combine attention with diffusion and transport-based generative modeling \citep{ddpm,debortoli2021dsb, lipman2022_flowMatchingGenerativeModeling}. Yet their empirical success has preceded a comparable mathematical understanding. Attention was developed largely from the top down---through architectural choices that work remarkably well in practice---while the geometric and dynamical structure represented by an attention layer remains comparatively opaque. This leaves the basic inverse question: \emph{what computation is attention actually performing?}

The Transformer's innovation is \textit{contextual} computation, originally motivated for language comprehension. The architecture adapts to each context, a window of $N$ tokens forming the rows of $\mathcal{R}\in\mathbb{R}^{N\times D}$ for embedding dimension $D$. A Transformer composes this operation across depth,
\begin{align}\label{vit}
\scalebox{0.85}{\ViTBlock}
\end{align}
each block an attention module and a feed-forward unit (residual and normalization layers omitted for clarity), the representation of one block becoming the input of the next.

Our analysis begins with the Multi-Head Self-Attention (MHSA) block; for simplicity we consider single-head attention. Given trained matrices $W_Q, W_K, W_V \in\mathbb{R}^{D\times D}$ applied to our context $\mathcal{R}$, we obtain the query matrix $\mathcal Q = \mathcal{R} W_Q$ and the key matrix $\mathcal K = \mathcal{R} W_K$. Within each attention block, attention has the following row-stochastic ($\mathsf{A}^{+}\mathbf{1} =
\mathbf{1}$) form:
\begin{align}
  \mathsf{A}^{+} = \mathrm{softmax}^{+}\left(\beta\,
  \mathcal{Q}\mathcal{K}^{\top}\right)
  = \mathrm{rownorm}\left(\exp\left(\beta\,
  \mathcal{Q}\mathcal{K}^{\top}\right)\right),
\end{align}
where the superscript $^{+}$ denotes row normalization throughout, and updates the values $\mathcal V = \mathcal{R} W_V$ by $\mathcal V \mapsto \mathsf A^{+}\mathcal V$. The normalized-kernel form has a classical statistical reading as a Nadaraya--Watson estimator \citep{nadaraya1964_kernelRegression, watson1964_kernelRegression}, which averages targets by normalized kernel weights, an equivalence made explicit by \citet{tsai2019_transformerDissection} and extended to kernel PCA \citep{chen2023_primalAttention, teo2024kernelpcaattention}. Row stochasticity supports a complementary dynamical reading: $\mathsf A^{+}$ is a finite Markov transition operator, the learned score supplying the pairwise affinities and the softmax turning them into transport.
A high-temperature expansion $\beta\to0$ of the attention operator might give more insight,
\begin{align*}
  e^{\beta \mathcal{Q}\mathcal{K}^\top} = 1 + \beta \mathcal{Q}\mathcal{K}^\top + O(\beta^{2}),
\end{align*}
which suppresses the nonlinear localization and exposes the underlying bilinear matrix structure. 
Here the distinctions between methods melt: PCA, linearized attention \citep{letey2025_pretrainTestTaskAlignment}, linearized kernel regression \cite{elkaroui2010_kernelRandomMatrices}, linearized diffusion map \citep{candanedo2025_linearizedDiffusionMap} collapse onto closely related matrix problems, all exhibiting random-matrix-theory (RMT) structure \citep{sompolinsky1988_chaosRandomNeuralNetworks, lecun1991_covarianceEigenvaluesNeuralLearning}. This weak-coupling correspondence is our bootstrap; the principal results concern what survives the exponential. As $\beta$ grows the kernel localizes and the coincident methods differentiate, each by the structure its kernel imposes.

The most constrained member is Diffusion Maps (DMAP) \citep{BelkinNiyogi2003_LaplacianEigenmaps, coifman2006_diffusionMaps, hein2007_graphLaplaciansRandomNeighborhoodGraphs, gao2015_hypoellipticDiffusionMaps, gao2021_horizontalDiffusionMaps}: the score is constrained to be symmetric with a definite quadratic core, and in exchange the finite operator receives a complete canonical theory. DMAP defines a positive-semi definite (PSD) adjoint operator whose continuum limit is a Laplacian $\mathscr{D}$ on a Riemannian manifold $\mathscr M$, with Hilbert space $\mathfrak H(\mathscr M)$. This opens the machinery of spectral and harmonic analysis, Dirichlet forms, heat semigroups, and geometric constructions such as the Bakry--Émery calculus \citep{bakry2014_markovDiffusionOperators, jones2024_diffusionGeometry}. The finite operator, its spectrum, and its continuum geometry thus belong to a single framework of intrinsic objects that can be inspected and manipulated, making DMAP the natural origin for a reductionist analysis of nonlinear attention. 

Well behaved DMAP describes reversible transport \cite{gottwald2025_stableSB}; our first ablation toward attention makes the transport nonreversible while keeping the symmetric sector PSD. Allowing the score a directed, antisymmetric part breaks detailed balance: the operator is no longer self-adjoint, the spectrum leaves the positive real line, and the variational description is lost. Yet the second-order structure is untouched, the continuum limit remains elliptic, and the same Riemannian geometry emerges, now carrying a connection. The deformation is a change of path measure at fixed metric, a discrete Girsanov transformation, a gauge organization of nonreversible transport developed independently in the Fokker--Planck setting \citep{lieb1993_fluxesLaplaciansKasteleyn, ohzeki2026_nonreversibleGaugeFields}: transport moves, geometry stands still. We call the resulting operator AMAP.

The final ablation relaxes the definiteness itself. In the definite sectors the token dynamics is variational, a descent of a quadratic
energy \citep{alcalde2025_attentionFrankWolfe}; unrestricted query--key scores carry no such guarantee. Its exponent may take indefinite signature: the metric becomes pseudo-Riemannian and the continuum dynamics cease to be elliptic. An SGD-learned score generically exhibits both relaxations at once, and the question of the paper is therefore how diffusion geometry deforms under each, and how the deformations interact.

Our perspective is to reverse the usual bookkeeping of the Transformer block. Rather than viewing attention as acting first and the multilayer perceptron (MLP) as a subsequent pointwise correction, we associate each attention operator with the nonlinear feature map that immediately precedes it:
\begin{align}\label{tiv}
\scalebox{0.85}{\TiVBlock}
\end{align}
Under this convention, the MLP determines the representation in which pairwise similarities are measured, while attention transports information according to the geometry induced by that representation. The pair therefore admits a geometric reading: the nonlinear feature map learns a context-dependent embedding, and the following attention matrix acts as a discrete transport operator on the resulting feature geometry. This viewpoint isolates, layer-by-layer, a learned geometric object on which the diffusion analogy can be made precise.

The continuum picture has two distinct limits. First, at a fixed layer, the joint large-sample and localization limit recovers a differential operator on the manifold represented by that layer. Second, across depth, successive blocks generate a sequence of layer-dependent geometries,
\begin{align*}
\mathscr M^{(0)}\xrightarrow{F_0}\mathscr M^{(1)}\xrightarrow{F_1}\cdots\xrightarrow{F_{L-1}}\mathscr M^{(L)},
\end{align*}
with corresponding generators $\mathscr D_\ell$. When depth is itself refined to a continuous variable, the discrete composition of blocks admits the differential viewpoint:
\begin{align*}
\partial_\ell u = \mathscr D_\ell u,
\end{align*}
or, more generally, a depth-ordered evolution when $\mathscr D_\ell$ varies with $\ell$. Related continuum viewpoints have appeared both for graph message passing, where GRAND interprets propagation as a diffusion process \citep{chamberlain2021_graphNeuralDiffusion}, and for self-attention itself, which has been studied as an interacting-particle system with depth playing the role of time \citep{geshkovski2023_selfAttentionClusters}. Here the distinction is that the generator $\mathscr D_\ell$ is derived from the layer-dependent metric and directed geometry induced by the attention operator. This separates the spatial continuum limit of a single attention layer from the depth continuum limit of the network as a whole, and places the resulting description in the broader statistical-physics tradition of passing from finite random walks and matrix operators to continuum dynamics \citep{zinnjustin2019_randomWalksRandomMatrices}, providing a natural bridge to PDE and renormalization-group-like descriptions of learned geometric flow.

This decomposition also suggests controlled architectural probes. We construct modified attention operators, including AMAP and DMAP-based variants, in which the metric and directed sectors can be isolated, constrained, or removed, and test them in two qualitatively different settings: diffusion-style generative modeling with SiT, \cite{ma2024_sit}, and autoregressive language modeling with nanochat, \cite{nanochat}. These experiments are tests of the geometric decomposition itself---which continuum structures persist after training, and which sectors carry useful computation.

\section{DMAP}\label{sec:dmap}

A \textit{diffusion} is the statement of locality in probabilistic form. A random walker is first \textit{here}, and in a moment later, near \textit{here}. In the discrete setting, our walker moves among $N$ points, the rows of $\mathcal{R}\in\mathbb{R}^{N\times D}$, read classically as samples of a dataset. On this finite support each neighborhood holds finitely many points, and a walk among them is a hop between rows rather than a motion through a continuum. As $N$ grows the neighborhoods fill, and fill; the manifold hypothesis suggests the limit is a $d$-dimensional Riemannian manifold $\mathscr M$. That is the continuum limit $N,\beta\to\infty$ the \textit{hop} becomes Langevin dynamics on the data's own geometry whose harmonics form a Hilbert space $\psi(k)\in\mathfrak H(\mathscr M)$. This is only an abstract limit, instead at finite $N$: harmonics are tabulated on a finite Hilbert space, $\psi_k\in\mathfrak{H}_N$, and determined via the diagonalization of the \emph{Markov--Nyström--Pesenson (MNP) Laplacian:}
\begin{align}\label{eq:Laplacian}
  L^{+} &= I - \mathsf P^{+} \quad, \\
  \mathsf{P}^{+} &= \softmax^+(-\beta\mathcal{H})\quad.
\end{align}
The finite theory provides computability at the expense of a diffraction-limited resolution from our finite mode count. Hence, positions may only be resolved up to highest frequency mode, only in the continuum are locations arbitrarily well known.

The two ingredients of the operator are an energy and a temperature. The pairwise energy of the walk splits into one- and two-body terms:
\begin{align}\label{eq:hamiltonian-split}
  \mathcal H = \underbrace{\,w\mathbf 1^{\top} + \mathbf 1 w^{\top}\,}_{\text{one-body}} - 2\underbrace{\,\mathcal R M\mathcal R^{\top}\,}_{\text{two-body: }\mathcal G},
\end{align}
under the feature metric $M = W_M W_M^{\top}\succeq 0$: the two-body term is the Gram correlation $\mathcal G$, rewarding alignment between rows, and the one-body terms are the self-energies $w = \operatorname{diag}\mathcal G$. The walk prefers low energy: proximity is the ground state, and distant hops are thermally activated. The parameter $\beta > 0$ is the inverse temperature that prices this energy: $e^{-\beta\mathcal H_{ij}}$ is a Boltzmann weight. Row normalization then makes the weights a conditional Gibbs law,\footnote{The superscript $^{+}$ denotes row normalization throughout: row $i$ of $\mathsf P^{+}$ is the Gibbs distribution over destinations at inverse temperature $\beta$, conditioned on source $i$, with row sum a local partition function.}  so the walker at row $i$ hops to $j$ with the thermal weight of the separation. Temperature is therefore the resolution dial of the theory: hot walks average globally, cold walks see only neighbors.

At $\beta = 0$ this MNP-Laplacian is trivial, with $\mathsf P^{+} = \frac1N\mathbf 1\mathbf 1^{\top}$.  Further increasing $\beta$ improves locality band by band. Entrywise exponentiation (ignoring the exact 1-body terms) expands the kernel in Hadamard powers of the Gram matrix, $\mathcal{G} = \mathcal{R}M\mathcal{R}^\top$,
\begin{align}\label{eq:hadamard-bands}
  \exp(\odot\, 2\beta\mathcal G) = \sum_{k\ge 0}\frac{(2\beta)^{k}}{k!}\,\mathcal G^{\odot k}\quad.
\end{align}
The continuum has infinitely many modes on the Hilbert space $\mathfrak{H}(\mathscr{M})$, while the finite theory holds at most $N$ modes, its kernel being full-rank for every $\beta>0$. Hence at critical values of $\beta,N$ new modes appear in succession, the modes of $\mathsf{P}^{+}$ take the following form at finite $N$: 
\begin{align}
\mathrm{spec}\left(\mathsf{P}^+\right) &= {1} \,\,\oplus \,\,\mathrm{settled}(\beta, N)\,\, \oplus \,\,\mathrm{RMT}^+(\beta, N)\quad.
\end{align}
The settled manifold modes are those that follow Weyl's law, and this sector crystallizes with lower temperature, i.e. as $\beta, N$ increase. 
The $\mathrm{RMT}^{+}$ sector is the noise bulk, persistent at any finite $N$. The random-matrix statistics are pure only at $\beta = 0^{+}$, but contaminated immediately by the PCA spikes that clear BBP-like thresholds, spike and bulk mixed by the quenched sampling, \citep{baik2005_phaseTransition, elkaroui2010_kernelRandomMatrices, barbier2022_dictionaryLearningSpectralReplica}.
These linear spikes are not proper modes of the Laplacian until they condense to Weyl's law.

Asymptotically, $\beta,N\to\infty$, the symmetrized\footnote{The MNP Laplacian $L^{+} = I - \mathsf P^{+}$ is not symmetric, but is symmetrizable because its directed sector vanishes: with stationary vector $\pi$, $\pi^{\top}\mathsf P^{+} = \pi^{\top}$, the conjugation $\widetilde L = \operatorname{diag}(\pi)^{1/2}\, L^{+}\operatorname{diag}(\pi)^{-1/2} = I - \operatorname{diag}(\pi)^{1/2}\,\mathsf P^{+} \operatorname{diag}(\pi)^{-1/2}$ is symmetric, since detailed balance renders $\operatorname{diag}(\pi)\mathsf P^{+}$ symmetric. The similarity preserves the spectrum, real in $[0,2]$, and maps eigenvectors by $\operatorname{diag}(\pi)^{1/2}$; it is the finite counterpart of the gauge relating the drifted generator to the symmetric $\Delta_{\phi}$, and the numerically preferred form, handing Lanczos a symmetric matrix.} 
Markov--Nyström--Pesenson Laplacian transforms into the Witten Laplacian \citep{witten1982_supersymmetryMorse} with intrinsic superpotential $\phi = -\log q$, where $q$ is the sampling density (Thm.~\ref{thm:continuum}; \citealp{singer2006_graphToManifoldLaplacian,
singer2008_mahalanobisDiffusionMaps,
hein2007_graphLaplaciansRandomNeighborhoodGraphs,
evans2023_mahalanobisDiffusionMaps}):
\begin{align}\label{eq:witten-laplacian-limit}
  \beta \widetilde{L} \xrightarrow[N,\beta\to\infty]{}
  \Delta_{\phi} = -\Delta + |\nabla\log q|^{2} + \Delta\log q.
\end{align}
The superpotential $\phi$ defines a free energy surface  $\phi\propto\mathscr{F}: \mathscr{M}\to\mathbb{R}$, on the data manifold. Furthermore its ground state is the data distribution itself, $\Delta_{\phi}\, q = 0$: the walk equilibrates to the law it was sampled from. The sampling terms may be removed by a further deformation, the normalization of \citet{coifman2006_diffusionMaps}, exposing the dataset's intrinsic Laplace--Beltrami operator $\Delta$; in fact any exact deformation, à la Witten, preserves self-adjointness with respect to its own deformed measure on $\mathfrak H(\mathscr M)$, with eigenfunctions $\psi(k)$ forming an orthonormal basis of real modes (\S\ref{sec:exact}).

% REVISION COPY: substantive edits by ChatGPT are shown in red.
% Requires \usepackage{xcolor} in the main document preamble.
% Unchanged source text remains black.
% DIRECTED DIFFUSION -- main text (austere disclosure)
% Conventions (locked):
%  A(W)   = circulation / magnetic part = -1/2 (W - W^T)  [antisymmetric]
%  E(w)   = electric 1-form       = 1/2 (w 1^T - 1 w^T) [gradient/exact]
%  A_total  = full gauge field = E(w) + A(W)
% Continuum convention (matches DMAP eq:limit-biased):
%  generator = -beta*L -> analyst Laplacian (Delta = div grad, <= 0) + drift
% Verdicts on which sector is physical are DEFERRED to Exact/Coexact sections.
\section{Directed Diffusion} \label{sec:directed}

The construction of \S\ref{sec:dmap} rests on a \emph{symmetric} similarity: the relation $i\leftrightarrow j$ is reciprocal. This is natural for geometric point clouds but too restrictive for inherently directional interactions, causal or temporal dependence, transport and flow processes, asymmetric couplings, where $i\to j$ need not match $j\to i$. The restriction is easy to locate, the Mahalanobis metric factorizes as:
\begin{align}\label{eq:M-factor}
 M = W_M W_M^\top \succeq 0,
\end{align}
hence its symmetry (and positivity) is exactly the demand that the \emph{same} map $W_Q$ act on both arguments. The natural way to break it is equally simple: replace one factor by an independent map $W_K$, $W = W_Q W_K^\top$, and ask what the resulting bilinear form:
\begin{align}\label{eq:QK}
 \mathcal{W} = \mathcal{R}\,W\,\mathcal{R}^\top = \mathcal{Q}\,\mathcal{K}^\top, \qquad
 \mathcal{Q} = \mathcal{R}W_Q, \qquad
 \mathcal{K} = \mathcal{R}W_K.
\end{align}
yields. When $W_K = W_Q$ we recover the symmetric metric \eqref{eq:M-factor} and the reciprocal geometry of \S\ref{sec:dmap}; when $W_K \neq W_Q$, $\mathcal W$ is asymmetric and encodes a \emph{directed} comparison between the two projections $\mathcal Q$ and $\mathcal K$. Let $w=\mathrm{diag}(\mathcal W)$ be the vector of self-affinities, $w_i=\mathcal Q_i\cdot\mathcal K^\top_i$. Anchoring at the source and comparing through the \emph{forward} score $\mathcal W_{ij}$, we form the directional pair:
\begin{align}\label{eq:hams}
 \mathcal{H}^\to = w\mathbf{1}^\top - \mathcal{W}, \qquad
 \mathcal{H}^\from = \mathbf{1}w^\top - \mathcal{W}^\top,
\end{align}
such that $\mathcal H^\from=(\mathcal H^\to)^\top$ and $\mathrm{diag}(\mathcal H^\to)=\mathrm{diag}(\mathcal H^\from)=\mathbf 0$. Their sum is symmetric,
\begin{align}\label{eq:H-sum}
\mathcal{H} = \mathcal{H}^\to + \mathcal{H}^\from = w\mathbf{1}^\top + \mathbf{1}w^\top - (\mathcal{W}+\mathcal{W}^\top),
\end{align}
the bilinear form induced by the \emph{symmetric part} $\mathcal W_S=\tfrac12(\mathcal W+\mathcal W^\top)$ of the score. This is a Mahalanobis-type pairing of the two projections; it reduces to the genuine squared-distance $\|\mathcal Q_i-\mathcal Q_j\|^2$ of \S\ref{sec:dmap} exactly when $\mathcal K=\mathcal Q$, and is otherwise indefinite, reflecting that $\mathcal W_S$ need not be positive (Lemma~\ref{lem:bidivergence}). Regardless, $\mathcal H$ is \emph{independent of the orientation of $W$}: the antisymmetric part of $\mathcal W$ cancels in the sum \eqref{eq:H-sum}, thus all orientation is carried by how $\mathcal H$ is \emph{partitioned} between the two directions $(\mathcal H^\to,\mathcal H^\from)$ --- a structure we call a \emph{bidivergence}.

Fixing the reciprocal geometry $\mathcal H$ does not fix its directional split. Any pair obeying $\mathcal H^\to+\mathcal H^\from=\mathcal H$, $\mathcal H^\from=(\mathcal H^\to)^\top$, and zero diagonal has the form
\begin{align}\label{eq:gauge-split}
 \mathcal H^\to = \tfrac12\mathcal H + \mathcal A_{\mathrm{total}}, \qquad
 \mathcal H^\from = \tfrac12\mathcal H - \mathcal A_{\mathrm{total}},
\end{align}
for an antisymmetric, zero-diagonal \emph{gauge field} $\mathcal A_{\mathrm{total}}$ (Proposition~\ref{prop:antisymmetric_gauge}). The two directions differ only by the sign of $\mathcal A_{\mathrm{total}}$, which is why it cancels in the sum \eqref{eq:H-sum}. The balanced partition $\mathcal A_{\mathrm{total}}=0$ yields $\mathcal H^\to=\mathcal H^\from=\tfrac12\mathcal H$ (the undirected case of \S\ref{sec:dmap}); nonzero $\mathcal A_{\mathrm{total}}$ selects a directed partition. At this stage this is most safely regarded as an \emph {antisymmetric partition freedom}: one reciprocal geometry $\mathcal H$ is shared by an entire family of directional splits, and $\mathcal A_{\mathrm{total}}$ labels the chosen split. The terminology ``gauge'' becomes literal only after an equivalence relation is specified under transformations that leave the relevant observables unchanged; the exact and circulating sectors below make that distinction precise. The two operators introduced next are the real-drift and phase-valued images of this antisymmetric sector.

\subsection{CMAP: the magnetic lift} \label{sec:cmap}

One method to condition a potentially strong antisymmetric field is to move it into the gauge domain, i.e. $\mathcal A\mapsto i\mathcal A$. This allows us to retain the symmetric diffusion magnitudes of \S\ref{sec:dmap} untouched and carries the directed content as a \emph{phase} --- the connection or magnetic diffusion-maps kernel \cite{lieb1993_fluxesLaplaciansKasteleyn, SingerWu2012VDM, Fanuel2017_MagneticEigenmaps, gao2021_horizontalDiffusionMaps}. Holding the symmetric sector in the positive DMAP regime, let $\mathcal A(\mathcal W)=-\tfrac12(\mathcal W-\mathcal W^\top)$ denote the antisymmetric sector of the score, viewed as a discrete real $1$-form. Its parallel transport along $i\to j$ is the unit-modulus phase:
\begin{align}\label{eq:cmap-transport}
 U = \exp\left(i\,c\,\mathcal A(\mathcal W)\right), \qquad c\in\mathbb R,
\end{align}
with coupling $c$. Since $\mathcal A(\mathcal W)^\top=-\mathcal A(\mathcal W)$ we have $U^\top=\overline U$, so the magnetic affinity
\begin{align}\label{eq:cmap-kernel}
\widetilde{\mathcal P} = P\odot U\in\mathbb C^{N\times N}, \qquad \mathcal P^{\dagger}=\mathcal P,
\end{align}
is Hermitian: its element-wise magnitude $|\mathcal P_{ij}|=P_{ij}$ is the real DMAP affinity, its phase records the directed sector %(Proposition~\ref{prop:cmap-hermitian})
. When $\mathcal A(\mathcal W)=0$ the phase is trivial and $\mathcal P=\mathsf P$; CMAP is a phase-enriched lift of DMAP that collapses back to it as soon as the directed sector vanishes (equivalently $c\to0$). To preserve the underlying diffusion measure, we normalize using the real DMAP degree $q=P\mathbf 1$ (the phases carry no magnitude), rather than the generally complex row sum of the magnetic kernel:
\begin{align}\label{eq:cmap-operator}
\widetilde{\mathcal P} = \mathrm{diag}(q)^{-1/2}\,\mathcal P\,\mathrm{diag}(q)^{-1/2}, \qquad \mathcal L = I- \widetilde{\mathcal P}, \qquad\mathcal L^{\dagger}=\mathcal L.
\end{align}
Unlike the row-stochastic DMAP operator, $\mathcal P^{+}$ is not in general a Markov transition matrix: it is a Hermitian magnetic transport operator on complex amplitudes. Hermiticity guarantees a real spectrum and an orthogonal complex eigenbasis. Entrywise, $|\widetilde{\mathcal P}|$ retains the underlying symmetrically normalized DMAP affinity, but the phases can shift the eigenvalues as well as the eigenvectors; CMAP therefore preserves the diffusion magnitudes, not the DMAP spectral timescales. Hermiticity guarantees the ordinary spectral theorem, but it does not guarantee that $\mathcal P$ is a PSD kernel. For a nontrivial connection the Hermitian magnetic kernel need not remain PSD; negative eigenvalues may appear %(Proposition~\ref{prop:cmap-curved}) 
and admits the spectral decomposition:
\begin{align*}
 \widetilde{\mathcal P}=\mathcal P_{+}-\mathcal P_{-}.
\end{align*}
As a Hermitian operator, $\widetilde{\mathcal P}$ still possesses an orthonormal eigenbasis in the ordinary positive Hilbert space $\mathbb C^N$ (or its continuum $L^2$ limit). What becomes indefinite is instead the \emph{kernel-induced} geometry: when $\widetilde{\mathcal P}$ is used as a reproducing kernel, its positive and negative spectral subspaces define a reproducing-kernel Kre\u{\i}n space (RKKS) \cite{ong2004_learningNonPositiveKernels, mary2003_reproducingKernels},
\begin{align}\label{eq:cmap-krein}
\mathfrak K(\mathscr M) &=\mathfrak H_{+}(\mathscr M)\ominus\mathfrak H_{-}(\mathscr M), \qquad [f,g]=\langle f,\mathcal J g\rangle, \qquad\mathcal J=\operatorname{sign}(\widetilde{\mathcal P}).
\end{align}
Nontrivial phase frustration is the mechanism by which the kernel may leave the positive cone. By contrast, in the \emph{exact/pure-gauge} case $A=d\phi$, the phase is removed by a diagonal unitary conjugation, so $\widetilde{\mathcal P}$ is unitarily similar to the positive DMAP kernel and remains PSD. Flatness $dA=0$ alone is not sufficient globally: a flat but non-exact connection may retain nontrivial Aharonov--Bohm holonomy. In the same limit as \S\ref{sec:dmap}, and with the connection scaling $c\,\mathcal A(\mathcal W)\to A$ toward a smooth real $1$-form $A$ on the manifold, the normalized magnetic Laplacian converges to the magnetic (connection) Laplacian \cite{Fanuel2017_MagneticEigenmaps, kachmar2024_magneticLaplacianDisc}:
\begin{align}\label{eq:cmap-limit}
 -\beta\,\mathcal L  \xrightarrow[N,\beta\to\infty]{} (\nabla-iA)^{2} = \Delta - 2i\,A\cdot\nabla - i(\nabla\!\cdot\!A) - |A|^{2},
\end{align}
the Laplace--Beltrami operator minimally coupled to the real $U(1)$ connection $A$. Its curvature is $F=dA$, while closed non-exact $A$ may still carry global holonomy. This is the continuum magnetic/connection Laplacian associated with the learned edge phases. %(Theorem~\ref{thm:cmap-continuum}). 
At $A=0$ it recovers the DMAP limit $\Delta$ of \S\ref{sec:dmap}. Unlike magnetic-Laplacian constructions in which the phase field is supplied as fixed input data \cite{Fanuel2017_MagneticEigenmaps, zhang2021_magNet}, here $A$ is \emph{derived} from the antisymmetric part of the learned score $\mathcal W$: the connection is learned jointly with the directed kernel rather than appended as an auxiliary field.

\subsection{AMAP: the PSD restriction}\label{sec:amap}

The symmetric core of a general query--key score need not remain positive when $W_K\neq W_Q$. AMAP isolates the directed deformation while restoring the symmetric sector to the positive diffusion regime of
\S\ref{sec:dmap}. We replace the indefinite symmetric part of the score by the Gram form generated by $W_M=\frac{1}{\sqrt2}(W_Q+W_K)$, while retaining its antisymmetric part unchanged:
\begin{align}\label{eq:amap-score}
\mathcal W^{\mathrm{AMAP}}=\frac12\,\mathcal R W_MW_M^\top\mathcal R^\top + \frac12 \left(\mathcal W-\mathcal W^\top \right).
\end{align}
AMAP therefore changes the metric sector while preserving the skew score exactly. The same construction can be written directly as a correction to ordinary query--key attention:
\begin{align}\label{eq:amap-vs-attn}
\mathcal W^{\mathrm{AMAP}} = \underbrace{ \mathcal R W_QW_K^\top\mathcal R^\top }_{\text{attention}} +\frac14\,\mathcal R (W_Q-W_K)(W_Q-W_K)^\top \mathcal R^\top.
\end{align}
The added term is PSD and depends only on the discrepancy between the query and key maps. It vanishes when $W_Q=W_K$, in which case the AMAP score reduces to the symmetric DMAP score of \S\ref{sec:dmap}.%(Proposition~\ref{prop:amap-psd}). 
AMAP is thus a controlled restriction of attention in which independent query and key maps are permitted to retain their directional skew component, while their symmetric contribution is forced back onto the PSD cone. This restriction has a simple continuum consequence. After exponentiation and normalization, the symmetric AMAP sector produces the same second-order diffusion geometry as DMAP, while the surviving directed component enters at first order. In the large-data, localized-kernel limit of
\S\ref{sec:dmap}, and after the same density correction used there:
\begin{align}\label{eq:amap-continuum}
-\beta\,\mathcal L^{\mathrm{AMAP}}\xrightarrow[N,\beta\to\infty]{}\mathscr D^{+} = \Delta-\mathcal{A}\cdot\nabla.
\end{align}
Here $\mathcal{A}$ is the continuum drift induced by the directed partition. %(Theorem~\ref{thm:amap-continuum}). 
The principal second-order operator is therefore unchanged: AMAP retains the Riemannian diffusion geometry while adding generally nonreversible first-order transport. Unless $\mathcal{A}$ satisfies an additional reversibility condition, $\mathscr D^{+}$ is non-self-adjoint. With respect to the unweighted Riemannian volume measure, its formal adjoint is:
\begin{align}\label{eq:amap-adjoint}
\mathscr D^{-}
= (\mathscr D^{+})^\dagger
= \Delta + \mathcal{A}\cdot\nabla+ (\nabla\!\cdot\mathcal{A})
= \Delta+\nabla\!\cdot(\mathcal{A}\,\cdot).
\end{align}
Thus the backward operator acting on observables and the forward
Fokker--Planck operator acting on densities form an adjoint pair rather than a single self-adjoint diffusion operator. Accordingly, the spectrum need not remain real. When $\mathscr D^{+}$ has discrete spectrum and admits a complete diagonalizable spectral representation, its right and left eigenfunctions may be chosen biorthogonally:
\begin{align}\label{eq:amap-dualpair}
\mathfrak D^{+}(\mathscr M)
&=\overline{\operatorname{span}}\{\psi_k^{+}\},\qquad\mathfrak D^{-}(\mathscr M)
=\overline{\operatorname{span}}\{\psi_k^{-}\}, \\
\left\langle
\psi_j^{-},\psi_k^{+} \right\rangle_{L^2(d\mathrm{vol})}
&=\delta_{jk}.
\end{align}
We therefore retain the operator--adjoint pairing as the intrinsic structure and do not assume an orthonormal eigenbasis or an associated reproducing-kernel Hilbert space.

\subsubsection*{The discrete directed field}\label{sec:what-A-means}

It remains useful to record how the directed field entering the bidivergence is assembled before analyzing its geometric sectors. Substituting \eqref{eq:hams} into \eqref{eq:gauge-split} gives:
\begin{align}\label{eq:A-algebraic-split}
\mathcal A_{\mathrm{total}}=
\frac12 \left(w\mathbf1^\top-\mathbf1w^\top\right) -
\frac12 \left(\mathcal W-\mathcal W^\top\right).
\end{align}
Equivalently, defining:
\begin{align}
\mathcal E(w)
&=\frac12 \left(w\mathbf1^\top-\mathbf1w^\top\right),
& \mathcal A(\mathcal W)
&=-\frac12 \left(\mathcal W-\mathcal W^\top\right),
\end{align}
one has:
\begin{align}
\mathcal A_{\mathrm{total}}=\mathcal E(w)+\mathcal A(\mathcal W).
\end{align}
This is an algebraic decomposition of the query--key bidivergence, not yet its Hodge--Helmholtz decomposition. The first term is determined entirely by the diagonal self-affinities, while the second is determined by the off-diagonal skew score. In particular, the label $\mathcal A(\mathcal W)$ may itself contain components removable by node potentials. AMAP and CMAP use this directed information differently. AMAP retains the skew score in a real positive transport kernel and produces the continuum drift $\mathcal{A}$, whereas CMAP represents the skew score by a unitary edge phase and produces a continuum $U(1)$ connection $\mathcal{A}$. These are two representations of the learned directional structure. The exact, coexact, and possible harmonic content of the underlying edge field is the subject of the following sections.

\section{Witten Deformation}\label{sec:exact}

% ---------------------------------------------------------------
% Structure: (1) continuum Witten motivation, (2) finite-N discrete
% construction, (3) Markov-Witten vs Doob, (4) examples,
% (5) transition to nonexact transport
% ---------------------------------------------------------------

Recall that the discrete empirical theory yields, in the continuum limit, an elliptic operator $\mathscr{D}$ on a Riemannian manifold $\mathscr{M}$. The biased diffusion-map limit of eq.~\ref{eq:witten-laplacian-limit} is suggestive of a Fokker--Planck or backward-Kolmogorov operator \citep{nadler2005_diffusionMapsFokkerPlanck}, with a gradient drift determined by the sampling density. This drift structure is exactly what conjugation by a ground state produces, and the natural framework is the deformation introduced by Witten in the context of Morse theory \citep{witten1982_supersymmetryMorse, hori2003_mirrorSymmetry}. The Witten deformation acts first on the exterior derivative; for a scalar potential $\phi:\mathscr{M}\to\mathbb{R}$ and scale factor $\hbar>0$, define:
\begin{align}\label{eq:witten-d}
  d_{\phi,\hbar} = \hbar\, e^{-\phi/\hbar}\, d\, e^{+\phi/\hbar} = \hbar\, d + d\phi\wedge .
\end{align}
Together with its metric adjoint $d^{\dagger}_{\phi,\hbar}$, this defines the Witten Laplacian on $0$-forms:
\begin{align}\label{eq:witten-laplacian}
  \mathscr{D}^{\mathrm{Witten},(0)}_{\phi,\hbar}
  = d^{\dagger}_{\phi,\hbar} d_{\phi,\hbar}
  = -\hbar^2\Delta + |\nabla\phi|^2 - \hbar\,\Delta\phi ,
\end{align}
whose positive ground state is $\psi_\phi = e^{-\phi/\hbar}$ with eigenvalue zero. Conjugation by the ground state gives the associated Doob generator:
\begin{align}\label{eq:witten-doob}
  \mathscr{D}^{\mathrm{Doob},(0)}_{\phi,\hbar}
  = e^{+\phi/\hbar}\,\mathscr{D}^{\mathrm{Witten},(0)}_{\phi,\hbar}\, e^{-\phi/\hbar}
  = -\hbar^2\Delta + 2\hbar\,\nabla\phi\cdot\nabla ,
\end{align}
and hence the Markov generator:
\begin{align}\label{eq:witten-markov}
  \mathscr{D}^{\mathrm{Markov}}_{\phi,\hbar}
  = -\hbar^{-1}\,\mathscr{D}^{\mathrm{Doob},(0)}_{\phi,\hbar}
  = \hbar\Delta - 2\,\nabla\phi\cdot\nabla .
\end{align}
The choice $\phi = -\hbar\log q$ reproduces the density drift of the biased diffusion-map limit,
\begin{align}\label{eq:witten-density}
  \mathscr{D}^{\mathrm{Markov}}_{\phi,\hbar} = \hbar\Delta + 2\hbar\,\nabla\log q\cdot\nabla .
\end{align}
The continuum density correction is therefore an exact Witten--Doob deformation: the drift is the shadow of a scalar potential. The question this raises is whether the deformation already exists at finite $N$, before any continuum limit is taken --- that is, whether the discrete diffusion operator possesses an exterior derivative that can be Witten-deformed, with row normalization playing the role of ground-state conjugation. We now show that it does.

\subsection{The finite Witten deformation}\label{sec:finite-witten}

The continuum construction began from the exterior derivative, so the finite construction must too. Let $B:C^0_N\to C^1_N$ denote an oriented node--edge incidence operator \citep{bianconi2021_higherOrderNetworks}, acting on a node function $f$ by $(B f)_{(i\to j)} = f_j - f_i$, with entries in $\{-1,0,+1\}$; for the symmetric sector it suffices to fix one arbitrary orientation per unoriented edge. The incidence operator is purely combinatorial --- it knows the graph but not the geometry. The geometry enters through Hodge weights induced by the kernel: on edges, $C_1 = \operatorname{diag}(\operatorname{vec}\mathsf{K}^+)$ carries the symmetric kernel weight of each edge. With these weights the familiar weighted graph Laplacian factorizes exactly through the incidence operator:
\begin{align}\label{eq:incidence-factorization}
  B^\top C_1 B = \operatorname{diag}(\mathsf{K}^+\mathbf{1}) - \mathsf{K}^+ ,
\end{align}
as one verifies by expanding the quadratic form, $f^\top B^\top C_1 B\, f = \sum_{(i\to j)}\mathsf{K}^+_{ij}(f_j - f_i)^2$, the Dirichlet energy of the kernel graph. Dividing by the degrees, the row-normalized diffusion operator $\mathsf{P}^+ = \operatorname{diag}(\mathsf{K}^+\mathbf{1})^{-1}\mathsf{K}^+$ inherits the following factorization:
\begin{align}\label{eq:cochain-root}
  I - \mathsf{P}^+ = \operatorname{diag}(\mathsf{K}^+\mathbf{1})^{-1} B^\top C_1 B = \delta_N\, d_N ,
\end{align}
with differential $d_N = B$ and codifferential $\delta_N = \operatorname{diag}(\mathsf{K}^+\mathbf{1})^{-1} B^\top C_1$. The finite diffusion operator thus possesses a first-order exterior-calculus factorization, and the division of labor is the same as in the continuum: the differential $d_N$ is topological, while the kernel determines the Hodge structure with respect to which its codifferential is formed.

To deform $d_N$ the way Witten deforms $d$, we first resolve the incidence operator into its target and source parts. Since $B$ is valued in $\{-1,0,+1\}$, the combinations $B_\pm = \tfrac{1}{2}(|B| \pm B)$ split it as $B = B_+ - B_-$, where on an oriented edge $(i\to j)$ the target part reads off the destination value, $(B_+ f)_{i\to j} = f_j$, and the source part the origin value, $(B_- f)_{i\to j} = f_i$. The undeformed differential is their difference; a deformation will weight them asymmetrically.

Now fix a node potential $\phi\in C^0_N$ and set $h = e^{-\phi/\hbar} > 0$, the finite ground state. The potential induces a multiplicative connection on edges:
\begin{align}\label{eq:exact-connection}
  U^{(\phi)}
  = \exp\left(-\frac{d_N\phi}{\hbar}\right) ,
\end{align}
acting entrywise, so that on the edge $(i\to j)$ it carries $h_j/h_i$. The connection is \emph{exact}: it factors through node values of $h$, so its product around any closed cycle telescopes to one. Writing $\mathsf{U}_\phi = \operatorname{diag}(\operatorname{vec} U^{(\phi)})$, the finite Witten differential weights target and source by opposite half-powers of the connection:
\begin{align}\label{eq:finite-witten-d}
d^{(N)}_{\phi,\hbar} = \hbar\left(\mathsf{U}_\phi^{-1/2} B_+ - \mathsf{U}_\phi^{1/2} B_-\right) ,
\end{align}
so that on an edge it acts as $(d^{(N)}_{\phi,\hbar} f)_{ij} = \hbar\,(e^{(d_N\phi)_{ij}/2\hbar} f_j - e^{-(d_N\phi)_{ij}/2\hbar} f_i)$. That this is the right deformation is confirmed by the local expansion, which for a local edge gives:
\begin{align}\label{eq:finite-witten-limit}
\left(d^{(N)}_{\phi,\hbar} f\right)_{ij}
  = \hbar\,(f_j - f_i) + \frac{(d_N\phi)_{ij}}{2}\,(f_j + f_i)
    + O\left((d_N\phi)_{ij}^2\right)
   \longrightarrow  \hbar\, df + f\, d\phi\wedge ,
\end{align}
which is precisely the continuum Witten differential \eqref{eq:witten-d} --- the antisymmetric part reproduces $\hbar\,df$ and the symmetric part reproduces the wedge term. The finite construction is not an analogy: it converges to Witten's deformation edge by edge.

It remains to compose the deformed differential with a codifferential and recover a transport operator, mirroring the continuum passage from \eqref{eq:witten-laplacian} to \eqref{eq:witten-markov}. Let $H_1 = \operatorname{diag}(\operatorname{vec}\sqrt{h\, h^\top})$ denote the diagonal edge operator carrying the geometric mean of $h$ across each edge, and define the conjugate codifferential $\widetilde{\delta}^{(N)}_{\phi,\hbar} = \hbar^{-1}\operatorname{diag}(h)^{-1}\,\delta_N\, H_1$. The half-powers of the connection in \eqref{eq:finite-witten-d} and the geometric means in $H_1$ are matched so that the composition collapses to a similarity of the \emph{original} operator:
\begin{align}\label{eq:finite-witten-doob}
  \widetilde{\delta}^{(N)}_{\phi,\hbar}\, d^{(N)}_{\phi,\hbar}
  = \operatorname{diag}(h)^{-1}\left(I - \mathsf{P}^+\right)\operatorname{diag}(h) .
\end{align}
The right-hand side is isospectral to $I - \mathsf{P}^+$ but not stochastic --- it is the transfer-operator image of the Witten conjugation, the discrete analogue of \eqref{eq:witten-doob} before any normalization. Restoring stochasticity by row normalization yields:
\begin{align}\label{eq:finite-doob}
  \mathsf{P}^+_h = \operatorname{diag}(\mathsf{P}^+ h)^{-1}\,\mathsf{P}^+\operatorname{diag}(h) ,
\end{align}
which we call the \emph{Markov--Witten deformation} of $\mathsf{P}^+$. The distinction from the classical Doob transform is worth drawing precisely. When $h$ happens to be an eigenvector, $\mathsf{P}^+ h = \lambda h$, the prefactor is scalar and \eqref{eq:finite-doob} reduces to the similarity $\lambda^{-1}\operatorname{diag}(h)^{-1}\mathsf{P}^+\operatorname{diag}(h)$ --- the textbook Doob $h$-transform, isospectral up to scale. For generic $h$ the prefactor is a genuine diagonal and the spectrum moves; it is this generic case that arises in practice, and it is exactly the freedom that lets the deformation alter the continuum generator. For reversible $\mathsf{P}^+$ with stationary law $\pi$, the deformed chain remains reversible with equilibrium $\pi^{\mathrm{MW}}_h \propto \pi\odot h\odot(\mathsf{P}^+ h)$, which collapses to the familiar $\pi^{\mathrm{Doob}}_h \propto \pi\odot h^2$ in the eigenvector case. Under the Markov--Witten deformation the continuum generator transforms as:
\begin{align}\label{eq:mw-continuum}
\mathscr{D}  \mapsto  \mathscr{D}_\phi = \mathscr{D} - 2\,\nabla\phi\cdot\nabla ,
\end{align}
consistent with \eqref{eq:witten-markov}. The standard density correction of diffusion maps belongs to precisely this class. With $q = P\mathbf{1}$ and $\mathsf{P}^+ = \operatorname{diag}(q)^{-1} P$, the Coifman--Lafon construction applies the two-sided degree scaling $P_\alpha = \operatorname{diag}(q^\alpha)^{-1}\, P\,\operatorname{diag}(q^\alpha)^{-1}$ before re-normalizing. Row normalization absorbs the left diagonal factor, and what remains is exactly the Markov--Witten deformation \eqref{eq:finite-doob} with $h = q^{-\alpha}$ (Corollary~\ref{cor:coifman-lafon}): the parameter $\alpha$ selects an exact node-potential deformation. It is not generically a Doob transform, because $q^{-\alpha}$ need not be an eigenvector of $\mathsf{P}^+$ --- and this is precisely why the normalization can alter the continuum generator and remove the sampling-density bias, rather than merely conjugating it. Combining with the finite construction above: \emph{the Coifman--Lafon correction is, at finite $N$, the row-normalized second-order image of an exact Witten deformation of the discrete exterior derivative.} This gives a sharp characterization of DMAP itself: it is the restriction of directed transport to the exact sector --- the case in which the antisymmetric logit component is a coboundary, $\mathcal{A} = d_N\phi$ for some node potential. DMAP is not merely a kernel with exponential support; it is a reversible transport operator whose directed geometry is entirely determined by a scalar potential.

\subsection{Examples}\label{sec:exact-examples}

\begin{example}[Learned one-body potentials]\label{ex:learned}
Nothing in the construction requires the potential to be fixed by the degree; it may be learned. Given a row representation $\mathcal{R}\in\mathbb{R}^{N\times D}$, let:
\begin{align}\label{eq:learned-potential}
  \phi = \operatorname{diag}\left(\mathcal{R}\, W_D\, \mathcal{R}^\top\right) .
\end{align}
Its coboundary $d_N\phi$ is automatically exact, so its action on the transport operator is again the Markov--Witten deformation \eqref{eq:finite-doob} with $h = e^{-\phi}$ (Lemma~\ref{lem:free-exact}). Only the symmetric part of $W_D$ reaches the diagonal quadratic form, so $W_D$ may be taken symmetric without loss of generality; it need not be PSD, since the one-body potential is independent of the requirement that the symmetric diffusion metric remain positive. The kinetic metric is recovered as one admissible choice: $W_D = \tfrac{1}{2} W_M W_M^\top$ produces the quadratic potential already implicit in the symmetric distance kernel. General $W_D$ extends the same exact mechanism to an independently learned scalar field.
\end{example}

\begin{example}[The unitary exact sector and RoPE]\label{ex:rope}
The real deformation has a direct unitary counterpart. Let $h = e^{i\theta}\in U(1)$ elementwise, and define the exact phase connection:
\begin{align}\label{eq:exact-u1}
  U = \exp\left(i\, d_N\theta\right) ,
\end{align}
acting entrywise, carrying $h_j/h_i$ on the edge $(i\to j)$. With $G = \operatorname{diag}(h)$, the phase-deformed kernel satisfies $P\odot U = G^\dagger P\, G$. Unlike the positive real case, no additional Markov normalization is generated: the magnitudes, and hence the real degrees, are unchanged. Exact $U(1)$ fields are pure unitary gauge --- automatically isospectral, PSD-preserving --- and their holonomy vanishes telescopically around every cycle, $\prod_{(ij)\in\gamma} U_{ij} = 1$.

Rotary position embeddings are exactly this structure (Proposition~\ref{prop:rope-exact}). For a single rotary block, RoPE assigns $\theta_i = \omega i$, so $h_i = e^{i\omega i}$ and the relative rotation is:
\begin{align}
  U^{\mathrm{RoPE}}_{ij} = e^{i\omega(j-i)} = h_i^{-1} h_j .
\end{align}
Standard RoPE is therefore an exact flat connection. In the full multi-frequency representation the group is the commuting torus $U(1)^m$, one factor per rotary plane. The essential structural feature is not commutativity but factorization through node frames: any transport of the form $U = G^{-1}(\cdot)G$ factoring through node frames is pure gauge, even in a non-Abelian group. Curvature appears only when the edge transports cease to admit such a factorization --- which is the subject of the next section.
\end{example}

The construction of \S\ref{sec:finite-witten} extends verbatim to a general antisymmetric edge field $\mathcal{A}$ --- one simply replaces the exact connection $U^{(\phi)}$ by $e^{-\mathcal{A}/\hbar}$ in \eqref{eq:finite-witten-d} --- but for general $\mathcal{A}$ no scalar field $h$ removes the connection globally. Everything a scalar potential can produce has now been characterized; what remains: irreducible circulation, curvature, non-equilibrium steady states, is the subject of the next section.

\section{Girsanov Deformation}\label{sec:coexact}

% ---------------------------------------------------------------
% Structure mirrors \S\ref{sec:exact}:
% (1) continuum Girsanov motivation, (2) finite Markov--Girsanov
% construction, (3) examples (anti-attention, Dirac factorization),
% (4) obstructions, (5) transition to metric deformations
% ---------------------------------------------------------------

Section~\ref{sec:exact} characterized everything a scalar potential can
produce: a drift that is the gradient of a node potential, absorbed by
Witten conjugation into reversible diffusion. A general directed field
is not of this form. On $(\mathscr M,g)$ the Hodge decomposition splits
a one-form into three orthogonal sectors,
\begin{align}\label{eq:full-continuum-hodge}
  A = d\phi + \delta\omega + \gamma , \qquad\gamma\in\mathcal H^1(\mathscr M) ,
\end{align}
and scalar conjugation reaches exactly one of them. For the deformed differential $d_{A,\hbar} = \hbar\,d + A\wedge$,
\begin{align}\label{eq:witten-failure-coexact}
e^{\phi/\hbar}\, d_{A,\hbar}\, e^{-\phi/\hbar} = \hbar\,d + (A - d\phi)\wedge ,
\end{align}
so conjugation eliminates the connection if and only if $A$ is exact; the coexact and harmonic remainders survive every choice of scalar. What survives is nevertheless constrained. The connection term is zeroth order, so by Lemma~\ref{lem:principal-symbol} the principal symbol of the associated second-order operator is untouched,
\begin{align}
  \sigma_2(\mathscr D_{A,\hbar})(x,p) = \hbar^{2}\,g_x^{-1}(p,p)\,\mathrm{Id} ,
\end{align}
and only the first-order transport and the potential move. In the stochastic slice this is the fixed-volatility content of Girsanov's theorem: for the Itō diffusion $dX_t = b\,dt + \Sigma\,dW_t$ with $\Sigma\Sigma^{\top}\sim g^{-1}$, an admissible change of path measure alters the drift $b$ but not $\Sigma$, hence not the quadratic variation and not the diffusion geometry,
\begin{align}\label{eq:girsanov-principle}
  \mathscr G: (g,\,A) \longmapsto (g,\,A+\delta A) .
\end{align}
For the normalization of \S\ref{sec:exact} the generator of the $A$-deformed transport is
\begin{align}\label{eq:generic-girsanov-generator}
  \mathscr L_{A,\hbar} = \hbar\,\Delta - 2\,A^{\sharp}\cdot\nabla ,
\end{align}
which collapses to the Markov--Witten generator
$\hbar\,\Delta - 2\,\nabla\phi\cdot\nabla$ of
eq.~\eqref{eq:witten-markov} precisely when $A = d\phi$. The exact sector is therefore the scalar-factorizable slice of a larger principle:
drift may be deformed freely at fixed metric, and the potential accounts
for only its gradient part. The question this raises is the same one
\S\ref{sec:exact} answered for potentials: whether the general
deformation already exists at finite $N$, as an operation on the
discrete transport operator, with the Markov--Witten transform as its
exact sub-sector. We now show that it does.

\subsection{The finite Markov--Girsanov transform}\label{sec:finite-girsanov}

On the complete graph with its filled clique complex, the orthogonal projection of $\mathcal A$ onto exact fields (in the counting inner product) is determined by the node potential $\psi$ solving the graph Poisson equation
\begin{align}\label{eq:maximal-exact-projection}
  \Delta_K\,\psi = B^{\top}\mathcal A , \qquad
  \eta = \mathcal A - d_N\psi , \qquad B^{\top}\eta = 0 ,
\end{align}
where $\Delta_K = B^{\top}B = N I - \mathbf 1\mathbf 1^{\top}$ is the
complete-graph Laplacian and $\Delta_K^{-1}$ denotes its inverse on the
mean-zero subspace $\mathbf 1^{\perp}$, on which $\Delta_K$ acts as
multiplication by $N$. Since $B^{\top}\mathcal A$ is mean-zero, the
solution is, up to an additive constant that does not affect $d_N\psi$,
\begin{align}\label{eq:complete-graph-potential}
  \psi = \Delta_K^{-1} B^{\top}\mathcal A
       = \frac{1}{N}\,B^{\top}\mathcal A
       = -\frac{1}{2N}\bigl(\mathcal W\mathbf 1
         - \mathcal W^{\top}\mathbf 1\bigr) .
\end{align}
up to the orientation convention for $d_N$. In \S\ref{sec:exact}
exactness was a hypothesis; here it is constructed: $d_N\psi$ is the
maximal exact component of the learned directed field, and the residual
$\eta$ is what no node potential can produce. Every exact field has
vanishing circulation, so the subtraction leaves every cycle integral of
$\mathcal A$ invariant; since the filled clique complex is contractible,
$H^1_N = 0$ and $\Omega^1_N = \operatorname{im} d_N \oplus
\operatorname{im}\delta_N$, the finite residual is coexact.

Removing $\eta$ therefore requires weights that live on directed edges
rather than nodes. Let $\mathsf P$ be a row-stochastic reference kernel
and $\Gamma\in\mathbb R^{N\times N}$ an edge action on its support, and
define the \emph{Markov--Girsanov transform}
\begin{align}\label{eq:finite-girsanov-transform}
  \mathscr G_{\Gamma}[\mathsf P]
  = \operatorname{diag}\bigl((\mathsf P\odot e^{\odot\Gamma})
  \mathbf 1\bigr)^{-1}\,\bigl(\mathsf P\odot e^{\odot\Gamma}\bigr) .
\end{align}
The edge likelihood $e^{\Gamma_{ij}}$ tilts each directed transition and
the diagonal factor restores stochasticity: this is the finite analogue
of changing path measure by an edgewise likelihood ratio, as
\eqref{eq:path-girsanov} below makes exact. The transform acts
transitively on stochastic kernels of common support: for any target $\mathsf P'$, the edge action $\Gamma_{ij} =
\log(\mathsf P'_{ij}/\mathsf P_{ij})$ satisfies
$\mathscr G_{\Gamma}[\mathsf P] = \mathsf P'$, and the stabilizer is the row-potential gauge, since any source-dependent shift $\Gamma_{ij}\mapsto\Gamma_{ij} + c_i$ is removed by the normalization.

The Markov--Witten transform is the exact sub-sector. Take
$\Gamma = -\,d_N\phi/\hbar$, so that $e^{\Gamma_{ij}} = h_j/h_i$ with
$h = e^{-\phi/\hbar}$ as in \S\ref{sec:exact}; the source factor
$h_i^{-1}$ is a row potential and cancels under normalization, leaving:
\begin{align}\label{eq:mw-from-girsanov}
\mathscr G_{-d_N\phi/\hbar}[\mathsf P] = \operatorname{diag}(\mathsf P h)^{-1}\, \mathsf P\,\operatorname{diag}(h) = \mathsf P_h ,
\end{align}
the Markov--Witten deformation \eqref{eq:finite-doob}, reducing further to the Doob transform when $h$ is an eigenvector. Markov--Witten is therefore the factorized $O(N)$ node-weight sector of the generic $O(N^{2})$ edgewise family; the coexact residual $\eta$ of \eqref{eq:maximal-exact-projection} is precisely the part of an edge action that admits no such factorization.

The path-space content distinguishes the two sectors sharply. For a path $\omega = (i_0,\ldots,i_T)$,
\begin{align}\label{eq:path-girsanov}
\log\frac{\mathbb P_{\Gamma}[\omega]}{\mathbb P[\omega]} = \sum_{t=0}^{T-1}\left(\Gamma_{i_t i_{t+1}} - \log Z_{\Gamma}(i_t)\right) , \qquad
  Z_{\Gamma}(i) = \sum_j \mathsf P_{ij}\, e^{\Gamma_{ij}} ,
\end{align}
up to the initial-distribution ratio. For an exact action
$\Gamma = -\,d_N\phi/\hbar$ the raw edge sum telescopes to the endpoint
difference $(\phi_{i_0} - \phi_{i_T})/\hbar$: the reweighting remembers
where a path starts and ends, not how it travels. A coexact action
retains irreducible route dependence through its cycle integrals. The
exact sector is the memoryless boundary of the family; circulation is
what path history looks like at the level of edge weights.

\subsection{Examples}\label{sec:coexact-examples}

\begin{example}[Anti-attention as canonical edgewise
cancellation]\label{ex:antiattention}
Self-attention supplies its own reverse edge factor. The gauge partition
\eqref{eq:gauge-split} gives
$\mathcal H^{\to} = \tfrac12\mathcal H + \mathcal A$ and
$\mathcal H^{\from} = \tfrac12\mathcal H - \mathcal A$, with associated
unnormalized kernels
$\mathsf K^{\to}_{\beta} = e^{-\beta\mathcal H^{\to}}$ and
$\mathsf K^{\from}_{\beta} = e^{-\beta\mathcal H^{\from}}$. Their
entrywise product cancels the directed factors $e^{\mp\beta\mathcal A}$
identically,
\begin{align}\label{eq:antiattention-cancellation}
  \mathsf K^{\to}_{\beta}\odot\mathsf K^{\from}_{\beta}
  = e^{-\beta\mathcal H} = \mathsf K^{+} ,
\end{align}
and hence
\begin{align}\label{eq:dmap-recovery}
  \operatorname{rownorm}\bigl(\mathsf K^{\to}_{\beta}\odot
  \mathsf K^{\from}_{\beta}\bigr) = \mathsf P^{+} :
\end{align}
forward--reverse recombination recovers the symmetric transport of
\S\ref{sec:dmap}. The cancellation is complete, exact and coexact
components alike, and requires no projection, no potential, and no solve: the reverse kernel is the canonical edgewise inverse that a node potential can only approximate. The comparison with Example~\ref{ex:rope} is instructive: RoPE is directed structure that is \emph{flat by construction}, an exact connection whose holonomy vanishes identically, while anti-attention is directed structure that is \emph{cancelled by construction}, a reverse factor built from the same
learned field it removes. In the Girsanov frame,
\eqref{eq:dmap-recovery} is the statement that the edge action
$\Gamma = -2\beta\,\mathcal A$ transforms the forward transport into the
reverse transport's row-normalized conjugate, with the symmetric sector
as the fixed point of the involution $\mathcal A\mapsto-\mathcal A$.
\end{example}

\begin{example}[Dirac factorization of directed transport]\label{ex:dirac}
Section~\ref{sec:exact} closed by observing that the deformed
differential \eqref{eq:finite-witten-d} extends verbatim to a
non-exact field, replacing the exact connection $U^{(\phi)}$ by
$e^{-\mathcal A/\hbar}$. The directed kernels realize this extension.
Since $\mathsf K^{\to} = \mathsf K^{+}\odot e^{-\beta\mathcal A}$ and
$\mathsf K^{\from} = \mathsf K^{+}\odot e^{+\beta\mathcal A}$, the two
directed edge weights are the Hodge weight of \S\ref{sec:finite-witten}
dressed by opposite powers of the connection: geometric mean the
symmetric kernel, ratio the directed field. Reshaping their values onto
oriented edges as diagonal operators, define the paired first-order
factors
\begin{align}\label{eq:dirac-supercharges}
  \mathcal Q_{\to} = \operatorname{diag}(K^{\to}_I)\, B , \qquad
  \mathcal Q_{\from} = B^{\top} \operatorname{diag}(K^{\from}_I) ,
\end{align}
with $B = d_N$ the incidence operator of
\eqref{eq:incidence-factorization}. Because the diagonal weights
commute and multiply to the symmetric kernel,
$K^{\to}_I K^{\from}_I = K^{+}_I$, the node-space composition collapses,
\begin{align}\label{eq:dirac-square}
  \mathcal Q_{\from}\,\mathcal Q_{\to}
  = B^{\top}\operatorname{diag}(K^{+}_I)\, B ,
\end{align}
the weighted symmetric graph Laplacian of
\eqref{eq:incidence-factorization}: the directed connection survives in
each first-order factor but cancels from their second-order product,
the first-order shadow of \eqref{eq:antiattention-cancellation}.
Assembling the factors into the discrete Dirac operator
\begin{align}\label{eq:dirac}
  \mathcal D = \begin{pmatrix} 0 & \mathcal Q_{\from}\\
  \mathcal Q_{\to} & 0\end{pmatrix} , \qquad
  \mathcal D^{2} = \begin{pmatrix}
  \mathcal Q_{\from}\mathcal Q_{\to} & 0\\
  0 & \mathcal Q_{\to}\mathcal Q_{\from}\end{pmatrix} ,
\end{align}
the node block of $\mathcal D^{2}$ recovers the symmetric diffusion
geometry while the edge block retains oriented structure. In general
$\mathcal Q_{\from}\neq\mathcal Q_{\to}^{\dagger}$ under the counting
inner product: where \S\ref{sec:finite-witten} formed a codifferential
canonically from the Hodge weights, the directed theory supplies two
independent edge weights, and an adjoint pair is recovered only after
the choice of discrete mass matrices. The construction is structurally
related to topological Dirac operators on networks
\citep{bianconi2021_higherOrderNetworks}; the distinction here is that
both edge weights are supplied by the learned forward and reverse
kernels rather than prescribed independently. Higher form degrees
extend the complex $C^0_N\xrightarrow{B}C^1_N\xrightarrow{B_1}C^2_N
\to\cdots$, so that the full Hodge--Dirac square contains both down-
and up-Laplacian channels.
\end{example}

\subsection{Curvature and holonomy}\label{sec:coexact-topology}

The obstructions to scalar reduction are now precisely locatable. In the continuum decomposition \eqref{eq:full-continuum-hodge}, only $d\phi$ is removable by conjugation. The local obstruction is the connection curvature,
\begin{align}\label{eq:curvature-obstruction}
  F_A = dA = d\,\delta\omega , \qquad d_{A,\hbar}^{2} = \hbar\, F_A\wedge ,
\end{align}
so the coexact sector carries local circulation and destroys the nilpotency of the Morse--Witten complex whenever $F_A\neq 0$. Curvature is not the only obstruction: the harmonic field satisfies $d\gamma = 0$
and $\delta\gamma = 0$, hence is locally flat, yet around a
non-contractible cycle $\mathcal C$ it may hold $\oint_{\mathcal C}
\gamma \neq 0$, producing holonomy with no local signature. Scalar
reduction therefore fails in two in-equivalent ways,
\begin{align}
  F_A\neq 0 \quad\text{(local curvature)} , \qquad
  F_A = 0, [A]\neq 0 \quad\text{(global holonomy)} .
\end{align}
At finite $N$ the filled clique complex is contractible, $H^1_N = 0$, and the non-exact residual of \eqref{eq:maximal-exact-projection} is purely coexact; a distinct harmonic sector emerges only when the localized manifold limit recovers nontrivial topology.

The boundary of this section parallels that of \S\ref{sec:exact}. The Markov--Girsanov family exhausts what can be done to transport at fixed metric: it deforms the first-order connection while the principal symbol, the quadratic variation, and the diffusion geometry stand still, $(g,\mathcal A)\mapsto(g,\mathcal A+\delta\mathcal A)$. What it cannot reach is the metric itself. A deformation of the symmetric bilinear form, $M\mapsto\widetilde M$ and hence $g = J^{\top}MJ\mapsto\widetilde g = J^{\top}\widetilde M J$, changes the principal symbol and with it the quadratic variation; this complementary deformation of the symmetric sector is the subject of the next section.

\section{Cauchy--Green Deformation}\label{sec:cauchy-green}

The preceding sections organized directed transport through the
decomposition of the attention-induced one-form: exact drift belongs to
the Witten sector, non-exact drift to the broader path-space family of
Girsanov. The symmetric sector admits a complementary deformation
theory, but its invariant is neither exactness nor cohomology; it is
the \emph{inertia} of the underlying symmetric bilinear form. Let
$M = M^{\top}$ define the pairwise quadratic geometry $\mathcal H$ of eq.~\eqref{eq:H-sum}. Under a linear feature deformation $x\mapsto Fx$, the induced bilinear form transforms by congruence,
\begin{align}\label{eq:cg-general}
  M \;\longmapsto\; M_F = F^{\top} M F\;.
\end{align}
For the Euclidean reference $M = I$ this is $M_F = F^{\top}F$, the Cauchy--Green deformation tensor of continuum mechanics; we use the term \emph{Cauchy--Green deformation} for the congruence action \eqref{eq:cg-general} on a general learned symmetric geometry. The
analogy with the Witten sector is structural: a Witten deformation
moves the directed field within its scalar-gauge class, while a
Cauchy--Green deformation moves lengths, angles, anisotropy, and
conditioning within a fixed \emph{inertia} class. By Sylvester's law
of inertia, for invertible $F$,
\begin{align}\label{eq:cg-inertia}
  \operatorname{Inertia}\bigl(F^{\top}MF\bigr)
  = \operatorname{Inertia}(M)\;,
\end{align}
and for singular $F$ the positive and negative counts can only
decrease, with directions collapsing into the kernel
(Lemma~\ref{lem:cg-inertia}). A feature deformation may therefore
reshape the geometry strongly, but it cannot move the operator between
signature classes. This separates the metric behavior of the
symmetric constructions from that of generic query--key
self-attention: when the symmetric form is built as
$M = W_M W_M^{\top}\succeq 0$, positivity survives every congruence,
whereas the symmetric sector of a self-attention score,
\begin{align}\label{eq:attention-symmetric-cg}
  W_S = \tfrac12\bigl(W_Q W_K^{\top} + W_K W_Q^{\top}\bigr)\;,
\end{align}
is generically indefinite, and no invertible reparameterization can
repair it (Corollary~\ref{cor:adaln-attn-app}). Definiteness is not a tuning question; it is a class invariant.

\subsection{Deforming the principal symbol}\label{sec:cg-principal}

The continuum content completes the trichotomy left open at the close of \S\ref{sec:coexact}. Under $M\mapsto F^{\top}MF$ the induced metric of eq.~\eqref{eq:cg-general} moves,
\begin{align}\label{eq:cg-metric-motion}
  g = J^{\top}MJ \;\longmapsto\; g_F = J^{\top}F^{\top}MFJ\;,
\end{align}
and with it the principal symbol of the generator,
$\sigma_2(x,p) = \hbar^{2}g_x^{-1}(p,p)\mapsto
\hbar^{2}(g_F)_x^{-1}(p,p)$; in the stochastic slice the volatility
and quadratic variation move,
$\Sigma\Sigma^{\top}\sim g^{-1}\mapsto g_F^{-1}$
(Proposition~\ref{prop:cg-principal-symbol}). Cauchy--Green
deformations therefore act on precisely the sector that
Markov--Girsanov transformations fix, and conversely: Girsanov moves
the first-order connection at fixed principal symbol, Cauchy--Green
moves the principal symbol at fixed connection class, and Witten is
the scalar-factorizable slice of the former. Potential, drift, and
volatility are the three independently deformable orders of the
transport operator, and the three sectors of the theory are their
finite images.

\subsection{Example: AdaLN as conditioned transport}\label{sec:adaln-cgw}

Diffusion and flow Transformers make these deformations explicit and
conditioning-dependent through adaptive layer normalization
\citep{perez2018_film, peebles2022dit}.

\begin{example}[AdaLN]\label{ex:adaln}
Adaptive layer normalization modulates the row representation by a
conditioned scale and shift,
\begin{align}\label{eq:adaln-map-cg}
  \mathcal R \;\longmapsto\; \mathcal R\, G + \mathbf 1\beta^{\top}\;,
  \qquad G = \operatorname{diag}(\gamma)\;,
\end{align}
with $\gamma = \gamma(\tau)$, $\beta = \beta(\tau)$ functions of the
diffusion- or flow-time conditioning $\tau$; here $\mathcal R$ denotes
the post-normalization representation, so the statements characterize
the modulation rather than the normalization map itself. The two
parameters act on distinct sectors
(Lemma~\ref{lem:adaln-app}). The scale acts on the feature bilinear
form by the congruence $W\mapsto GWG$, a conditioned Cauchy--Green
deformation; since congruence preserves the symmetric--antisymmetric
split (Lemma~\ref{lem:cg-sector-split}), the metric and directed
sectors transform separately, and for invertible $G$ the inertia of
the symmetric sector is constant in $\tau$. The shift survives row
normalization only through a destination potential: the modulated
score decomposes as the congruence term plus row potentials plus the
column term $\mathbf 1 b^{\top}$ with $\varphi = b = \mathcal R\, G\, W^{\top}\beta$, which, modulo the row-potential gauge of Proposition~\ref{prop:cg-principal-symbol}, is the exact edge action $d_N\varphi$ and hence induces the
Markov--Witten deformation \eqref{eq:finite-doob} of the transport
operator. The joint action is triangular,
\begin{align}\label{eq:adaln-triangular}
  (\gamma,\beta) \;\longmapsto\;
  \bigl(\,GWG\,,\;\ \varphi = \mathcal R\, G\, W^{\top}\beta\,\bigr)\;:
\end{align}
the metric sector depends on $\gamma$ alone, the exact potential on
$(\gamma,\beta)$ jointly, so the sectors never mix while the scale
re-aims the shift's potential. The potential is linear in the
representation and is thereby the rank-one member of the learned
one-body family of Example~\ref{ex:learned}: the quadratic learned
potential, the linear conditioned potential of AdaLN, and the
canonical degree potential $-\hbar\log q$ are three instances of the
same exact mechanism.
\end{example}

\begin{corollary}[Conditioning cannot create
circulation]\label{cor:no-conditioned-circulation}
No choice of $(\gamma,\beta)$ reaches the coexact sector: if
$W_A = 0$ the modulated antisymmetric sector vanishes for every
$\tau$, and if $W_A\neq 0$ its modulated image is the congruence
$GW_A G$ (Lemma~\ref{lem:adaln-app}\,(iii)). Any circulation observed
along a conditioned trajectory originates in the static weights, not
in the conditioning mechanism.
\end{corollary}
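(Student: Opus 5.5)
The plan is to compute the AdaLN-modulated score explicitly and read off its antisymmetric part. Write the learned bilinear form as $W = W_S + W_A$ with $W_S = \tfrac12(W+W^\top)$ and $W_A = \tfrac12(W - W^\top)$. Substituting $\mathcal R\mapsto \mathcal R G + \mathbf 1\beta^\top$ into $\mathcal W = \mathcal R W \mathcal R^\top$ and expanding bilinearly gives four terms:
\begin{align*}
\mathcal W_{\tau} = \mathcal R\,(GWG)\,\mathcal R^\top + \mathcal R G W\beta\,\mathbf 1^\top + \mathbf 1\,\beta^\top W G\,\mathcal R^\top + (\beta^\top W\beta)\,\mathbf 1\mathbf 1^\top .
\end{align*}
The first term is the congruence term. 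The second is a row potential, which the row normalization removes. The third is the column term $\mathbf 1 b^\top$ with $b = \mathcal R G W^\top\beta$. The fourth is a constant, also removed by the normalization. This is exactly the content of Lemma~\ref{lem:adaln-app}, which I would invoke, or else re-derive in one line.

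Next I would isolate the directed content of each piece.
\begin{itemize}
\item \emph{Congruence term.} By Lemma~\ref{lem:cg-sector-split}, congruence commutes with the symmetric/antisymmetric split: $(GWG)^\top = GW^\top G$, so its skew part is $G W_A G$. Its contribution to $\mathcal A(\mathcal W_\tau)$ is therefore $-\mathcal R\, G W_A G\,\mathcal R^\top$.
\item \emph{Constant term.} It is symmetric and drops out.
\item \emph{Potential terms.} The row term $u\mathbf 1^\top$ and the column term $\mathbf 1 b^\top$ have skew parts $\tfrac12(u\mathbf 1^\top - \mathbf 1 u^\top)$ and its analogue for $b$. These are exactly of the form $\mathcal E(\cdot)$, i.e.\ coboundaries $d_N(\cdot)$ of node potentials.
\end{itemize}

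The potential terms therefore lie in $\operatorname{im} d_N$, the exact sector. Equivalently, they are what the complete-graph Poisson projection \eqref{eq:maximal-exact-projection} absorbs into $\psi$. They contribute nothing to the coexact residual $\eta$, and their cycle sums telescope to zero.

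Consequently the only possible source of circulation, meaning a nonzero coexact part or nonzero cycle integrals of $\mathcal A$, is $\mathcal R\, G W_A G\,\mathcal R^\top$. If $W_A = 0$, this vanishes identically for every $\tau$. The modulated directed field is then purely exact, and all its cycle integrals vanish. Conversely, if $W_A\neq 0$, the skew sector is the congruence image $GW_AG$. It depends on $\tau$ only through the diagonal rescaling $\gamma(\tau)$ and never on $\beta(\tau)$.

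The main subtlety, and the step I would handle most carefully, is the exact/coexact bookkeeping for the $\mathcal R G W_A G \mathcal R^\top$ term itself. That term may have an exact component of its own. The claim to make precise is therefore the one about circulation: nonvanishing cycle sums must originate in $W_A$. It is \emph{not} that the whole skew matrix is coexact.

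I would verify this via the cycle criterion. For any triangle $i\to j\to k\to i$, the exact contributions from $u$, $b$, and the constant cancel telescopically. The triangle sum then equals that of $-\mathcal R G W_A G \mathcal R^\top$. That sum is zero for all triangles whenever $W_A=0$, since the field is exact iff all clique-complex cycle sums vanish, using contractibility ($H^1_N=0$). This gives the corollary: conditioning cannot create circulation, and any circulation present is the static $W_A$ transported by congruence.
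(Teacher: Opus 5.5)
Your proposal is correct and is essentially the paper's own argument. The paper obtains the corollary from Lemma~\ref{lem:adaln-app}, using the same four-term expansion, Lemma~\ref{lem:cg-sector-split} to identify the skew image $GW_AG$ of the congruence term, and the fact that the shift contributes only row potentials plus the exact column action $d_N b$; you check that exactness directly by noting the skew parts are coboundaries, where the paper phrases it through the row-normalization gauge, but it is the same observation. Your caveat that $\mathcal R G W_A G\mathcal R^\top$ may itself carry an exact component, so the precise content is a statement about cycle sums rather than pure coexactness, is a useful sharpening that the paper leaves implicit.
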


The geometric picture is therefore concise: the static weights fix
which signature and directed sectors are available, and AdaLN moves
the operator through those sectors as a function of the generative
time. Its scale traces a conditioned trajectory through a fixed
inertia class of symmetric geometries, its shift actuates an exact
Witten deformation of the transport potential, and neither touches
the coexact sector. Two further consequences are recorded in the
appendix: positivity of the symmetric kernel class, and with it the
Schoenberg lift of \S\ref{sec:dmap}, is stable along every
conditioned trajectory (Corollary~\ref{cor:adaln-amap-app}); and a
definiteness census computed from static weights remains valid along
the entire trajectory whenever the scales are nonsingular, degrading
gracefully to a bound when they are not
(Corollary~\ref{cor:adaln-census-app}).

\section{Continuum Limit of Attention}\label{sec:continuum-attention}

The preceding sections separated attention into three sectors: a symmetric quadratic form, an exact directed field, and a residual circulating connection, and identified the deformation theory associated with each. What remains is to show that this calculus is not imposed on attention from outside: within the localized Markov regime, it is the continuum limit of the graph dynamics itself. The bridge is the Kramers--Moyal expansion of the normalized transition operator, which reads the local moments of the attention walk order by order. The zeroth moment gives the identity, the first gives transport, and the second gives the principal second-order geometry; under diffusive localization all higher moments vanish after generator rescaling. Thus the second moment determines the metric, while the first moment contains the exact and circulating directed sectors.

Let $\mathsf P^+_{\beta,N}$ be a row-stochastic kernel on samples $\{x_i\}_{i=1}^{N}\subset\mathscr M$, acting by $(\mathsf P^+_{\beta,N}f)_i=\sum_j(\mathsf P^+_{\beta,N})_{ij}f(x_j)$. In the dense-sampling regime it is represented by the normalized integral operator:
\begin{align}\label{eq:continuum-markov-operator} (\mathscr P_{\beta}f)(x) &= \int_{\mathscr M}p_{\beta}(x,y)f(y)\,d\mathrm{vol}(y), & \int_{\mathscr M}p_{\beta}(x,y)\,d\mathrm{vol}(y) &=1. \end{align}
For a localized kernel write $y=\exp_x(u)$, with tangent displacement $|u|=O(\beta^{-1/2})$, and expand the observable:
\begin{align} f(\exp_x u)=f(x)+u^\mu\nabla_\mu f+\frac12u^\mu u^\nu\nabla_\mu\nabla_\nu f+\frac{1}{3!}u^\mu u^\nu u^\rho\nabla_\mu\nabla_\nu\nabla_\rho f+\cdots. \end{align}
Averaging against the local transition law defines the jump moments:
\begin{align} M_\beta^{\mu_1\cdots\mu_r}(x)=\int u^{\mu_1}\cdots u^{\mu_r}p_\beta(x,\exp_xu)\,d\mathrm{vol}, \end{align}
and gives the Kramers--Moyal expansion:
\begin{align}\label{eq:kramers-moyal-attention} \mathscr P_\beta=I+M_\beta^\mu\nabla_\mu+\frac12M_\beta^{\mu\nu}\nabla_\mu\nabla_\nu+\frac{1}{3!}M_\beta^{\mu\nu\rho}\nabla_\mu\nabla_\nu\nabla_\rho+\cdots. \end{align}
Diffusive localization gives $M_\beta^{(r)}=O(\beta^{-r/2})$ before symmetry cancellations. The generator is defined on the diffusive time scale,
\begin{align}\label{eq:generator-limit} \mathscr D=\lim_{\beta\to\infty}\beta(\mathscr P_\beta-I), \end{align}
and only the first two nontrivial moments can survive it. A centered symmetric kernel has no leading first moment; a weak directed perturbation restores it at order $\beta^{-1}$, while the covariance is already of that order by localization. Under the normalization:
\begin{align} \beta M_\beta^\mu &\longrightarrow \frac12A^{\sharp\mu}, & \beta M_\beta^{\mu\nu} &\longrightarrow \frac12g^{\mu\nu}, \end{align}
every localized attention-like transition operator satisfying these moment assumptions has the second-order limit:
\begin{align}\label{eq:universal-second-order-limit} \mathscr D=\frac14\left(\Delta_g+2A^\sharp\cdot\nabla\right). \end{align}
The symmetric second moment fixes the metric and principal symbol, while the first moment fixes directed transport. The local operator is therefore specified by the pair $(g,A)$, with the preceding sections resolving the first-order field into its exact and circulating components. Throughout, the orientation is chosen so that the forward generator carries the $+A^\sharp\cdot\nabla$ drift.\footnote{The factor $\tfrac14$ is a bandwidth convention. Rescaling the Gaussian exponent gives the unit normalization used in \S\ref{sec:dmap}, where the corresponding limits are $\Delta_g$ and $\Delta_g+2A^\sharp\cdot\nabla$. No geometric statement depends on this overall time-scale factor.}

\subsection{DMAP: reversible diffusion with finite-resolution drift}\label{sec:continuum-dmap}

For DMAP the unnormalized local affinity is symmetric and generated by a positive tangent quadratic form:
\begin{align} 
K^{\mathrm{DMAP}}_\beta(x,\exp_xu)\sim\exp \bigl(-\beta g_x(u,u)\bigr). 
\end{align}
For the ideal centered tangent kernel the leading odd moments vanish, $M_\beta^\mu =0$, and:
\begin{align} 
 M_\beta^{\mu\nu} &=\frac{1}{2\beta}g^{\mu\nu}+o(\beta^{-1})\quad,
 \end{align}
so the generator limit is drift-free:
\begin{align}\label{eq:dmap-lb-limit} \beta(\mathscr P_\beta-I)\longrightarrow\frac14\Delta_g. \end{align}
This intrinsic continuum statement coexists with a finite-resolution effect that is easy to miss: the sampled walk generally has a nonzero local first moment even though the underlying Markov chain remains reversible. Define:
\begin{align}\label{eq:finite-dmap-drift} A_{N,\beta}(x_i)=\beta\sum_j(\mathsf P^+_{\beta,N})_{ij}\exp_{x_i}^{-1}(x_j). \end{align}
At finite resolution this quantity is generically nonzero because of sampling fluctuations, nonuniform density, local point-cloud asymmetry, curvature and boundary effects, and row normalization. This does not by itself represent circulation or broken detailed balance: for a symmetric affinity $K=K^\top$, row normalization still gives the reversible relation $\mathrm{diag}(q)P=\mathrm{diag}(q)P^\top$.

Any finite realization of a symmetric diffusion kernel may therefore carry local drift without carrying a genuine irreversible sector. In the density-corrected interior limit this first moment vanishes; without density correction it converges instead to the sampling-density drift of the Witten limit \eqref{eq:witten-laplacian-limit}. This is precisely the exact deformation identified in \S\ref{sec:exact}. Boundary regimes require their corresponding boundary asymptotics and are not included in the interior statement above.

\subsection{AMAP: directed diffusion at fixed metric}\label{sec:continuum-amap}

AMAP keeps the positive symmetric tangent form while allowing a directed first-order field. In the weak-field diffusive regime its local kernel has the form:
\begin{align}\label{eq:local-amap-kernel} K^+_\beta(x,\exp_xu)\sim\exp \left(-\beta g_x(u,u)+E_x(u)+A_x(u)+O(\beta^{-1/2})\right), \end{align}
where $E$ is an exact one-form generated by density or scalar normalization and $A$ denotes the residual directed connection. The even quadratic core continues to determine the covariance:
\begin{align} M_\beta^{\mu\nu}=\frac{1}{2\beta}g^{\mu\nu}+o(\beta^{-1}), \end{align}
while the linear tilt changes the first moment:
\begin{align} M_\beta^\mu=\frac{1}{2\beta}(E+A)^{\sharp\mu}+o(\beta^{-1}). \end{align}
The forward and orientation-reversed generators are therefore:
\begin{align}\label{eq:amap-continuum-generator} 
\mathscr D^\pm &=\frac14\left(\Delta_g+2(E\pm A)^\sharp\cdot\nabla\right). 
\end{align}
The formal adjoint with respect to a chosen reference measure contains the corresponding divergence term, as in \S\ref{sec:amap}.
Directionality therefore moves the first Kramers--Moyal moment and only the first: the covariance, metric, and principal symbol remain unchanged. This is the continuum content of the Markov--Girsanov sector. AMAP changes transport while fixing diffusion geometry, precisely the fixed-volatility structure of a change of path measure.

\subsection{Full attention: the boundary of the canonical Markov regime}\label{sec:continuum-full-attention}

AMAP guarantees a symmetric local form that is PSD in feature space and, under tangent nondegeneracy, positive definite on $T_x\mathscr M$. Full attention imposes no such restriction. With the feature map $\mathcal F:\mathscr M\to\mathbb R^D$ of \S\ref{sec:dmap}, Jacobian $J$, and symmetric feature form $M$, the local symmetric bidivergence has the expansion:
\begin{align}\label{eq:attention-pullback-metric} \mathcal H_M(x,\exp_xu)=\frac12g_x(u,u)+O(|u|^3), \qquad g=J^\top M J. \end{align}
The antisymmetric feature form $W_A$ similarly induces a local connection one-form:
\begin{align}\label{eq:attention-pullback-connection} A_\mu(x)=\bigl(J^\top W_A\mathcal F(x)\bigr)_\mu, \qquad F_A=dA. \end{align}
When $g>0$, the local affinity is Gaussian-localizing and the Kramers--Moyal argument above produces an elliptic drift--diffusion operator. This is the canonical stable regime of the present theory: the kernel is positive, row-normalizable, locally concentrated, and its second jump moment defines a Riemannian covariance tensor.
The situation changes qualitatively when the pulled-back symmetric form leaves the PSD cone. For every positive transition density, the second Kramers--Moyal moment obeys:
\begin{align}\label{eq:markov-covariance-positive} \xi_\mu M_\beta^{\mu\nu}\xi_\nu=\int(\xi_\mu u^\mu)^2p_\beta(x,\exp_xu)\,d\mathrm{vol}\ge0 \end{align}
for every covector $\xi$. A positive Markov kernel therefore cannot have an indefinite principal diffusion tensor. Equivalently, if $g(u,u)<0$ along some tangent direction, the putative local factor $\exp[-\beta g(u,u)]$ grows rather than decays in that direction, so the diagonal ceases to be a stable localization saddle. The finite softmax remains well defined on a finite token set, but the canonical local diffusion limit around $x$ no longer follows.

Thus an indefinite score geometry does not produce a pseudo-Riemannian wave operator as a consequence of positive softmax. It instead marks the boundary of the canonical Markov interpretation. The algebraic form $g=J^\top MJ$ remains meaningful, and its inertia is still invariant under invertible feature reparameterization by Sylvester's law, but its interpretation has changed: it is no longer a covariance geometry of a positive local random walk. A different continuum scaling may still emerge after recentering at stable saddles, but it need not be the local diffusion generated by the original quadratic form.
If $g$ is PSD but degenerate, the nondegenerate Riemannian limit likewise fails, although a lower-rank, degenerate, or sub-elliptic diffusion may survive on the supported directions. Hence degeneracy is a boundary of the elliptic metric class, not the absence of every possible second-order limit.

\begin{conjecture}[Pseudo-Riemannian continuation beyond the Markov sector]\label{conj:pseudo-riemannian-attention}
Let the pulled-back symmetric score form $g$ be smooth, nondegenerate, and indefinite. After replacing the positive row-stochastic realization by a suitable signed, complex, or oscillatory continuation that preserves the local score geometry, the attention calculus admits a continuum realization whose principal symbol is $g^{-1}$ and whose second-order operator is the pseudo-Riemannian d'Alembertian $\Box_g$, with the antisymmetric score entering through the same first-order connection field $A$:
\begin{align} (g,A)\quad\longmapsto\quad\Box_g+A^\sharp\cdot\nabla+\text{lower-order terms}. \end{align}
Such a continuation would describe hyperbolic or oscillatory propagation rather than a positive Markov diffusion.
\end{conjecture}

The conjecture is deliberately outside the canonical softmax theory: \eqref{eq:markov-covariance-positive} forbids an indefinite principal symbol for a positive transition kernel. It is nevertheless motivated by a closely related precedent already realized in the paper. The magnetic lift of \S\ref{sec:cmap} leaves the positive-kernel category while preserving the underlying symmetric diffusion magnitudes, carrying the circulating sector as a complex phase and producing a Hermitian operator whose kernel-induced geometry may become indefinite in the Kre\u{\i}n sense. This spectral indefiniteness is distinct from an indefinite tangent metric, but it shows that abandoning positivity can preserve a controlled geometric calculus. The conjectured symmetric-sector continuation would make an analogous move at the level of the principal symbol: a suitable oscillatory or complex continuation would replace Laplace localization by stationary-phase asymptotics, with a hyperbolic operator of principal symbol $g^{-1}$---canonically the d'Alembertian $\Box_g$---replacing the elliptic Laplacian. Identifying a discrete realization for which this continuation is mathematically controlled is left open.

\subsection{Weak- and strong-field directed limits}\label{sec:continuum-strong-field}

A second boundary of the canonical diffusive regime concerns the scale of the directed field. The preceding derivation assumes that the field is weak relative to Gaussian localization:
\begin{align}\label{eq:weak-field-scaling} K_\beta(x,\exp_xu)\sim\exp \left(-\beta g(u,u)+A(u)\right). \end{align}
Since $u=O(\beta^{-1/2})$, the linear term produces a mean displacement of order $\beta^{-1}$ and therefore survives the rescaling $\beta(\mathscr P_\beta-I)$ as a finite drift.
Attention may instead enter a strong-field scaling in which the directed term appears at the same large-deviation order as the quadratic energy:
\begin{align}\label{eq:strong-field-scaling} K_\beta(x,\exp_xu)\sim\exp \left[-\beta\bigl(g(u,u)-A(u)\bigr)\right]. \end{align}
For nondegenerate Riemannian $g$, and provided the resulting saddle remains inside the local chart, the exponent is centered at:
$u_\star=\frac12A^\sharp$. The width around $u_\star$ still scales as $\beta^{-1/2}$, but the center itself is now $O(1)$. The transition operator therefore does not approach the identity. Instead, at fixed layer spacing it concentrates on a deterministic transport map:
\begin{align} \mathscr P_\beta f(x)\longrightarrow f \left(\exp_x \left(\frac12A^\sharp\right)\right), \end{align}
so the diffusive generator limit $\beta(\mathscr P_\beta-I)$ need not exist.
A first-order generator reappears only after a second limiting operation in which the transport displacement is itself made infinitesimal, naturally corresponding to the depth-continuum limit. If the layer increment is $d\ell$ and $u_\star=O(d\ell)$, the concentrated transport map yields:
\begin{align} \partial_\ell f=v^\sharp\cdot\nabla f \end{align}
for the corresponding limiting velocity field $v$. In the semiclassical notation of \S\ref{sec:exact}, the same endpoint is represented by:
\begin{align} \mathscr D_{A,\hbar}=\hbar\Delta_g+2A^\sharp\cdot\nabla\quad\longrightarrow\quad2A^\sharp\cdot\nabla, \qquad \hbar\to0, \end{align}
but this should be understood as the vanishing-diffusion or depth-continuum scaling, not as the fixed-layer Kramers--Moyal limit of \eqref{eq:strong-field-scaling}.
The weak- and strong-field regimes therefore describe two different local asymptotics. Weak fields remain inside the canonical drift--diffusion calculus: they perturb the first moment while leaving the localization saddle fixed. Strong fields move the saddle itself and produce finite transport before any depth rescaling. Together with the loss of positive definiteness above, they identify two distinct ways in which attention can leave the canonical stable diffusion regime. The experiments that follow test the operational cost of projecting trained attention back onto its positive-metric and reversible subfamilies rather than attempting to infer a universal weak- or strong-field scaling directly.

\section{Experiment: Diffusion Transformer}\label{sec:sit-experiments}

SiT \citep{ma2024_sit} realizes a $\tau$-conditioned family of transport
operators, with generative time communicated to every block through
adaptive layer normalization \citep{perez2018_film, peebles2022dit},
the mechanism \S\ref{sec:cauchy-green} identified as a conditioned
Cauchy--Green congruence of the metric sector together with an exact
Witten tilt. We evaluate SiT-XL/2 on ImageNet \citep{russakovsky2015_imagenet}
at $256\times256$ in its latent formulation: images are encoded by the Stable Diffusion variational autoencoder
\citep{kingma2014_vae, rombach2022_highResolutionLatentDiffusion}, the transformer acts on $32\times32\times4$ latents ($N=256$ tokens at
patch size $2$), and generated latents are decoded by the same
autoencoder before any image-space statistic is computed. The question we pose is whether a trained attention operator can
withstand progressively stronger sector constraints,
\begin{align}
  \mathrm{SiT}_{\mathrm{Attention}} \longrightarrow
  \mathrm{SiT}_{\mathrm{AMAP}} \longrightarrow
  \mathrm{SiT}_{\mathrm{DMAP}},
\end{align}
and the answer we obtain rests on a method we found far less obvious before the fact than after it: the constrained operators are constructed as an \emph{ansatz on the existing weights} rather than trained from initialization. Training diffusion transformers from scratch is notoriously unstable, a difficulty that is structural to the attention mechanism rather than incidental \citep{qiu2026_lowPrecisionTransformerTraining}, and our from-scratch runs were no exception: direct training of the constrained operators was abandoned after several hundred thousand steps ($344$k and $430$k for the two arms), with the reversible DMAP arm, notably, the most stable of everything we trained, standard attention included. The ansatz route sidesteps the unstable regime entirely: it reuses the pretrained $Q$/$K$ projections to build the constrained operator in place, and the pretrained transformer adapts to the substituted geometry within tens of thousands of steps, sub-$10$ FID after $40{,}000$ steps for the strongest constraint, against a pretraining budget two orders of magnitude larger.

\paragraph{The interventions.}
For each layer and head, with $\mathcal Q = \mathcal R W_Q$ and $\mathcal K = \mathcal R W_K$ the per-head projections, define the orthogonal re-combinations
\begin{align}\label{eq:sit-mn-fold}
  W_M = \frac{W_Q + W_K}{\sqrt2}, \qquad
  W_N = \frac{W_Q - W_K}{\sqrt2}, \qquad
  \mathcal M = \mathcal R W_M = \frac{\mathcal Q + \mathcal K}{\sqrt2}.
\end{align}
AMAP replaces each head's logit matrix by
\begin{align}\label{eq:sit-amap-logits}
  \mathcal L^{\mathrm{AMAP}}
  = \frac12 \mathcal M\mathcal M^{\top}
  + \frac12\bigl(\mathcal Q\mathcal K^{\top}
  - \mathcal K\mathcal Q^{\top}\bigr),
\end{align}
followed by the standard temperature $d_h^{-1/2}$ and row-softmax; no
mask is applied (SiT attention is bidirectional). This is the
\emph{coupled} ansatz: $\mathcal M$ is computed from the existing
$q,k$ token projections, $W_M$ is never materialized, and no
parameters are added, removed, or reinitialized, so the pretrained
checkpoint loads unchanged. At the level of the bilinear form the
intervention is the rank-correction identity
\begin{align}\label{eq:amap-identity}
  W^{\mathrm{AMAP}} = W + \tfrac12 W_N W_N^{\top},
  \qquad W = W_Q W_K^{\top}:
\end{align}
AMAP adds the PSD difference-map correction required to replace the
generic symmetric query--key pairing by the Gram form of $W_M$, forcing
the symmetric sector into the PSD class of \S\ref{sec:directed} while
leaving the antisymmetric sector unchanged.
DMAP is then initialized from the adapted AMAP checkpoint by folding
each head according to \eqref{eq:sit-mn-fold}, retaining $W_M$ and
discarding $W_N$; with $w = \operatorname{diag}(\mathcal M\mathcal M^{\top})$
its logit matrix is the reversible distance kernel
\begin{align}\label{eq:sit-dmap-logits}
  \mathcal L^{\mathrm{DMAP}}
  = \mathcal M\mathcal M^{\top}
  - \frac12\bigl(w\mathbf 1^{\top} + \mathbf 1 w^{\top}\bigr),
\end{align}
equal to minus one half the squared-distance matrix induced by
$\mathcal M$, again up to the common temperature. The fold is real
surgery: the fused input projection drops from $3D^{2}$ to $2D^{2}$
parameters ($\mathcal M,\mathcal V$), the full attention
parameterization from $4D^{2}$ to $3D^{2}$, and both the circulation
sector and one of the two $Q/K$ feature directions are removed. By
Corollary~\ref{cor:no-conditioned-circulation}, the AdaLN conditioning
cannot re-inject circulation after the fold: the constraint holds along
the entire generative trajectory, not merely at initialization.

\paragraph{Why direct training fails and the ansatz does not.}
The PSD Gram in \eqref{eq:sit-amap-logits} carries a large positive
diagonal, $\mathcal L^{\mathrm{AMAP}}_{ii} = \tfrac12\|\mathcal M_i\|^{2}$,
so the constrained logits sit at a higher and more self-biased scale
than the $\mathcal{Q}\mathcal{K}^\top$ statistics the network was trained at; in the language of the bidivergence this is the self-affinity vector $w$, an exact-sector object, entering the logit scale. Training the constrained operator from scratch must traverse this
misconditioned regime and, in our runs, did not settle; substituting
the operator into a pretrained checkpoint confronts the same scale
shift but with a model already deep in a good basin, and adaptation is
rapid and stable. Optional stabilizers (per-head RMS normalization of
$q,k$; a learnable per-head logit scale) were implemented but
\emph{disabled} for all results reported here: the numbers below are
the faithful operators.

\paragraph{Protocol.}
All interventions are applied independently to every attention head of
every layer; nothing is shared or combined across heads or layers, and
all non-attention parameters are untouched at substitution. Adaptation
minimizes the unmodified SiT training objective (Linear-path velocity flow matching \citep{ma2024_sit}) on precomputed Stable-Diffusion-VAE latents of the ImageNet-1k training set, with AdamW \citep{kingma2015_adam, loshchilov2019_adamw} at constant learning rate $10^{-4}$, weight decay $0$, batch size $64$, gradient-norm clipping at $5$, on a single accelerator in \texttt{TF32} precision. An exponential moving average (EMA) of the weights is maintained at decay $0.9999$ \citep{yazici2019_emaGan, karras2022_edm} and re-initialized at each substitution (mandatory for DMAP, whose parameter shapes change); optimizer state is likewise fresh at each substitution. Under the constant-rate recipe the raw weights orbit the optimum while the average sits in it, and all evaluation uses the EMA weights. FID \citep{heusel2017_tturgan} is computed from $50{,}000$ class-balanced samples generated by the official SiT Euler ODE sampler at $50$ steps with classifier-free guidance \citep{ho2022_cfg} at scale $1.5$, decoded in FP32 and featurized by the canonical InceptionV3 \citep{szegedy2016_inceptionv3} of the \texttt{pytorch-fid} implementation \citep{seitzer2020_pytorchfid} without external resizing \citep{parmar2022_cleanfid}. The reference statistics are computed from an equal number of class-balanced real ImageNet latents decoded through the same autoencoder, so the comparison measures the generative model with the decoder factored out; the resulting numbers are internally consistent across the three arms but not comparable to published SiT figures, and the pretrained reference row is evaluated through this identical pipeline. Confidence intervals are $95\%$ bootstrap-normal half-widths over $100$ replicates with both generated and reference features resampled \citep{efron1994_bootstrap}. All runs use a single seed. The reported intervals therefore quantify finite-sample uncertainty in
the FID evaluation, not run-to-run variation across training seeds.

\begin{table}[t]
\centering
\begin{tabular}{lcc}
\toprule
Model & FID $\downarrow$ & 95\% CI \\
\midrule
SiT-XL/2 reference (our eval) & $2.7650$ & $\pm 0.1199$ \\
AMAP (40k adaptation) & $2.9341$ & $\pm 0.1111$ \\
DMAP (80k adaptation) & $8.1210$ & $\pm 0.1914$ \\
\bottomrule
\end{tabular}
\caption{FID under the internal latent-reference protocol described in
the text (not comparable to published SiT numbers; the reference row is
evaluated through the identical pipeline). AMAP preserves PSD symmetry
and independent circulation; DMAP retains only the reversible metric
class together with its exact node-potential contribution.}
\label{tab:fid-comparison}
\end{table}

The corresponding samples are shown in
Figs.~\ref{fig:sit}--\ref{fig:dmap}: the immediate post-surgery
outputs visualize the disruption induced by each geometric constraint,
while the adapted samples show the subsequent recovery. Imposing the PSD metric
constraint costs almost nothing: Attention $\to$ AMAP moves FID only from $2.7650$ to $2.9341$
after $40{,}000$ adaptation steps, an absolute degradation of $0.1691$. The subsequent
AMAP $\to$ DMAP fold is substantially more expensive, with FID at
$8.1210$ after $80{,}000$ steps, yet this is itself the ansatz's
strongest exhibit: a pretrained bidirectional transformer, its
attention replaced wholesale by a reversible distance kernel with a
quarter of the attention parameters removed, recovers sub-$10$ FID in
nearly 1\% of a percent of a training budget, where direct
training of the same operator struggled to converge. The qualitative progression in Figs.~\ref{fig:amap} and \ref{fig:dmap} mirrors the quantitative result: AMAP requires only a
small correction from its initial ansatz, whereas the DMAP fold causes
a substantially larger disruption before adaptation recovers coherent
samples. The experiment does not support the stronger hypothesis that the learned
SiT operator is already well described by reversible diffusion alone;
it is consistent with the picture in which a PSD metric backbone
carries most of the symmetric geometry while an independent
circulation sector contributes non-trivially to the learned dynamics.
The DMAP intervention removes a functional sector and parameters simultaneously, so the gap should be read as a conversion cost rather than a capacity bound, and it does not establish that a model trained or conditioned for the restricted geometry from the outset could not close it. What the section establishes is narrower and, we think, more useful: the sector constraints of the calculus are \emph{installable} in trained weights, the exact and metric sectors absorb the surgery almost freely, and the results are consistent with circulation carrying non-redundant functional structure beyond the symmetric metric sector. We next test whether
the same hierarchy appears in autoregressive sequence modeling, where
directed token-to-token transport plays an explicitly causal role.

\begin{figure}[hbt!]
    \centering
    \includegraphics[width=0.49\linewidth]{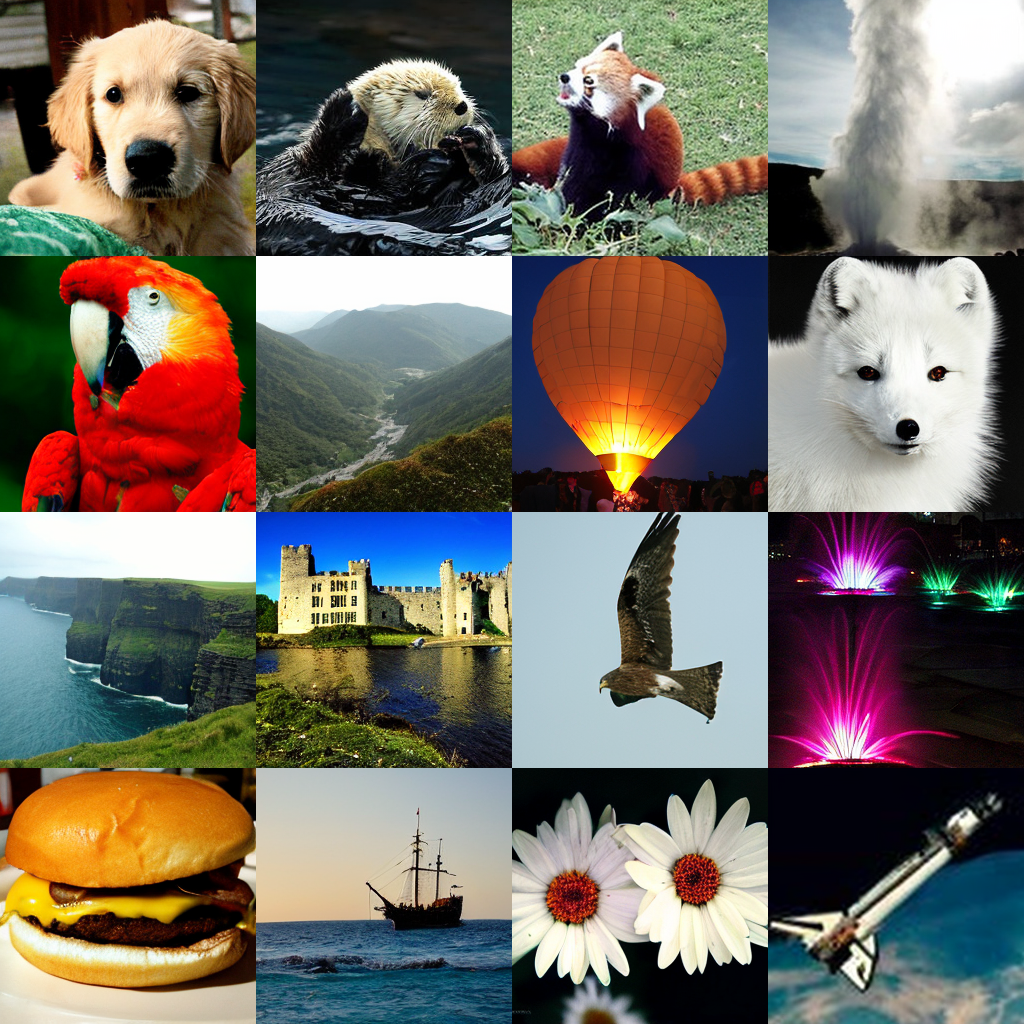}
    \caption{Samples from the unchanged pretrained SiT-XL/2 reference model \cite{ma2024_sit}.}
    \label{fig:sit}
\end{figure}

\begin{figure}[hbt!]
    \centering
    \includegraphics[width=0.49\linewidth]{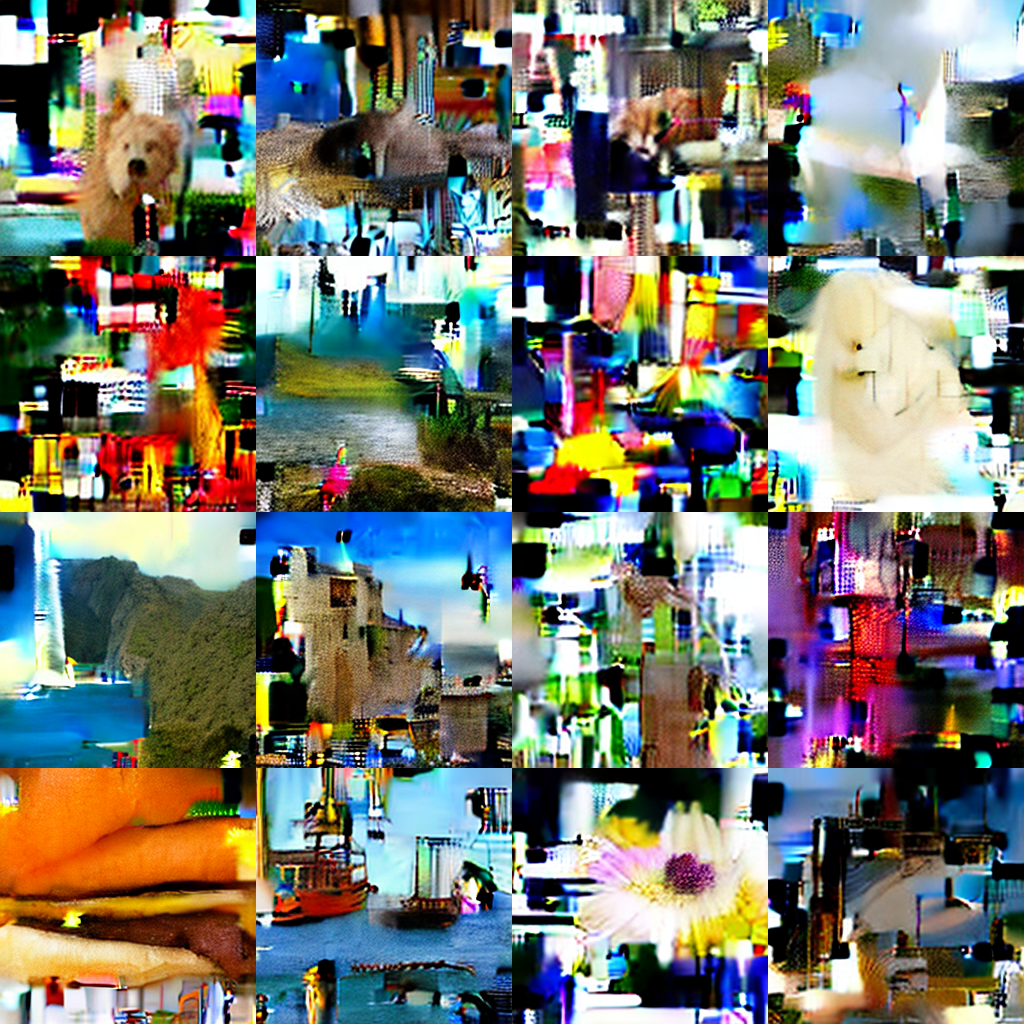}
    \includegraphics[width=0.49\linewidth]{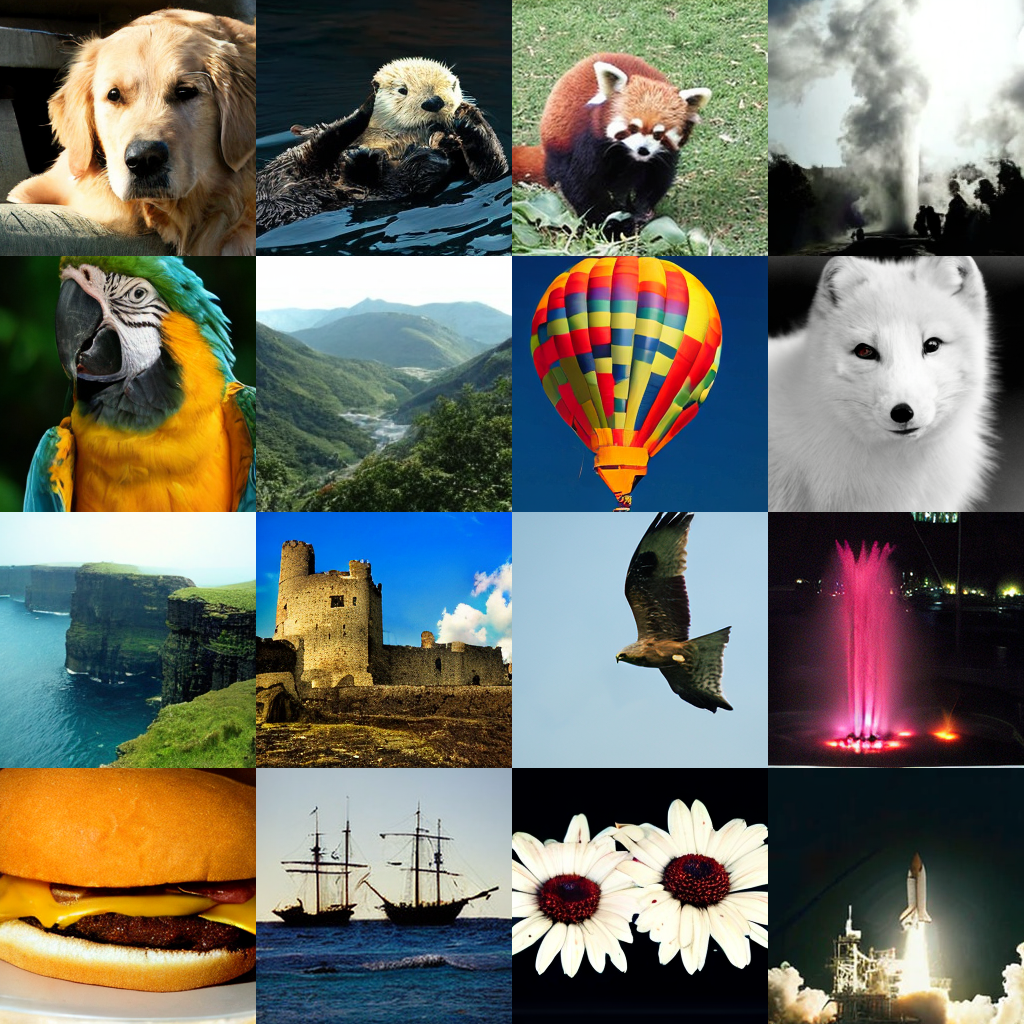}
    \caption{AMAP intervention. Samples immediately after substituting the AMAP ansatz of \eqref{eq:sit-amap-logits} into the pretrained SiT-XL/2 checkpoint (left), and after $40{,}000$ adaptation steps (right).}
    \label{fig:amap}
\end{figure}

\begin{figure}[hbt!]
    \centering
    \includegraphics[width=0.49\linewidth]{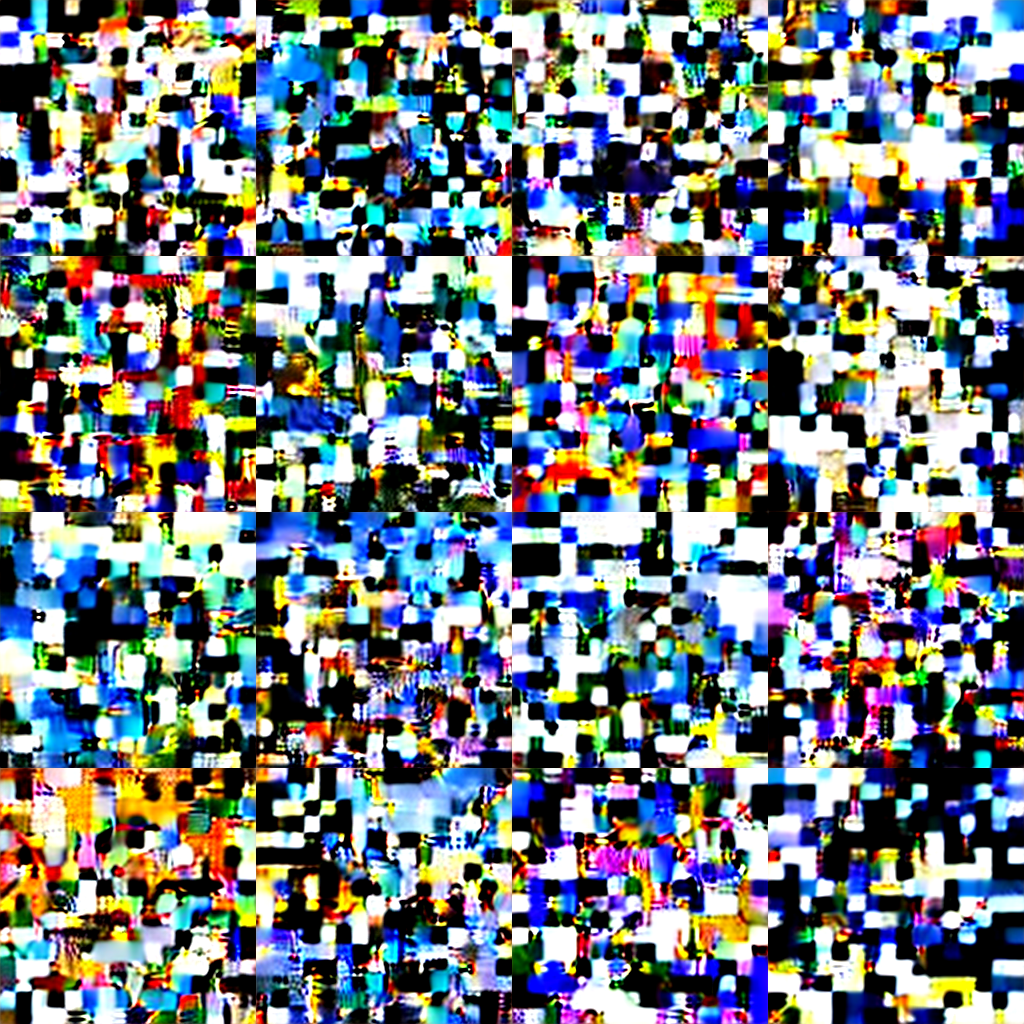}
    \includegraphics[width=0.49\linewidth]{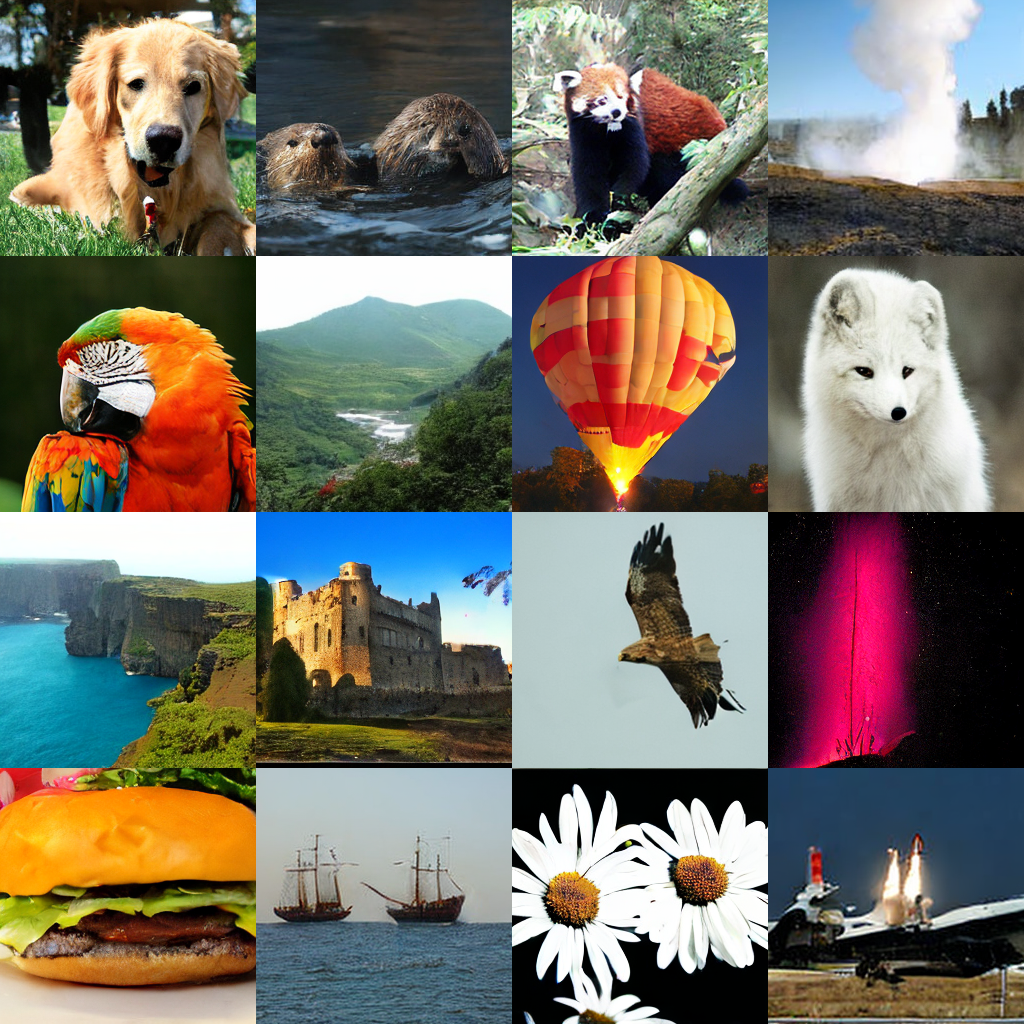}
    \caption{DMAP intervention. Samples immediately after folding the adapted AMAP checkpoint into the reversible DMAP ansatz of \eqref{eq:sit-dmap-logits} (left), and after $80{,}000$ additional adaptation steps (right).}
    \label{fig:dmap}
\end{figure}

\section{Experiment: Autoregressive Transformer}\label{sec:lm}

We now test the same sector hierarchy in the autoregressive setting.
Unlike the image experiments of \S\ref{sec:sit-experiments}, the support
of the attention operator is fixed by a causal mask: with $\mathcal C_{ij}
= 0$ for $j\le i$ and $-\infty$ otherwise, the transport operator is
$P = \operatorname{softmax}(\mathcal L + \mathcal C)$, and the sector
decomposition refers throughout to the learned \emph{pre-mask} score
geometry $\mathcal L$. The mask fixes the admissible support; the learned
logits organize transport within it. The mask is identical across all
experiments and all operator substitutions below, so no observed effect
can be attributed to the presence or absence of the autoregressive arrow.
The operators are those of \S\ref{sec:sit-experiments}, applied
independently to every head of every layer: coupled AMAP, which by the
identity \eqref{eq:amap-identity} adds the PSD correction
$\tfrac12\mathcal N\mathcal N^{\top}$, $\mathcal N = (\mathcal Q -
\mathcal K)/\sqrt2$, to the standard logits while adding no parameters
and leaving the state dict unchanged, and folded DMAP, the reversible
distance kernel on $\mathcal M = (\mathcal Q + \mathcal K)/\sqrt2$. We
study two complementary settings: matched models trained from random
initialization under each operator, testing whether the restricted
geometries are intrinsically trainable with no conversion boundary, and
a pretrained model whose admissible geometry is repeatedly switched
while the weights carry forward, testing whether optimization depends on
the history of which geometry was available.

\paragraph{From-scratch training at depth 12.}
All from-scratch arms follow Karpathy's \texttt{nanochat} design \citep{nanochat} unmodified except for the attention rule: depth-12 decoder-only Transformers ($\approx$200M parameters), full-context causal attention (no sliding windows), the \texttt{nanochat} tokenizer trained once and shared by all arms, FineWeb-derived pretraining shards \citep{penedo2024_fineweb}, and the stock optimizer split, Muon \citep{jordan2024_muon} for the matrix parameters and AdamW \citep{loshchilov2019_adamw} for embeddings and unembeddings, under the stock warmup--warmdown schedule in \texttt{bf16}. Each arm trains to
\texttt{nanochat}'s compute-optimal horizon for depth 12, approximately
$2{,}500$ steps at the standard $2^{19}$-token step batch,
$1.32\times10^{9}$ tokens in total, so the three geometries receive
identical data, schedule, and budget. The DMAP arm trains the folded
parameterization directly, and therefore carries fewer attention
parameters than its competitors, the same asymmetry as in
\S\ref{sec:sit-experiments}. Validation bits-per-byte (bpb) is
\texttt{nanochat}'s tokenizer-invariant held-out loss. One seed per arm.

\begin{table}[t]
\centering
\begin{tabular}{lc}
\toprule
Geometry & Validation bpb $\downarrow$ \\
\midrule
Attention & $0.8471$ \\
AMAP      & $0.8492$ \\
DMAP      & $0.8621$ \\
\bottomrule
\end{tabular}
\caption{Matched depth-12 models trained from random initialization for
$1.32\times10^{9}$ tokens under identical \texttt{nanochat} protocol.
AMAP closely approaches unrestricted Attention; DMAP incurs a larger
penalty even with no conversion boundary.}
\label{tab:d12-geometry}
\end{table}

The ordering reproduces the SiT hierarchy without any pretrained
conversion: AMAP costs $0.0021$ bpb against unrestricted Attention,
DMAP costs $0.0150$. The PSD restriction is cheap not only to install
in trained weights but to live in from the first step; the reversible
restriction is more expensive in either regime, though here too part of
the gap is parameter count rather than sector alone.

\paragraph{Geometry switching at depth 32.}
The second experiment starts from Karpathy's released 32-layer
\texttt{nanochat} d32 base checkpoint ($\approx$1.9B parameters, trained
on $\approx$38B tokens) and repeatedly switches the operator while the
weights carry forward,
\begin{align}
  A_0 \longrightarrow M_1 \longrightarrow A_1 \longrightarrow M_2,
\end{align}
where $A$ denotes standard Attention, $M$ denotes AMAP, $A_0$ is the
public checkpoint, and each arrow is $3{,}000$, $1{,}000$, and $1{,}000$
steps of continued pretraining respectively under the same
\texttt{nanochat} objective and data pipeline. Because coupled AMAP
adds no parameters, every substitution is the identity in parameter
space, executed as a strict key-for-key state-dict load that aborts on
any mismatch, and generally an $O(1)$ perturbation in function space
through the added PSD Gram. Three protocol facts qualify what the
trajectory measures. First, each substitution is a weights-only warm
start: optimizer state (Muon and AdamW moments) and the data stream are
re-initialized at every arrow, so only the weights carry history.
Second, each leg's horizon re-enters the warmup--warmdown schedule with
a new endpoint, so the learning rate is not constant within a leg.
Third, the public $A_0$ checkpoint predates several architectural
components of our training fork (value embeddings and residual-mixing
scalars); at the first substitution these are activated from
graft-neutral values chosen so that the grafted model differs from the
source by the attention operator alone, verified by the step-$0$ validation loss, but they train freely thereafter. The states $M_1$, $A_1$, $M_2$ therefore share one architecture and protocol and are directly comparable; $A_0$ is a reference point across this vintage boundary rather than a controlled baseline, and all post-$A_0$ states are expected to improve on it through continued pretraining alone. Evaluation is shared across states: validation bpb, \texttt{nanochat}'s in-training induction diagnostic, which measures recovery of a previously observed successor relation \citep{olsson2022_inductionHeads}, and the CORE composite of the DCLM suite \citep{li2024_dclm}, computed at $500$ examples per task and therefore internal to this paper rather than comparable to published CORE figures.

\begin{table}[t]
\centering
\begin{tabular}{llrrrrr}
\toprule
State & Geometry & Steps & bpb $\downarrow$ & Induction loss $\downarrow$
& Induction acc. $\uparrow$ & CORE $\uparrow$ \\
\midrule
$A_0$ & Attention & source & $0.7650$ & $1.4490$ & $0.8081$ & $0.2853$ \\
$M_1$ & AMAP      & $3000$ & $0.6657$ & $3.6048$ & $0.5735$ & $0.3083$ \\
$A_1$ & Attention & $1000$ & $\mathbf{0.6619}$ & $\mathbf{1.2046}$ &
$\mathbf{0.8334}$ & $\mathbf{0.3275}$ \\
$M_2$ & AMAP      & $1000$ & $0.6690$ & $4.1821$ & $0.4676$ & $0.3082$ \\
\bottomrule
\end{tabular}
\caption{Geometry switching for the 32-layer \texttt{nanochat} model.
Each substitution is a weights-only graft with fresh optimizer state;
$A_0$ sits across an architecture-vintage boundary from the other
states (see text). The trajectory measures reconditioning dynamics
under repeated operator substitution, not an instantaneous ablation.}
\label{tab:d32-hysteresis}
\end{table}

Within the controlled set the pattern is sharp. Reopening standard
Attention produces the strongest model observed: $A_1$ attains the
lowest bpb, $0.6619$, and the highest CORE, $0.3275$. Reimposing AMAP
returns both metrics almost exactly to the earlier AMAP state, bpb
$0.6690$ against $M_1$'s $0.6657$ and CORE $0.3082$ against $0.3083$:
the model relaxes back to what the restricted geometry supports,
largely independent of the excursion through $A_1$. The geometry
dependence is most pronounced for induction. Accuracy follows
$0.8081\to0.5735\to0.8334\to0.4676$ along
$A_0\to M_1\to A_1\to M_2$, with induction loss tracking the same
pattern: reopening standard attention restores, indeed improves,
the induction behavior that AMAP suppresses, and reimposing AMAP
suppresses it again, all under a fixed causal mask.

The induction sensitivity has a structural reading in the calculus.
The induction pattern is sharply directed transport, a head that routes
from the current token to the successor of its previous occurrence,
and AMAP overlays every head's logits with the PSD Gram whose diagonal
$\tfrac12\|\mathcal M_i\|^{2}$ is the self-affinity potential $w$ of
the bidivergence, an exact-sector object that biases transport toward
self and metric neighbors. The suppression and recovery of induction
under the AMAP--Attention switches is thus the exact sector competing
with directed relational transport in situ, the same tension between
node potentials and circulation that \S\ref{sec:exact} and
\S\ref{sec:coexact} develop in operator terms. We refer to the
trajectory dependence operationally as \emph{Transformer hysteresis}:
geometric restriction and optimization do not commute, and the state
reached depends on the sequence of geometries available during
optimization, with the near-coincidence of $M_1$ and $M_2$ suggesting
the restricted geometry has a well-defined basin to which the model
returns. The unequal optimization histories, the fresh-optimizer
restarts, and the per-leg learning-rate schedule remain caveats; the
trajectory measures path dependence under geometric reconditioning,
not a pure instantaneous causal effect.

The two experiments isolate complementary facts. From random
initialization, the PSD metric restriction is nearly free while the
reversible restriction carries a real cost, mirroring SiT without any
conversion boundary. In the pretrained trajectory, repeated switching
reveals pronounced dependence on the available geometry, with directed
relational capability, induction, the most sensitive observable. In the
causal setting, support determines where information may flow; the
learned score geometry determines how the flow is organized; and the
sector hierarchy of the calculus tracks a real structural gradient in
both.

\section{Conclusion}

Prior theoretical work on Transformers and directed graphs has operated
at the affinity level, decomposing asymmetric kernels additively after
the nonlinearity \citep{perraultjoncas2011_directedGraphEmbedding,
ting2010_convergenceGraphLaplacians,
wright2021_transformersNonMercerKernels}. Shifting the analysis to the
pre-softmax logit space changes what is visible: the antisymmetric
sector becomes a multiplicative gauge field rather than an additive
perturbation, and it is exactly this pre-nonlinearity decomposition
that exposes the Hodge structure of directed transport, exact tilts
that a change of measure removes, coexact flux that no scalar can
reach, and, in the manifold limit, harmonic holonomy with no local
signature. A transport geometry that is invisible at the affinity
level is, at the logit level, the whole content of the operator.

Concretely, we make the diffusion-map identification exact and develop
its directed generalization. Softmax attention is the row-normalized
diffusion-map operator, and its symmetric restriction is the harmonic
analysis of a reversible transport operator, the reversible ancestor
of attention (\S\ref{sec:dmap}). The directed score decomposes by
discrete Hodge theory into an exact sector, reversible and removable
by a Markov--Witten change of measure, and a coexact flux, irreversible
and spectrum-moving, reachable only by the edgewise Markov--Girsanov
family of which Markov--Witten is the factorized node-weight subsector
(\S\S\ref{sec:exact}--\ref{sec:coexact}). The sectors are not abstract:
Coifman--Lafon normalization, RoPE, and AdaLN are all identified as
changes of measure within them, and AdaLN in particular actuates the
metric and exact axes but provably never the flux
(\S\ref{sec:cauchy-green}). The Kramers--Moyal analysis closes the
loop: metric, exact drift, and flux are the surviving local moments of
attention's own walk, and unrestricted attention extends the identical
calculus to indefinite, pseudo-Riemannian signature
(\S\ref{sec:continuum-attention}).

The finite operator, on this account, is not an approximation of the
continuum but a compression of it: the manifold has infinitely many
points, and the $N\times N$ operator stores its geometry, faithful on
the slow spectrum, with the sectors $(g,\phi,\mathcal A)$ as the
complete invariants of the compression and everything else as gauge.
This is why the ansatz experiments work at all. The sector constraints
are installable in trained weights because the weights already encode
a geometry and the surgery edits its sectors rather than its
parameters: the PSD metric restriction is absorbed almost freely, in
fine-tuning and from scratch alike, while the reversible restriction
carries a real and informative cost, and the geometry available during
optimization leaves a measurable history, most sharply in directed
relational transport (\S\S\ref{sec:sit-experiments}--\ref{sec:lm}).
A geometrically constrained operator recovers freely learned attention because it is largely the same operator, now carrying an explicit and auditable geometry; and the cost of the reversible restriction is the measure of what circulation contributes.

Two directions follow immediately. First, the Markov--Girsanov family suggests a conditioned generalization we call the $\sigma$-\emph{transform}: where AdaLN conditions the exact and metric sectors on the generative time $\tau$, a $\sigma$-transform would condition the edgewise likelihood ratio itself, interpolating the anti-attention/self-attention ratio, the balance of $\mathsf K^{\from}$ against $\mathsf K^{\to}$, along a schedule $\sigma$ in direct analogy with $\tau$-conditioning, \cite{sohldickstein2015_diffusionModels}. This would make the reversibility of the operator, and not merely its potential and metric, a conditioned degree of freedom: a generative trajectory could anneal from reversible diffusion toward directed transport within one parameterization, with the strong-field limit of \S\ref{sec:continuum-strong-field} as its endpoint. 

Second concerns the emergence of the Novikov-Aharonov-Bohm or \textit{harmonic} sector. We have tactically assumed, in proofs, that in the discrete setting of a complete graph cannot have a harmonic sector, this could be true. However, the continuum limit can surely have a nontrivial Novikov class. Currently, we conjecture this sector is hidden is the known exact and coexact sectors. Reconciling this subtle but important point remains elusive.

\newpage
\bibliographystyle{apalike}
\bibliography{refs}

\newpage
\appendix
\newpage
\section{Appendix: DMAP} \label{app:dmap}

\paragraph{Standing assumptions and notation.} Throughout, $\mathsf{P}=\mathsf{P}^\to\mathsf{P}\in\mathbb{R}^{N\times N}$ is the symmetric, entrywise positive, strictly positive-definite kernel of \S\ref{sec:dmap} (the Gaussian kernel on $N$ distinct points, so $\mathsf{P}\succ0$ with unit diagonal). Write the degree $q=\mathsf{P}\mathbf 1>0$, the DMAP operator and its stationary law:
\begin{align}\label{eq:app-dmap}
  \mathsf{P}^+ = \mathrm{diag}(q)^{-1}\mathsf{P}, \qquad
  \pi = (\mathbf 1^\top q)^{-1} q,
\end{align}
and the random-walk Laplacian $L^+ = I - \mathsf{P}^+$. We equip $\mathbb{R}^N$ with the weighted inner product $\langle f,g\rangle_\pi = \sum_i \pi_i f_i g_i$ and denote the resulting Hilbert space $\ell^2(\pi)$; the dual (mass) norm is $\lVert v\rVert_{1/\pi}^2 = \sum_i v_i^2/\pi_i$. Distributions are row vectors evolving as $\rho\mapsto\rho \mathsf{P}^+$.

\begin{proposition}[Markov operator and unique equilibrium]
\label{prop:markov}
$\mathsf{P}^+$ is row-stochastic and entrywise positive, hence primitive. Its spectral radius equals $1$, attained by the simple eigenvalue $1$ with right eigenvector $\mathbf 1$ (so $L^+\mathbf 1=\mathbf 0$), and $\pi$ of \eqref{eq:app-dmap} is the unique stationary law, $\pi^\top \mathsf{P}^+ = \pi^\top$. In particular $\pi\propto q$ and $\rho\,(\mathsf{P}^+)^t\to\pi$ for every initial probability vector $\rho$.
\end{proposition}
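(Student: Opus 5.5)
The plan is to establish the claims in the order they are stated, each resting on the standing assumptions: $\mathsf P$ is symmetric and entrywise positive, so $q=\mathsf P\mathbf 1>0$. All four claims are standard Perron--Frobenius consequences once these facts are in hand.

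\emph{Stochasticity and primitivity.} First I will verify row-stochasticity directly:
\begin{align*}
\mathsf P^+\mathbf 1=\mathrm{diag}(q)^{-1}\mathsf P\mathbf 1=\mathrm{diag}(q)^{-1}q=\mathbf 1 .
\end{align*}
Entrywise positivity follows because $\mathsf P_{ij}>0$ and $q_i>0$. A positive matrix is primitive with exponent one, so nothing further is needed for primitivity.

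\emph{Spectral radius and simplicity.} For the spectral radius, I will use the row-sum bound $\rho(\mathsf P^+)\le\|\mathsf P^+\|_\infty=1$. Combined with $\mathsf P^+\mathbf 1=\mathbf 1$, this gives $\rho(\mathsf P^+)=1$, attained at the eigenvalue $1$ with right eigenvector $\mathbf 1$, and hence $L^+\mathbf 1=\mathbf 0$. Perron's theorem for positive matrices then supplies the rest: the eigenvalue $1$ is algebraically simple, and every other eigenvalue has modulus strictly less than $1$.

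\emph{Stationarity and uniqueness.} Next I will check stationarity using symmetry of $\mathsf P$:
\begin{align*}
q^\top\mathsf P^+=q^\top\mathrm{diag}(q)^{-1}\mathsf P=\mathbf 1^\top\mathsf P=(\mathsf P\mathbf 1)^\top=q^\top .
\end{align*}
Normalizing gives $\pi^\top\mathsf P^+=\pi^\top$ with $\pi\propto q$. Uniqueness among probability vectors follows because the left eigenspace for eigenvalue $1$ is one-dimensional, a consequence of the simplicity just established.

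\emph{Convergence.} Finally, for convergence I will invoke the Perron projection. Since $1$ is simple and strictly dominant, $(\mathsf P^+)^t\to\mathbf 1\pi^\top$. Alternatively, one can work through the symmetrization of the DMAP footnote: $\mathrm{diag}(\pi)^{1/2}\mathsf P^+\mathrm{diag}(\pi)^{-1/2}$ is symmetric with eigenvalues in $(-1,1]$, and $1$ is simple. For a probability row vector $\rho$ we have $\rho\mathbf 1=1$, so
\begin{align*}
\rho(\mathsf P^+)^t\to\rho\mathbf 1\pi^\top=\pi^\top .
\end{align*}
The geometric rate is governed by the second-largest eigenvalue modulus.

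\emph{Expected obstacle.} The main obstacle is the strict inequality $|\lambda|<1$ for $\lambda\neq1$, which is what rules out eigenvalue $-1$ and periodicity. My first approach is to cite Perron's theorem for strictly positive matrices. As a self-contained backup, suppose $\mathsf P^+ f=\lambda f$ with $|\lambda|=1$, and take $i$ maximizing $|f_i|$. Then
\begin{align*}
|f_i|=\Bigl|\sum_j\mathsf P^+_{ij}f_j\Bigr|\le\sum_j\mathsf P^+_{ij}|f_j|\le|f_i| .
\end{align*}
Because every weight $\mathsf P^+_{ij}$ is strictly positive, equality in the final inequality forces $|f_j|=|f_i|$ for all $j$. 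Equality in the triangle inequality then forces all $f_j$ to share a common phase, so $f$ is constant and $\lambda=1$.
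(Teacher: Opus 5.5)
Your proposal is correct and follows the paper's proof essentially step by step: row-stochasticity from $\mathrm{diag}(q)^{-1}\mathsf P\mathbf 1=\mathbf 1$, Perron--Frobenius for the simple, strictly dominant eigenvalue $1$, stationarity from $q^\top\mathrm{diag}(q)^{-1}\mathsf P=\mathbf 1^\top\mathsf P=q^\top$ using symmetry, and uniqueness and convergence from simplicity. Your row-sum bound and maximum-modulus argument are self-contained replacements for facts the paper simply cites from Perron--Frobenius; the maximum-modulus argument alone gives only geometric simplicity, so if you use it without Perron's theorem, add that power-boundedness $\|(\mathsf P^+)^t\|_\infty=1$ rules out a Jordan block at $\lambda=1$.
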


\begin{proof}
Row-stochasticity is immediate: $\mathsf{P}^+\mathbf 1=\mathrm{diag}(q)^{-1}\mathsf{P}\mathbf 1 =\mathrm{diag}(q)^{-1}q=\mathbf 1$, and $\mathsf{P}^+_{ij}=\mathsf{P}_{ij}/q_i>0$ since $P>0$ entrywise, so $\mathsf{P}^+$ is a positive (hence primitive) stochastic matrix. Perron--Frobenius then gives spectral radius $1$ as a simple eigenvalue with positive eigenvector, which here is the right eigenvector $\mathbf 1$; all other eigenvalues have modulus $<1$, whence $\rho(\mathsf{P}^+)^t\to\pi$ for any probability vector $\rho$. Stationarity of $\pi$ follows from symmetry of $\mathsf{P}$: since $q^\to\mathsf{P}\mathrm{diag}(q)^{-1}=\mathbf 1^\top$ and $\mathbf 1^\top \mathsf{P}=q^\top$,
\begin{align*}
  \pi^\top \mathsf{P}^+=(\mathbf 1^\top q)^{-1}q^\to\mathsf{P}\mathrm{diag}(q)^{-1}\mathsf{P}
             =(\mathbf 1^\top q)^{-1}\mathbf 1^\top P
             =(\mathbf 1^\top q)^{-1}q^\top=\pi^\top,
\end{align*}
and simplicity of the eigenvalue $1$ makes $\pi$ the unique stationary law.
\end{proof}

\begin{proposition}[Reversibility, self-adjointness, positive spectrum]
\label{prop:reversible}
$\mathsf{P}^+$ satisfies detailed balance with respect to $\pi$,
\begin{align}\label{eq:app-detailed-balance}
  \mathrm{diag}(\pi)\,\mathsf{P}^+ = (\mathsf{P}^+)^\to\mathsf{P}\mathrm{diag}(\pi).
\end{align}
Equivalently $S = \mathrm{diag}(q)^{-1/2}\mathsf{P}\,\mathrm{diag}(q)^{-1/2}$ is symmetric positive
definite and $\mathsf{P}^+=\mathrm{diag}(q)^{-1/2}S\,\mathrm{diag}(q)^{1/2}$ is similar to $S$. Hence $\mathsf{P}^+$
is self-adjoint on $\ell^2(\pi)$ with real spectrum
\begin{align*}
  \mathrm{spec}(\mathsf{P}^+)\subset(0,1],
\end{align*}
the eigenvalue $1$ being simple. Positivity of the spectrum follows from
$\mathsf{P}\succ0$ and rules out oscillatory modes.
\end{proposition}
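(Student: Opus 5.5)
We prove Proposition~\ref{prop:reversible} by direct computation from the symmetry of $\mathsf{P}$, then transfer spectral information through a similarity transform. Throughout, $D=\mathrm{diag}(q)$ and $\mathsf{P}^+=D^{-1}\mathsf{P}$.

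\emph{Detailed balance.} Since $\pi=(\mathbf 1^\top q)^{-1}q$, we have
\begin{align*}
\mathrm{diag}(\pi)\,\mathsf{P}^+=(\mathbf 1^\top q)^{-1}DD^{-1}\mathsf{P}=(\mathbf 1^\top q)^{-1}\mathsf{P}.
\end{align*}
This matrix is symmetric because $\mathsf{P}=\mathsf{P}^\top$. Transposing it gives $(\mathsf{P}^+)^\top\mathrm{diag}(\pi)$, which is \eqref{eq:app-detailed-balance}.

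\emph{Symmetrization.} Set $S=D^{-1/2}\mathsf{P}D^{-1/2}$. This is well defined because $q>0$. It is symmetric as a congruence of a symmetric matrix, and positive definite by Sylvester's law of inertia (the congruence used in Lemma~\ref{lem:cg-inertia}) applied to $\mathsf{P}\succ0$. Concretely, $x^\top Sx=y^\top\mathsf{P}y>0$ with $y=D^{-1/2}x\neq0$. Direct multiplication gives $D^{-1/2}SD^{1/2}=D^{-1}\mathsf{P}=\mathsf{P}^+$, so $\mathsf{P}^+$ is similar to $S$. Therefore $\mathsf{P}^+$ has the spectrum of $S$: real, positive, and diagonalizable, with eigenvectors $D^{-1/2}v$ for $v$ an orthonormal eigenbasis of $S$.

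\emph{Self-adjointness on $\ell^2(\pi)$.} For $f,g\in\mathbb{R}^N$,
\begin{align*}
\langle f,\mathsf{P}^+g\rangle_\pi=f^\top\mathrm{diag}(\pi)\mathsf{P}^+g=(\mathbf 1^\top q)^{-1}f^\top\mathsf{P}g.
\end{align*}
This expression is symmetric in $f$ and $g$, so $\langle f,\mathsf{P}^+g\rangle_\pi=\langle \mathsf{P}^+f,g\rangle_\pi$.

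\emph{Spectral bounds and simplicity.} Positivity of $S$ gives $\mathrm{spec}(\mathsf{P}^+)\subset(0,\infty)$. Proposition~\ref{prop:markov} gives spectral radius $1$, attained by a simple eigenvalue. Intersecting the two yields $\mathrm{spec}(\mathsf{P}^+)\subset(0,1]$ with $1$ simple. For algebraic simplicity we can also cite diagonalizability via $S$. Since all eigenvalues are positive, there is no eigenvalue at or near $-1$, which rules out oscillatory (period-two) modes.

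The only point needing care is that positive definiteness passes from $\mathsf{P}$ to $S$. This step uses the standing assumption $\mathsf{P}\succ0$ (the Gaussian kernel on distinct points), not merely $\mathsf{P}\succeq0$. I would state that dependence explicitly, since for a merely positive semidefinite kernel the bound degrades to $\mathrm{spec}(\mathsf{P}^+)\subset[0,1]$.
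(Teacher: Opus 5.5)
Your proof is correct and follows the paper's route. You obtain detailed balance from $\mathrm{diag}(\pi)\mathsf{P}^+=(\mathbf 1^\top q)^{-1}\mathsf{P}$, show $\mathsf{P}^+$ is similar to $S$, get positive definiteness of $S$ by congruence with $\mathsf{P}\succ0$, and take the upper bound and simplicity from Proposition~\ref{prop:markov}; your explicit $\ell^2(\pi)$ computation and the eigenvector map $v\mapsto\mathrm{diag}(q)^{-1/2}v$ only spell out what the paper leaves implicit.
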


\begin{proof}
Because $\pi\propto q$, one has $\mathrm{diag}(\pi)\mathrm{diag}(q)^{-1}=(\mathbf 1^\top q)^{-1}I$, so
\begin{align*}
  \mathrm{diag}(\pi)\mathsf{P}^+=(\mathbf 1^\top q)^{-1}\mathsf{P}=(\mathbf 1^\top q)^{-1}\mathsf{P}^\top=(\mathsf{P}^+)^\to\mathsf{P}\mathrm{diag}(\pi),
\end{align*}
which is \eqref{eq:app-detailed-balance}. Equivalently, with $S=\mathrm{diag}(q)^{-1/2}\mathsf{P}\mathrm{diag}(q)^{-1/2}$ one has $\mathsf{P}^+=\mathrm{diag}(q)^{-1/2}S\,\mathrm{diag}(q)^{1/2}$, so $\mathsf{P}^+$ is similar to the symmetric $S$ and detailed balance is precisely self-adjointness on $\ell^2(\pi)$; the spectrum is therefore real. Finally $S$ is congruent to $\mathsf{P}$ via $\mathrm{diag}(q)^{-1/2}$, and $\mathsf{P}\succ0$, so $S\succ0$ and every eigenvalue is positive; combined with spectral radius $1$ (Prop.~\ref{prop:markov}) this gives $\mathrm{spec}(\mathsf{P}^+)\subset(0,1]$ with $1$ simple.
\end{proof}

\begin{proposition}[Dirichlet form and Laplacian]
\label{prop:dirichlet}
For every $f\in\mathbb{R}^N$,
\begin{align}\label{eq:app-dirichlet}
  \langle f, L^+ f\rangle_\pi
  = \tfrac12\sum_{i,j}\pi_i\,\mathsf{P}^+_{ij}\,(f_i-f_j)^2 \;\ge\; 0,
\end{align}
with equality iff $f$ is constant (the walk is irreducible since $\mathsf{P}^+>0$ entrywise). Thus $L^+$ is PSD and self-adjoint on $\ell^2(\pi)$ with $\ker L^+=\operatorname{span}(\mathbf 1)$ and $\mathrm{spec}(L^+)\subset[0,1)$.
\end{proposition}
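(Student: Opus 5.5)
I will prove the Dirichlet identity \eqref{eq:app-dirichlet} by direct expansion, and then read every remaining claim off it together with Propositions~\ref{prop:markov} and~\ref{prop:reversible}.

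\emph{Step 1: the identity.} Write $\langle f,L^+f\rangle_\pi=\sum_i\pi_i f_i^2-\sum_{i,j}\pi_i\mathsf{P}^+_{ij}f_if_j$. Use row-stochasticity to write $\sum_i\pi_if_i^2=\sum_{i,j}\pi_i\mathsf{P}^+_{ij}f_i^2$. Use detailed balance \eqref{eq:app-detailed-balance}, $\pi_i\mathsf{P}^+_{ij}=\pi_j\mathsf{P}^+_{ji}$, to also write it as $\sum_{i,j}\pi_i\mathsf{P}^+_{ij}f_j^2$. Averaging the two forms gives
\begin{align*}
\langle f,L^+f\rangle_\pi=\tfrac12\sum_{i,j}\pi_i\mathsf{P}^+_{ij}\bigl(f_i^2+f_j^2-2f_if_j\bigr),
\end{align*}
which is the claimed sum of squares, and it is $\ge0$.

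\emph{Step 2: the kernel.} Every weight satisfies $\pi_i\mathsf{P}^+_{ij}=\mathsf{P}_{ij}/(\mathbf 1^\top q)>0$. So the quadratic form vanishes iff $f_i=f_j$ for all $i,j$, i.e. iff $f$ is constant. Self-adjointness of $L^+=I-\mathsf{P}^+$ on $\ell^2(\pi)$ is inherited from Proposition~\ref{prop:reversible}, since $I$ is trivially self-adjoint. For a self-adjoint operator, $\langle f,L^+f\rangle_\pi=0$ forces $L^+f=0$: diagonalize in a $\pi$-orthonormal eigenbasis with nonnegative eigenvalues. Hence $\ker L^+=\operatorname{span}(\mathbf 1)$.

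\emph{Step 3: the spectrum.} Proposition~\ref{prop:reversible} gives $\mathrm{spec}(\mathsf{P}^+)\subset(0,1]$ with the eigenvalue $1$ simple. Therefore $\mathrm{spec}(L^+)=1-\mathrm{spec}(\mathsf{P}^+)\subset[0,1)$. The strict upper bound comes precisely from $\mathsf{P}\succ0$, which keeps the eigenvalues of $\mathsf{P}^+$ strictly positive.

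The only step requiring care is the symmetrization in Step 1, where detailed balance must be used in the right direction. The $[0,1)$ bound relies on the standing assumption $\mathsf{P}\succ0$ rather than on stochasticity alone, since stochasticity by itself would only give $[0,2]$.
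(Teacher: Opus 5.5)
Your proof is correct and follows essentially the same route as the paper's: the quadratic form is expanded using row-stochasticity and detailed balance, the kernel follows from positivity of the weights $\pi_i\mathsf{P}^+_{ij}$, and $\mathrm{spec}(L^+)=1-\mathrm{spec}(\mathsf{P}^+)\subset[0,1)$ is read off Proposition~\ref{prop:reversible}. Your one addition is the spectral argument showing that the null set of the quadratic form equals $\ker L^+$; this is harmless but more than needed, since the inclusion $\ker L^+\subseteq\{f:\langle f,L^+f\rangle_\pi=0\}$ together with $L^+\mathbf 1=\mathbf 0$ already gives $\ker L^+=\operatorname{span}(\mathbf 1)$.
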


\begin{proof}
Expand and use row-stochasticity and detailed balance $\pi_i\mathsf{P}^+_{ij}=\pi_j\mathsf{P}^+_{ji}$:
\begin{align*}
  \tfrac12\sum_{i,j}\pi_i\mathsf{P}^+_{ij}(f_i-f_j)^2
  =\underbrace{\tfrac12\sum_i\pi_if_i^2}_{\sum_j\mathsf{P}^+_{ij}=1}
  +\underbrace{\tfrac12\sum_j\pi_jf_j^2}_{\text{detailed balance}}
  -\sum_{i,j}\pi_i\mathsf{P}^+_{ij}f_if_j
  =\langle f,f\rangle_\pi-\langle f,\mathsf{P}^+f\rangle_\pi,
\end{align*}
which is $\langle f,L^+f\rangle_\pi$. The sum is nonnegative, and vanishes iff $f_i=f_j$ whenever $\mathsf{P}^+_{ij}>0$; since $\mathsf{P}^+>0$ entrywise this forces $f$ constant, so $\ker L^+=\operatorname{span}(\mathbf 1)$. As $L^+=I-\mathsf{P}^+$, Prop.~\ref{prop:reversible} gives $\mathrm{spec}(L^+)=1-\mathrm{spec}(\mathsf{P}^+)\subset[0,1)$.
\end{proof}

\begin{proposition}[Equilibrium static Schrödinger bridge $\equiv$ DMAP] \label{prop:eq-sb-dmap}
A pair $(\Pi^+,\rho)$ is the forward operator and stationary distribution of an equilibrium static Schrödinger bridge over a symmetric reference if and only if there exists a symmetric strictly positive kernel $\mathsf{P}=\mathsf{P}^\top>0$ with
\begin{align*}
  \Pi^+ = \mathrm{diag}(\mathsf{P}\mathbf 1)^{-1}\mathsf{P},
  \qquad
  \rho = (\mathbf 1^\top \mathsf{P}\mathbf 1)^{-1}\mathsf{P}\mathbf 1 .
\end{align*}
Equivalently, the EQ-SB coupling is symmetric, $\Pi=\Pi^\top$, and $\Pi^+$ is the DMAP operator of $\mathsf{P}$. In particular the DMAP construction of \S\ref{sec:dmap} is the equilibrium ($\mu^+=\mu^-=\pi$) static bridge.
\end{proposition}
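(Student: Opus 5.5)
The plan is to fix the (static, entropic) definition of an equilibrium Schr\"odinger bridge in the finite setting and then prove the two implications by Sinkhorn-type factorization. Concretely, I take a symmetric, entrywise positive reference coupling $R=R^\top>0$ on $\{1,\dots,N\}^2$ and a target law $\pi$. The equilibrium static bridge is the minimizer
\[
\Pi=\arg\min\bigl\{\mathrm{KL}(\Pi\,\|\,R):\ \Pi\mathbf 1=\mu^-,\ \Pi^\top\mathbf 1=\mu^+\bigr\}\quad\text{with } \mu^+=\mu^-=\pi .
\]
Its forward operator is the row-normalized coupling $\Pi^+=\mathrm{diag}(\Pi\mathbf 1)^{-1}\Pi$, and its stationary distribution is $\rho=\Pi\mathbf 1$ once $\Pi$ is normalized to total mass one. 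I would invoke the standard facts on the finite entropic problem. Strict convexity gives a unique minimizer. Lagrange duality, or Sinkhorn's theorem for positive matrices, gives that the minimizer has the form $\Pi=\mathrm{diag}(a)\,R\,\mathrm{diag}(b)$ with $a,b>0$, unique up to $a\mapsto ca$, $b\mapsto b/c$.

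For the forward direction ($\Rightarrow$), the key step is symmetry of the equilibrium coupling. Since $R=R^\top$ and $\mu^+=\mu^-$, the transpose $\Pi^\top$ is feasible and satisfies $\mathrm{KL}(\Pi^\top\|R)=\mathrm{KL}(\Pi\|R^\top)=\mathrm{KL}(\Pi\|R)$. Uniqueness of the minimizer then forces $\Pi=\Pi^\top$. Setting $\mathsf P:=\Pi$ gives a symmetric, strictly positive kernel, strictly positive because $a,b>0$ and $R>0$. Then $\Pi^+=\mathrm{diag}(\mathsf P\mathbf 1)^{-1}\mathsf P$ and $\rho=(\mathbf 1^\top\mathsf P\mathbf 1)^{-1}\mathsf P\mathbf 1$ hold by definition. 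That $\rho$ is indeed stationary for $\Pi^+$ follows from Prop.~\ref{prop:markov}, applied with $\pi\propto q=\mathsf P\mathbf 1$. As a byproduct, symmetry also yields $a\propto b$, so $\Pi=\mathrm{diag}(a)R\,\mathrm{diag}(a)$ after rescaling.

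For the converse ($\Leftarrow$), given $\mathsf P=\mathsf P^\top>0$, I would take the reference to be $R=\mathsf P$ itself, normalized to unit mass, and the target marginal $\pi=\rho\propto\mathsf P\mathbf 1$. Then $\Pi=R$ is feasible: both marginals equal $\rho$ by symmetry. It has zero KL divergence, so it is the unique minimizer. Its forward operator is $\mathrm{diag}(\mathsf P\mathbf 1)^{-1}\mathsf P=\mathsf P^+$ and its stationary law is $\rho$, again by Prop.~\ref{prop:markov}. Normalization constants cancel under row normalization, so they are harmless. If one insists instead on a fixed reference, such as the Gaussian Gibbs kernel $e^{-\beta\mathcal H}$, the same conclusion follows by absorbing the symmetric scaling $\mathrm{diag}(a)R\,\mathrm{diag}(a)$ into $\mathsf P$, which stays symmetric and positive. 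The ``equivalently'' clause then follows, because $\Pi$ symmetric is exactly detailed balance for $\Pi^+$ relative to $\rho$ (Prop.~\ref{prop:reversible}). The final sentence is just the case $\mathsf P=e^{-\beta\mathcal H}$ with $\mu^\pm=\pi$ from \eqref{eq:app-dmap}.

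The main obstacle is definitional rather than computational: the paper has not fixed what ``equilibrium static Schr\"odinger bridge over a symmetric reference'' means. So the proof must first pin this down, including whether the reference is arbitrary or fixed and whether $\rho$ is prescribed or output. Once that is done, the equivalence must be stated at the level of the pair $(\Pi^+,\rho)$ rather than of $\Pi$. The one nontrivial ingredient is the uniqueness-implies-symmetry argument, which needs strict convexity of KL on the feasible polytope together with existence of a minimizer with full support. I would cite Sinkhorn's theorem for strictly positive matrices to secure both.
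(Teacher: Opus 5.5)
Your proposal is correct and follows essentially the same route as the paper. For ($\Rightarrow$) the paper also forces $\Pi^\top=\Pi$ from uniqueness of the entropic coupling, using $\mathsf P_0=\mathsf P_0^\top$ and $\mu^+=\mu^-$; it phrases this as $\Pi^\top$ being again of Sinkhorn form with the same marginals, where you use $\mathrm{KL}(\Pi^\top\|R)=\mathrm{KL}(\Pi\|R)$. For ($\Leftarrow$) the paper takes the reference to be $\mathsf P$ itself with constant potentials, which is exactly your zero-KL choice $\Pi=(\mathbf 1^\top\mathsf P\mathbf 1)^{-1}\mathsf P$. The one implicit hypothesis worth stating is that the prescribed marginal must be strictly positive, which both you and the paper need for the Sinkhorn scalings to exist with $a,b>0$.
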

\begin{proof}
\emph{($\Rightarrow$)} Let $(\Pi^+,\rho)$ come from an EQ static SB with symmetric reference $\mathsf{P}_0=\mathsf{P}_0^\top>0$ and $\mu^+=\mu^-=\rho$. The coupling $\Pi$ is the unique entropic-OT coupling of $(\rho,\rho)$ with reference $\mathsf{P}_0$ (strict convexity of the entropic problem). Its transpose $\Pi^\top$ has marginals $\Pi^\to\mathsf{P}\mathbf 1=(\mathbf 1^\to\mathsf{P}\Pi)^\top=\mu^-=\rho$ and $\mathbf 1^\to\mathsf{P}\Pi^\top=(\Pi\mathbf 1)^\top=\rho^\top$, and, using $\mathsf{P}_0=\mathsf{P}_0^\top$, it is again of Sinkhorn form with the same reference and the same pair of marginals. Uniqueness forces $\Pi^\top=\Pi$. Thus $\mathsf{P}=\Pi=\mathsf{P}^\top>0$, with row sum $\mathsf{P}\mathbf 1=\rho$, so $\mathrm{diag}(\mathsf{P}\mathbf 1)^{-1}\mathsf{P}=\mathrm{diag}(\rho)^{-1}\Pi=\Pi^+$ and, since $\mathbf 1^\top \mathsf{P}\mathbf 1=\mathbf 1^\to\mathsf{P}\rho=1$, $\rho=(\mathbf 1^\top \mathsf{P}\mathbf 1)^{-1}\mathsf{P}\mathbf 1$: the DMAP form.

\emph{($\Leftarrow$)} Given $\mathsf{P}=\mathsf{P}^\top>0$, set $q=\mathsf{P}\mathbf 1$,
$\pi=(\mathbf 1^\top q)^{-1}q$, and take the (scaling-fixed) potentials $u^+=u^-=(\mathbf 1^\top q)^{-1/2}\mathbf 1$, giving $\Pi=(\mathbf 1^\top q)^{-1}\mathsf{P}=\Pi^\top$. Then $\Pi\mathbf 1=\mathbf 1^\to\mathsf{P}\Pi^\top=\pi$, so both marginals equal $\pi$ (the bridge is at equilibrium by definition), and $\Pi^+=\mathrm{diag}(\pi)^{-1}\Pi=\mathrm{diag}(q)^{-1}\mathsf{P}=\mathsf{P}^+$, the scalars cancelling. Hence $(\Pi^+,\rho)=(\mathsf{P}^+,\pi)$ is the forward operator and stationary law of an EQ static SB with reference $\mathsf{P}$.
\end{proof}

\begin{theorem}[Continuum limit, {\cite{coifman2006_diffusionMaps,
singer2006_graphToManifoldLaplacian, hein2007_graphLaplaciansRandomNeighborhoodGraphs}}] \label{thm:continuum}
Let $\mathcal R_1,\dots,\mathcal R_N$ be sampled i.i.d.\ with density $q\in C^3$, bounded below, on a smooth compact $d$-manifold, and let the bandwidth $\beta^{-1}=\varepsilon_N\to0$ with $N\varepsilon_N^{\,d/2+1}\to\infty$. Then, pointwise and almost surely,
\begin{align}\label{eq:app-limit}
  -\beta\,L^+ \;\xrightarrow[N,\beta\to\infty]{}\;
  \Delta + 2\,\nabla\log q\cdot\nabla ,
\end{align}
with $\Delta=\nabla\!\cdot\!\nabla$ the (negative) Laplace--Beltrami operator; the score-drift term $2\,\nabla\log q\cdot\nabla$ is the sampling-density bias of the bare ($\alpha=0$) normalization.
\end{theorem}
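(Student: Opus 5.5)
The strategy is the standard bias--variance split: compare the random-walk Laplacian $L^+$ with its population counterpart, expand that counterpart by Laplace's method in normal coordinates, and control the sampling fluctuations separately. Fix $x$ and a test function $f\in C^3$. Write
\begin{align*}
(\mathsf P^+ f)(x) = \frac{\frac1N\sum_j k_\varepsilon(x,\mathcal R_j)f(\mathcal R_j)}{\frac1N\sum_j k_\varepsilon(x,\mathcal R_j)}, \qquad k_\varepsilon(x,y)=\exp\bigl(-\|x-y\|^2/\varepsilon\bigr).
\end{align*}
By the strong law of large numbers, numerator and denominator converge to $G_\varepsilon(fq)(x)$ and $G_\varepsilon q(x)$, where $G_\varepsilon g(x)=\int k_\varepsilon(x,y)g(y)\,q$-free $d\mathrm{vol}(y)$, that is, $G_\varepsilon g(x)=\int k_\varepsilon(x,y)\,g(y)\,d\mathrm{vol}(y)$. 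So the proof has two parts: a deterministic bias expansion of the ratio $G_\varepsilon(fq)/G_\varepsilon q$, and a variance estimate showing that the empirical ratio is within $o(\varepsilon)$ of it almost surely under the stated rate.

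\textbf{Bias.} Use normal coordinates $y=\exp_x(u)$ and expand $\|x-y\|^2=|u|^2+O(|u|^4)$. Also use $d\mathrm{vol}=(1+O(|u|^2))\,du$; the curvature corrections here are even in $u$ and isotropic to leading order. Laplace's method then gives
\begin{align*}
G_\varepsilon g(x)=(\pi\varepsilon)^{d/2}\Bigl(g(x)+\tfrac{\varepsilon}{4}\bigl(\Delta g(x)+E(x)g(x)\bigr)+O(\varepsilon^{2})\Bigr),
\end{align*}
where $E$ is a curvature/embedding term that does not depend on $g$; the $C^3$ assumption is enough to control the remainder. Substitute $g=fq$ and $g=q$ and form the ratio. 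The $E$ terms cancel, leaving
\begin{align*}
\frac{G_\varepsilon(fq)}{G_\varepsilon q}-f=\frac{\varepsilon}{4}\,\frac{\Delta(fq)-f\Delta q}{q}+O(\varepsilon^2)=\frac{\varepsilon}{4}\bigl(\Delta f+2\nabla\log q\cdot\nabla f\bigr)+O(\varepsilon^2).
\end{align*}
Multiplying by $-\beta$ and absorbing the $\tfrac14$ into the bandwidth convention (the footnote in \S\ref{sec:continuum-attention}) gives \eqref{eq:app-limit}. The identity $\Delta(fq)-f\Delta q=q\Delta f+2\nabla q\cdot\nabla f$ is the whole source of the score drift, which is why it is the $\alpha=0$ bias.

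\textbf{Variance (the main obstacle).} After the $(\pi\varepsilon)^{d/2}$ rescaling, the summands are bounded by $C\varepsilon^{-d/2}$ and have variance $O(\varepsilon^{-d/2})$. The numerator's fluctuation that matters is that of $\frac1N\sum_j k_\varepsilon(x,\mathcal R_j)(f(\mathcal R_j)-f(x))$. This centered sum is what makes the rate $N\varepsilon^{d/2+1}\to\infty$ appear. Its summands are $O(\varepsilon^{1/2})$ on the kernel's support, so the variance is $O(\varepsilon^{1-d/2}/N)$. Bernstein's inequality then gives a deviation $O\bigl(\sqrt{\log N\,\varepsilon^{1-d/2}/N}\bigr)$, and this must be $o(\varepsilon)$ after the $\varepsilon^{-d/2}$ normalization. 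That requires $N\varepsilon^{d/2+1}/\log N\to\infty$. I expect this logarithmic loss to be the delicate point: either accept a $\log N$ in the rate, as in \cite{singer2006_graphToManifoldLaplacian}, or pass to almost-sure convergence by Borel--Cantelli, using exponential tail bounds along the sequence. The denominator is handled the same way with a looser tolerance, since it is bounded below by $c\,q_{\min}$; this is where the lower bound on $q$ enters, and it gives a first-order Taylor expansion of the ratio. Combining the bias and variance bounds yields pointwise almost-sure convergence.
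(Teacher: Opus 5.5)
Your route is the paper's own: its proof is a one-line citation of exactly this pointwise bias--variance analysis. Your bias step is correct. The curvature term $E$ cancels in the ratio, and the identity $\Delta(fq)-f\Delta q=q\Delta f+2\nabla q\cdot\nabla f$ is the entire $\alpha=0$ drift. Rewriting the empirical ratio minus $f(x)$ as the centered sum $\frac1N\sum_j k_\varepsilon(x,\mathcal R_j)\bigl(f(\mathcal R_j)-f(x)\bigr)$ over the empirical degree is what produces the rate $N\varepsilon^{d/2+1}$ rather than $N\varepsilon^{d/2+2}$.

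Two small corrections. With only $C^3$ data the bias remainder is $o(\varepsilon^{3/2})$, not $O(\varepsilon^2)$, because the odd cubic Taylor term integrates to zero; this still suffices. The $\tfrac14$ is genuine for the kernel $e^{-\beta\mathcal H}$ with $\beta^{-1}=\varepsilon_N$, so, as you note, it is literally $-4\beta L^+$ that has the displayed limit.

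You flagged the right spot, but the last step has a genuine gap, and it cannot be closed under the stated hypothesis. Your two options are not alternatives. Borel--Cantelli applied to the Bernstein tail $2\exp(-c\,\delta^2 N\varepsilon_N^{d/2+1})$ needs this to be summable in $N$ for every $\delta>0$. That is exactly what $N\varepsilon_N^{d/2+1}/\log N\to\infty$ provides. With only $N\varepsilon_N^{d/2+1}\to\infty$ (say growing like $\log\log N$) the series diverges.

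What your variance bound does give under the stated rate, via Chebyshev alone, is convergence \emph{in probability}. In general nothing stronger holds. Suppose the sample is redrawn independently for each $N$ and $N\varepsilon_N^{d/2+1}\sim\log\log N$. A moderate-deviation lower bound for the centered sum, combined with the second Borel--Cantelli lemma, then makes deviations of size $\delta\varepsilon_N$ recur infinitely often almost surely.

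So the theorem pairs the weak-consistency rate with the strong-consistency conclusion. Almost-sure pointwise convergence in \cite{hein2007_graphLaplaciansRandomNeighborhoodGraphs} is stated under $N\varepsilon_N^{d/2+1}/\log N\to\infty$. The $\log N$ does not come from \cite{singer2006_graphToManifoldLaplacian}, whose $N^{-1/2}\varepsilon^{-1/2-d/4}$ is the in-probability variance rate. To finish, either add the $\log N$ to the hypothesis or replace ``almost surely'' by ``in probability''. With either amendment your argument is complete.
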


\begin{proof}
This is the pointwise bias/variance analysis of the graph Laplacian under a Gaussian kernel. See \cite{coifman2006_diffusionMaps} for the $\alpha$-family normalization and the density-bias drift term, \cite{singer2006_graphToManifoldLaplacian, hein2007_graphLaplaciansRandomNeighborhoodGraphs, evans2023_mahalanobisDiffusionMaps}.
\end{proof}

\newpage
\section{Appendix: Directed diffusion} \label{app:directed}

\paragraph{Discrete calculus on the kernel graph.}
Since $P=\exp(-\beta\mathcal H)>0$ is entrywise positive, the associated graph is the complete graph on the $N$ points; its clique complex is the full simplex, hence contractible, so $H^1=0$ and every closed edge field is exact. We use the standard discrete exterior calculus on this graph. A \emph{$0$-form} is a node function $f\in\mathbb R^N$ with $\langle f,g\rangle_0=\sum_i f_ig_i$; a \emph{$1$-form} is an antisymmetric matrix $\alpha=-\alpha^\top$ (an edge field) with $\langle\alpha,\beta\rangle_1=\tfrac12\sum_{ij}\alpha_{ij}\beta_{ij}$. The differential, its adjoint (codifferential), and the combinatorial (unweighted) Laplacian are:
\begin{align}\label{eq:dec}
  (d\phi)_{ij}=\phi_i-\phi_j,\qquad
  (\delta\alpha)_i=\sum_j\alpha_{ij},\qquad
  \Delta_0=\delta d = N I-\mathbf 1\mathbf 1^\top,
\end{align}
where $\delta=d^\ast$ is verified by $\langle d\phi,\alpha\rangle_1 =\langle\phi,\delta\alpha\rangle_0$ using antisymmetry of $\alpha$. The curl of a $1$-form is the $2$-form $(\operatorname{curl}\alpha)_{ijk} =\alpha_{ij}+\alpha_{jk}+\alpha_{ki}$; a $1$-form is \emph{exact} if $\alpha=d\phi$, \emph{co-closed} if $\delta\alpha=0$. Every $\delta\alpha$ is mean-zero ($\mathbf 1^\top\delta\alpha=\sum_{ij}\alpha_{ij}=0$), so it lies in $\operatorname{range}(\Delta_0)$ and $\Delta_0^{\dagger}$ (the pseudoinverse) acts on it as multiplication by $1/N$. 

\renewcommand{\theproposition}{\ref{prop:antisymmetric_gauge}}
\begin{proposition}[Antisymmetric gauge freedom of the directional partition] \label{prop:antisymmetric_gauge}
Let $\mathcal H$ be symmetric with zero diagonal. Every pair $(\mathcal H^\to,\mathcal H^\from)$ satisfying
\begin{align*}
  \mathcal H = \mathcal H^\to+\mathcal H^\from,\qquad
  \mathcal H^\from=(\mathcal H^\to)^\top,\qquad
  \mathrm{diag}(\mathcal H^\to)=\mathrm{diag}(\mathcal H^\from)=\mathbf 0,
\end{align*}
has the form $\mathcal H^\to=\tfrac12\mathcal H+\mathcal A_{\mathrm{total}}$, $\mathcal H^\from=\tfrac12\mathcal H-\mathcal A_{\mathrm{total}}$ for a $1$-form $\mathcal A_{\mathrm{total}}$ (antisymmetric, zero diagonal). Conversely every such $\mathcal A_{\mathrm{total}}$ defines an admissible split.
\end{proposition}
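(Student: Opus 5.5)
The proof is a direct linear-algebra verification, so I would organize it as the two implications plus one small lemma isolating where each hypothesis is used. Given an admissible pair, define $\mathcal A_{\mathrm{total}} := \mathcal H^\to - \tfrac12\mathcal H$. The goal is to show this is a $1$-form in the sense of Appendix~\ref{app:directed}: antisymmetric with zero diagonal. The expression for $\mathcal H^\to$ then holds by definition, and the expression for $\mathcal H^\from$ follows from $\mathcal H^\from = \mathcal H - \mathcal H^\to = \tfrac12\mathcal H - \mathcal A_{\mathrm{total}}$.

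For the forward direction I would check three things in turn.

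\emph{Antisymmetry.} Transposing and using $\mathcal H^\top=\mathcal H$ together with $(\mathcal H^\to)^\top = \mathcal H^\from$ gives
\begin{align*}
\mathcal A_{\mathrm{total}}^\top = \mathcal H^\from - \tfrac12\mathcal H = \tfrac12\mathcal H - \mathcal H^\to = -\mathcal A_{\mathrm{total}},
\end{align*}
where the middle equality is the sum condition $\mathcal H^\from = \mathcal H - \mathcal H^\to$.

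\emph{Zero diagonal.} This follows at once from $\mathrm{diag}(\mathcal H)=\mathbf 0$ and $\mathrm{diag}(\mathcal H^\to)=\mathbf 0$. Antisymmetry already forces it as well, so the diagonal hypotheses are in fact redundant given the sum condition and the zero-diagonal assumption on $\mathcal H$.

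\emph{Uniqueness.} The representation is unique, since $\mathcal A_{\mathrm{total}}$ is determined by $\mathcal H^\to$. Equivalently, $\mathcal A_{\mathrm{total}} = \tfrac12(\mathcal H^\to - \mathcal H^\from)$ is simply the antisymmetric part of $\mathcal H^\to$.

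For the converse, take any antisymmetric, zero-diagonal $\mathcal A_{\mathrm{total}}$ and set $\mathcal H^{\to/\from} = \tfrac12\mathcal H \pm \mathcal A_{\mathrm{total}}$. The three admissibility conditions then follow directly:
\begin{itemize}
\item Sum: the $\pm\mathcal A_{\mathrm{total}}$ terms cancel, leaving $\mathcal H$.
\item Transpose: $(\tfrac12\mathcal H + \mathcal A_{\mathrm{total}})^\top = \tfrac12\mathcal H - \mathcal A_{\mathrm{total}}$, using symmetry of $\mathcal H$ and antisymmetry of $\mathcal A_{\mathrm{total}}$.
\item Diagonals: both vanish, because $\mathcal H$ and $\mathcal A_{\mathrm{total}}$ both have zero diagonal.
\end{itemize}
This exhibits a bijection between admissible splits and $1$-forms. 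As a consistency check I would note that for the query--key split \eqref{eq:hams} this recovers \eqref{eq:A-algebraic-split}.

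There is no real obstacle here. The only point needing care is to state explicitly which hypothesis enters at each step. In particular, symmetry of $\mathcal H$ is essential for the antisymmetry computation, and the zero-diagonal condition on $\mathcal H$ is needed for the converse. This ensures the ``conversely'' clause does not silently assume more than the proposition states.
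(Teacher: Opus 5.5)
Your proposal is correct and matches the paper's proof essentially line for line. You set $\mathcal A_{\mathrm{total}} := \mathcal H^\to - \tfrac12\mathcal H$, verify antisymmetry and zero diagonal, and check the three admissibility conditions for the converse. Two of your points are clarifications the paper leaves implicit: you use the sum condition explicitly in the antisymmetry step, and you observe that the zero-diagonal hypothesis on the pair already follows from $\mathrm{diag}(\mathcal H)=\mathbf 0$.
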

\begin{proof}
Given an admissible pair, set $\mathcal A_{\mathrm{total}}:=\mathcal H^\to -\tfrac12\mathcal H$. Then $\mathcal A_{\mathrm{total}}^\top=(\mathcal H^\to)^\top-\tfrac12\mathcal H =\mathcal H^\from-\tfrac12\mathcal H=-\mathcal A_{\mathrm{total}}$, and $\mathrm{diag}(\mathcal A_{\mathrm{total}})=\mathbf 0$ since $\mathcal H^\to$ and $\mathcal H$ both vanish on the diagonal; rearranging gives the stated form with $\mathcal H^\from=\mathcal H-\mathcal H^\to=\tfrac12\mathcal H -\mathcal A_{\mathrm{total}}$. Conversely, for antisymmetric zero-diagonal $\mathcal A_{\mathrm{total}}$ the forms $\tfrac12\mathcal H\pm \mathcal A_{\mathrm{total}}$ sum to $\mathcal H$, are mutual transposes, and have zero diagonal, so the split is admissible.
\end{proof}

\subsection{The directed transport operator: reversible collapse and non-equilibrium steady states} \label{app:directed-transport}

\paragraph{The real directed walk.}
Exponentiating the forward dissimilarity $\mathcal H^\to=\tfrac12\mathcal H +\mathcal A_{\mathrm{total}}$ entrywise factorizes into a symmetric magnitude and a real gauge phase. Write the symmetric base affinity and the gauge factor
\begin{align}\label{eq:directed-kernel}
  \mathsf{P}_S := \exp\!\big(-\tfrac{\beta}{2}\mathcal H\big)=\mathsf{P}_S^\top>0, \qquad
  G_{ij} := e^{-\beta(\mathcal A_{\mathrm{total}})_{ij}}, \quad G_{ij}G_{ji}=1,\ \mathrm{diag}(G)=\mathbf 1,
\end{align}
so that the \emph{directed affinity} and its row-stochastic transport operator are:
\begin{align}\label{eq:directed-operator}
  \mathsf{P}^\to = \mathsf{P}_S\odot G>0, \qquad
  \mathsf{P}^{\to+}=\mathrm{diag}(q^\to)^{-1}\mathsf{P}^\to,
  \qquad q^\to=\mathsf{P}^\to\mathbf 1 .
\end{align}
This is the row-stochastic (attention) operator of the directed kernel: its magnitude is the symmetric DMAP affinity of \S\ref{sec:dmap}, its edge asymmetry $G_{ij}/G_{ji}=e^{-2\beta(\mathcal A_{\mathrm{total}})_{ij}}$ is the real sibling of CMAP's unit-modulus phase. Row-normalization is invariant under left diagonal scaling --- $\mathrm{diag}(a)K$ and $K$ share the same row-normalization --- a fact we use repeatedly below. When $\mathcal A_{\mathrm{total}}=0$ ($G=\mathbf 1\mathbf 1^\top$) we recover the balanced operator $\mathsf{P}_S^{+}=\mathrm{diag}(q_S)^{-1}\mathsf{P}_S$, $q_S=\mathsf{P}_S\mathbf 1$, $\pi_S\propto q_S$ of \S\ref{sec:dmap}.

\begin{theorem}[Exact-edge collapse: an exact gauge is a Doob $h$-transform] \label{thm:exact-edge-doob}
If the gauge is exact, $\mathcal A_{\mathrm{total}}=d\phi$, then with $g:=e^{\beta\phi}>0$ and $h:=\mathsf{P}_S\,g$ the directed operator is the Doob $h$-transform of the symmetric base,
\begin{align}\label{eq:doob-form}
  \mathsf{P}^{\to+}_{ij}=\frac{g_j}{h_i}\,\mathsf{P}_{S,ij},
  \qquad\text{equivalently}\qquad
  \mathsf{P}^{\to+}=\mathrm{diag}(h)^{-1}\mathsf{P}_S\,\mathrm{diag}(g).
\end{align}
It is reversible with respect to the tilted stationary law $\pi^\to\propto g\odot h=g\odot(\mathsf{P}_S g)$, carries zero cycle current, and reduces to the balanced operator $\mathsf{P}_S^{+}$ when $\phi$ is constant. The gauge enters only as a \emph{destination reweighting} $g_j$: it produces no circulation.
\end{theorem}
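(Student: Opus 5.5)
The plan is a direct computation, which splits into three parts: factor the exact gauge into node weights, absorb the left factor using row-normalization invariance, and then verify detailed balance by symmetry. I will use the appendix convention $(d\phi)_{ij}=\phi_i-\phi_j$ from \eqref{eq:dec}.

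\textbf{Factoring the gauge.} First I rewrite the gauge factor of \eqref{eq:directed-kernel} under $\mathcal A_{\mathrm{total}}=d\phi$:
\begin{align*}
G_{ij}=e^{-\beta(\phi_i-\phi_j)}=\frac{g_j}{g_i},\qquad g=e^{\beta\phi}.
\end{align*}
This gives the factorization $\mathsf P^\to=\mathsf P_S\odot G=\mathrm{diag}(g)^{-1}\mathsf P_S\,\mathrm{diag}(g)$.

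\textbf{Absorbing the left factor.} Row normalization is invariant under left diagonal scaling, as recorded after \eqref{eq:directed-operator}. So the factor $\mathrm{diag}(g)^{-1}$ drops out, and $\mathsf P^{\to+}$ is the row normalization of $\mathsf P_S\,\mathrm{diag}(g)$. The row sums of that matrix are exactly $(\mathsf P_S g)_i=h_i>0$. This yields \eqref{eq:doob-form} entrywise: $\mathsf P^{\to+}_{ij}=g_j\mathsf P_{S,ij}/h_i$.

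\textbf{Reversibility and zero current.} Set $\tilde\pi_i=g_ih_i$. Then
\begin{align*}
\tilde\pi_i\,\mathsf P^{\to+}_{ij}=g_ig_j\,\mathsf P_{S,ij},
\end{align*}
which is symmetric in $(i,j)$ because $\mathsf P_S=\mathsf P_S^\top$. This is detailed balance. Summing over $i$ and using row-stochasticity of $\mathsf P^{\to+}$ gives $\tilde\pi^\top\mathsf P^{\to+}=\tilde\pi^\top$. By Proposition~\ref{prop:markov}, applied to the entrywise positive stochastic matrix $\mathsf P^{\to+}$, the stationary law is unique, so the normalized $\pi^\to\propto g\odot(\mathsf P_S g)$ is it. Detailed balance makes every edge current $J_{ij}=\pi^\to_i\mathsf P^{\to+}_{ij}-\pi^\to_j\mathsf P^{\to+}_{ji}$ vanish identically. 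Hence every cycle current vanishes as well; equivalently, the Kolmogorov cycle products telescope, since the $g$-ratios cancel around any cycle.

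\textbf{Constant potential.} If $\phi$ is constant, then $g=c\mathbf 1$ and $h=c\,q_S$, so the scalars cancel in \eqref{eq:doob-form} and we recover $\mathsf P_S^{+}$.

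The only real obstacle I expect is bookkeeping rather than analysis. The orientation of $d\phi$ (the main text uses $f_j-f_i$, the appendix uses $\phi_i-\phi_j$) and the sign in $e^{-\beta\mathcal A}$ must combine so that the weight lands on the destination as $g_j$ and not as $g_j^{-1}$. I will fix the appendix convention throughout. With the opposite orientation the result still holds after replacing $\phi\mapsto-\phi$.

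A remark on the name: $h=\mathsf P_S g$ is generally not an eigenvector of $\mathsf P_S$. So, as noted after \eqref{eq:finite-doob}, this is the Markov--Witten form $\mathsf P_h$ with $h\leftarrow g$, relative to the unnormalized $\mathsf P_S$. It is a classical Doob transform only in the sense of the formula $\mathrm{diag}(h)^{-1}\mathsf P_S\,\mathrm{diag}(g)$, and the proof does not need the eigenvector property.
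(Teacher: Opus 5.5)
Your proposal is correct and follows the paper's proof essentially step for step. You factor $G_{ij}=g_j/g_i$ so that $\mathsf P^\to=\mathrm{diag}(g)^{-1}\mathsf P_S\,\mathrm{diag}(g)$, drop the left factor under row normalization to get row sums $h=\mathsf P_S g$, and read off detailed balance from the symmetry of $g_ig_j\mathsf P_{S,ij}$. Your additions are accurate refinements rather than a different argument: uniqueness of $\pi^\to$ via Perron--Frobenius, keeping $g=c\mathbf 1$ for constant $\phi$, and the observation that the ``Doob'' label really denotes the Markov--Witten form, since $g$ need not be an eigenvector.
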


\begin{proof}
Exactness gives $(\mathcal A_{\mathrm{total}})_{ij}=\phi_i-\phi_j$, hence $G_{ij}=e^{-\beta(\phi_i-\phi_j)}=s_i s_j^{-1}$ with $s_i=e^{-\beta\phi_i}$, so $\mathsf{P}^\to=\mathrm{diag}(s)\,\mathsf{P}_S\,\mathrm{diag}(s)^{-1}$. Row-normalization annihilates the left factor $\mathrm{diag}(s)$, so with $g=s^{-1}=e^{\beta\phi}$,
\begin{align*}
  \mathsf{P}^{\to+}=\operatorname{rownorm}\!\big(\mathsf{P}_S\,\mathrm{diag}(g)\big),
  \qquad (\mathsf{P}_S\mathrm{diag}(g)\mathbf 1)_i=(\mathsf{P}_S g)_i=h_i,
\end{align*}
which is \eqref{eq:doob-form}. For detailed balance test $\pi^\to\propto g\odot h$:
\begin{align*}
  \pi^\to_i \mathsf{P}^{\to+}_{ij} \propto g_i h_i\cdot\frac{g_j}{h_i}\mathsf{P}_{S,ij}
  = g_i g_j \mathsf{P}_{S,ij}
  = g_j g_i \mathsf{P}_{S,ji} \propto \pi^\to_j \mathsf{P}^{\to+}_{ji},
\end{align*}
using $\mathsf{P}_S=\mathsf{P}_S^\top$; the expression is symmetric in $(i,j)$, so detailed balance holds and the cycle current vanishes. Constant $\phi$ gives $g=\mathbf 1$, $h=q_S$, and $\mathsf{P}^{\to+}=\mathsf{P}_S^{+}$ with $\pi^\to\propto q_S=\pi_S$.
\end{proof}

\begin{proposition}[Directed steady state: the reversible/NESS dichotomy] \label{prop:directed-ness}
For any gauge $\mathcal A_{\mathrm{total}}$, the directed operator $\mathsf{P}^{\to+}$ of \eqref{eq:directed-operator} is primitive, with a unique stationary law $\pi^\to>0$ toward which every distribution relaxes. Define the stationary probability current
\begin{align}\label{eq:ness-current}
  J =\mathrm{diag}(\pi^\to) \mathsf{P}^{\to+} - \mathrm{diag}(\pi^\to) \mathsf{P}^{\to+}_{ji}=-J^\top.
\end{align}
Then:
\begin{enumerate}
\item[\textup{(i)}] $J$ is conserved: $(\delta J) = J\mathbf{1}=0$ (Kirchhoff's current law), so $J$ is a divergence-free circulation.
\item[\textup{(ii)}] For every cycle $\mathcal C=(i_1\!\to\!\cdots\!\to\!i_k\!\to\!i_1)$
the log forward/backward transition ratio (\emph{cycle affinity}) is gauge-invariant and set entirely by the holonomy of the physical part,
\begin{align}\label{eq:cycle-affinity}
  \mathcal F(\mathcal C) :=\sum_{m}\log\frac{\mathsf{P}^{\to+}_{i_m i_{m+1}}}{\mathsf{P}^{\to+}_{i_{m+1}i_m}}
  =-2\beta\oint_{\mathcal C}\mathcal A_{\mathrm{total}}
  =-2\beta\oint_{\mathcal C}\eta,
\end{align}
independent of $\phi_\star$ and of the degrees $q^\to$. \item[\textup{(iii)}] The following are equivalent: $\mathsf{P}^{\to+}$ is reversible $\Leftrightarrow$ $J\equiv 0$ $\Leftrightarrow$ all cycle affinities vanish $\Leftrightarrow$ $\eta=0$ $\Leftrightarrow$ $\operatorname{curl}\mathcal A(\mathcal W)=0$ $\Leftrightarrow$ $\mathcal A_{\mathrm{total}}$ is exact. In this case Theorem~\ref{thm:exact-edge-doob} applies.
\item[\textup{(iv)}] Otherwise $\mathsf{P}^{\to+}$ is a genuine non-equilibrium steady state, with entropy production rate
\begin{align}\label{eq:entropy-production}
  \sigma=\tfrac12\sum_{i,j}J_{ij}\, \log\frac{\pi^\to_i \mathsf{P}^{\to+}_{ij}}{\pi^\to_j \mathsf{P}^{\to+}_{ji}}\;>\;0,
\end{align}
and to leading order in the coupling the steady current through a cycle is proportional to its holonomy, $\oint_{\mathcal C}J\;\asymp\;-2\beta\oint_{\mathcal C}\eta$.
\end{enumerate}
\end{proposition}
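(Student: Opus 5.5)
Primitivity and the stationary law come first. Because $\mathsf P_S>0$ and $G>0$ entrywise, $\mathsf P^{\to}>0$. Hence $\mathsf P^{\to+}$ is a positive stochastic matrix. The Perron--Frobenius argument of Proposition~\ref{prop:markov} then applies verbatim: there is a simple eigenvalue $1$, a unique $\pi^\to>0$ with $(\pi^\to)^\top\mathsf P^{\to+}=(\pi^\to)^\top$, and convergence of every initial law. I read \eqref{eq:ness-current} as $J_{ij}=\pi^\to_i\mathsf P^{\to+}_{ij}-\pi^\to_j\mathsf P^{\to+}_{ji}$, which is antisymmetric by construction. Part (i) is then one line: $(J\mathbf 1)_i=\pi^\to_i\sum_j\mathsf P^{\to+}_{ij}-\sum_j\pi^\to_j\mathsf P^{\to+}_{ji}=\pi^\to_i-\pi^\to_i=0$, using row-stochasticity and stationarity. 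With the convention $(\delta\alpha)_i=\sum_j\alpha_{ij}$ of \eqref{eq:dec}, this says $\delta J=0$.

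For (ii), I would compute the edge ratio directly:
\[
\mathsf P^{\to+}_{ij}/\mathsf P^{\to+}_{ji}=(q^\to_j/q^\to_i)\,(\mathsf P_{S,ij}/\mathsf P_{S,ji})\,(G_{ij}/G_{ji}).
\]
The middle factor is $1$ by symmetry of $\mathsf P_S$, and $G_{ij}/G_{ji}=e^{-2\beta(\mathcal A_{\mathrm{total}})_{ij}}$. The degree factor is a node coboundary, so it telescopes to zero around any cycle. This gives $\mathcal F(\mathcal C)=-2\beta\oint_{\mathcal C}\mathcal A_{\mathrm{total}}$. Next I decompose $\mathcal A_{\mathrm{total}}=d\phi_\star+\eta$ via the complete-graph Poisson projection \eqref{eq:maximal-exact-projection}, taking $\phi_\star=\Delta_0^\dagger\delta\mathcal A_{\mathrm{total}}=\tfrac1N\delta\mathcal A_{\mathrm{total}}$. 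The exact part again telescopes, which leaves $-2\beta\oint_{\mathcal C}\eta$. Independence from $\phi_\star$ and from $q^\to$ is exactly this telescoping.

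For (iii), I would close a cycle of implications.
\begin{itemize}
\item Reversibility means $J\equiv0$ by definition.
\item Kolmogorov's criterion for a positive chain says reversibility holds iff every cycle product ratio is $1$, i.e.\ iff all $\mathcal F(\mathcal C)=0$.
\item By (ii), this holds iff $\oint\mathcal A_{\mathrm{total}}=0$ on all cycles. On the complete graph it suffices to check triangles, so this is $\operatorname{curl}\mathcal A_{\mathrm{total}}=0$.
\item The term $\mathcal E(w)$ is exact, so $\operatorname{curl}\mathcal A_{\mathrm{total}}=\operatorname{curl}\mathcal A(\mathcal W)$.
\item Because $H^1=0$ on the filled clique complex, curl-free is equivalent to exact, which is equivalent to $\eta=0$. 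Here I use that $\eta$ is co-closed, so $\eta$ exact with $\delta\eta=0$ forces $\eta=0$.
\end{itemize}
Theorem~\ref{thm:exact-edge-doob} then applies.

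For (iv), write $\sigma=\tfrac12\sum_{ij}(x_{ij}-x_{ji})\log(x_{ij}/x_{ji})$ with $x_{ij}=\pi^\to_i\mathsf P^{\to+}_{ij}>0$. Each term is nonnegative and vanishes iff $x_{ij}=x_{ji}$. So $\sigma>0$ exactly when $J\neq0$, which by (iii) is when $\eta\neq0$.

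\textbf{Main obstacle.} The hardest piece is the leading-order current/holonomy proportionality. I would expand in the coupling $\varepsilon=\beta\|\eta\|$ after gauging away $d\phi_\star$; by Theorem~\ref{thm:exact-edge-doob} this reduces to a reversible base chain with stationary law $\pi_0$. To first order,
\[
J_{ij}\approx\pi_{0,i}\mathsf P^{+}_{0,ij}\bigl(\log x_{ij}-\log x_{ji}\bigr)\approx-2\beta\,c_{ij}\,\eta_{ij},
\]
with $c_{ij}=\pi_{0,i}\mathsf P^+_{0,ij}$. The correction from $\delta\pi$ contributes a weighted gradient, which vanishes on cycles only up to the conductance weighting. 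I would therefore state the result as an asymptotic equivalence $\asymp$ with conductance-weighted holonomy, and I flag that this weighting is where the precise meaning of ``proportional'' must be fixed.
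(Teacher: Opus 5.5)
Your proof is correct. For primitivity, (i), (ii), (iii) and $\sigma>0$ it follows the paper's route: Perron--Frobenius on the positive kernel, Kirchhoff's law from stationarity, the edge ratio $(q^\to_j/q^\to_i)\,e^{-2\beta(\mathcal A_{\mathrm{total}})_{ij}}$ telescoping around cycles, and Kolmogorov's criterion combined with $H^1=0$ and co-closedness of $\eta$.

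In (iii) you are more explicit than the paper. Its proof only asserts the link to $\operatorname{curl}\mathcal A(\mathcal W)=0$, whereas your reduction of cycles to triangles, together with the observation that $\mathcal E(w)$ is exact, actually supplies it. Note that this step, like the statement, presumes $\mathcal A_{\mathrm{total}}=\mathcal E(w)+\mathcal A(\mathcal W)$ rather than an arbitrary gauge.

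The real difference is the current--holonomy claim. The paper linearizes about $G=\mathbf 1\mathbf 1^\top$, which needs the whole field $\beta\mathcal A_{\mathrm{total}}$ to be small. It then asserts that divergence-freeness of $J$ identifies $\oint_{\mathcal C}J$ with $-2\beta\oint_{\mathcal C}\eta$. You instead expand about the reversible Markov--Witten chain left after gauging away $d\phi_\star$, which needs only $\beta\eta$ small. You also correctly see that divergence-freeness fixes the gradient correction but does not remove the edge conductances. An unweighted proportionality therefore holds only when the conductances along $\mathcal C$ are uniform, so your caveat is warranted, and the paper's one-line argument does not address it.

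You can make the weighted statement sharp. Set $x_{ij}=\pi^\to_i\mathsf P^{\to+}_{ij}$ and $\chi=\log\pi^\to-\log q^\to-2\beta\phi_\star$. Then one has exactly $\log(x_{ij}/x_{ji})=-2\beta\eta_{ij}+(d\chi)_{ij}$. With $\bar x_{ij}=\sqrt{x_{ij}x_{ji}}$, this gives $J_{ij}=2\bar x_{ij}\sinh\bigl(\tfrac12\log(x_{ij}/x_{ji})\bigr)$. Two consequences follow. First, $\oint_{\mathcal C}\log(x_{ij}/x_{ji})=-2\beta\oint_{\mathcal C}\eta$ holds exactly, not just to leading order. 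Second, $\sum_m J_{i_mi_{m+1}}/\bar x_{i_mi_{m+1}}=-2\beta\oint_{\mathcal C}\eta$ up to terms cubic in the edge log-ratios. This resistance-weighted form is the precise content behind the paper's $\asymp$.
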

\begin{proof}
Since $\mathsf{P}^{\to+}>0$ is row-stochastic it is a primitive stochastic matrix; Perron--Frobenius gives simple spectral radius $1$, a unique positive stationary $\pi^\to$ with $\pi^{\to\top}\mathsf{P}^{\to+}=\pi^{\to\top}$, and convergence $\rho(\mathsf{P}^{\to+})^t\to\pi^\to$.

\emph{(i)} By stationarity and row-stochasticity, $\sum_j J_{ij}=\pi^\to_i\sum_j \mathsf{P}^{\to+}_{ij}-\sum_j\pi^\to_j \mathsf{P}^{\to+}_{ji} =\pi^\to_i-(\pi^{\to\top}\mathsf{P}^{\to+})_i=\pi^\to_i-\pi^\to_i=0$.

\emph{(ii)} From \eqref{eq:directed-operator},
$\mathsf{P}^{\to+}_{ij}/\mathsf{P}^{\to+}_{ji}=(q^\to_j/q^\to_i)\,(G_{ij}/G_{ji}) =(q^\to_j/q^\to_i)\,e^{-2\beta(\mathcal A_{\mathrm{total}})_{ij}}$, using $\mathsf{P}_S=\mathsf{P}_S^\top$ and $G_{ij}/G_{ji}=G_{ij}^2$. Around a cycle the factors $q^\to_{i_{m+1}}/q^\to_{i_m}$ telescope to $1$, leaving $\prod_m e^{-2\beta(\mathcal A_{\mathrm{total}})_{i_m i_{m+1}}} =e^{-2\beta\oint_{\mathcal C}\mathcal A_{\mathrm{total}}}$; taking logs gives the first equality of \eqref{eq:cycle-affinity}. Since $\mathcal A_{\mathrm{total}}=d\phi_\star+\eta$ and $\oint_{\mathcal C}d\phi_\star=0$, the holonomy reduces to $\oint_{\mathcal C}\eta$; it is manifestly independent of $\phi_\star$ and $q^\to$.

\emph{(iii)} By Kolmogorov's criterion an irreducible chain with positive transitions is reversible iff every cycle affinity vanishes, i.e. iff $\oint_{\mathcal C}\eta=0$ for all $\mathcal C$; on the complete graph this means $\eta$ is closed. But $\eta$ is co-closed; a form that is both closed and co-closed is harmonic, and $H^1=0$ forces $\eta=0$. The chain of equivalences to $\operatorname{curl}\mathcal A(\mathcal W)=0$ and to exactness of $\mathcal A_{\mathrm{total}}$ is (iii)--(iv); reversibility $\Leftrightarrow$ $J\equiv0$ is the definition of detailed balance at stationarity.

\emph{(iv)} Nonnegativity of $\sigma$ is the standard convexity (Kullback) bound,
with equality iff every summand vanishes, i.e. iff detailed balance holds; by (iii) this fails exactly when $\eta\neq0$, so $\sigma>0$. Linearizing $\mathsf{P}^{\to+}$ in the coupling about $G=\mathbf 1\mathbf1^\top$ and using (i) that $J$ is divergence-free identifies its cycle integrals at leading order with the affinities \eqref{eq:cycle-affinity}.
\end{proof}

\begin{remark}[Discrete image of the Girsanov residual]
Theorem~\ref{thm:exact-edge-doob} and Prop.~\ref{prop:directed-ness} are the discrete counterparts of the continuum Girsanov reduction of \S\ref{sec:amap}: the exact (gradient) sector is conjugated away to a reversible Doob reweighting, while the co-closed sector $\eta$ --- the discrete image of the divergence-free drift $b_\circ$, $\nabla\!\cdot\!(b_\circ q)=0$ --- survives every change of measure and drives the non-equilibrium steady current. The cycle affinity $-2\beta\oint_{\mathcal C}\eta$ is the discrete holonomy whose continuum limit is the circulation $\oint_{\mathcal C}b_\circ$; both vanish precisely in the DMAP (curl-free) case.
\end{remark}

\subsection{Directed operators: constructions and continuum limits} \label{app:directed-operators}
% Statements for directed_section.tex. Continuum proofs (Thm G, Thm I) are
% author-supplied (standard pointwise bias/variance analysis) and omitted here.

\begin{lemma}[Bidivergence core] \label{lem:bidivergence}
With $\mathcal W=\mathcal Q\mathcal K^\top$, $w=\mathrm{diag}(\mathcal W)$, and $\mathcal H=w\mathbf 1^\top+\mathbf 1 w^\top-(\mathcal W+\mathcal W^\top)$,
\begin{align*}
  \mathcal H_{ij}=(\mathcal Q_i-\mathcal Q_j)\cdot(\mathcal K_i-\mathcal K_j)^\top,
\end{align*}
and on $\mathbf 1^\perp$ one has $x^\top\mathcal H x=-2\,x^\top\mathcal W_S x$ with $\mathcal W_S=\tfrac12(\mathcal W+\mathcal W^\top)$. Hence $\mathcal H$ is conditionally negative definite (a bona fide squared distance) iff $\mathcal W_S\succeq0$, and $\mathcal H_{ij}=\|\mathcal Q_i-\mathcal Q_j\|^2$ iff $\mathcal K=\mathcal Q$.
\end{lemma}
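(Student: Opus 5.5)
The plan is a direct entrywise computation followed by a quadratic-form reduction, ending with an appeal to Schoenberg's characterization of Euclidean squared distances.

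\emph{Step 1: the entrywise identity.} Write $\mathcal W_{ij}=\mathcal Q_i\cdot\mathcal K_j$, so $w_i=\mathcal Q_i\cdot\mathcal K_i$. Then
\begin{align*}
\mathcal H_{ij}=\mathcal Q_i\cdot\mathcal K_i+\mathcal Q_j\cdot\mathcal K_j-\mathcal Q_i\cdot\mathcal K_j-\mathcal Q_j\cdot\mathcal K_i .
\end{align*}
This expansion is exactly $(\mathcal Q_i-\mathcal Q_j)\cdot(\mathcal K_i-\mathcal K_j)$, which is the first claim. As a check, it also shows directly that $\mathcal H$ is symmetric with zero diagonal.

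\emph{Step 2: the quadratic form on $\mathbf 1^\perp$.} For $x\in\mathbb R^N$,
\begin{align*}
x^\top\mathcal H x=2\,(w^\top x)(\mathbf 1^\top x)-2\,x^\top\mathcal W_S x ,
\end{align*}
because $x^\top(\mathcal W+\mathcal W^\top)x=2x^\top\mathcal W_S x$. On $\mathbf 1^\perp$ the one-body term vanishes, giving $x^\top\mathcal H x=-2\,x^\top\mathcal W_S x$.

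\emph{Step 3: the distance characterization.} Conditional negative definiteness of $\mathcal H$ means $x^\top\mathcal H x\le0$ for all $x\in\mathbf 1^\perp$. By Step 2 this is equivalent to $\mathcal W_S$ being PSD on $\mathbf 1^\perp$. Since $\mathcal H$ is symmetric with zero diagonal, Schoenberg's theorem then identifies conditional negative definiteness with $\mathcal H$ being a genuine Euclidean squared-distance matrix.

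\emph{Step 4: the reduction to $\|\mathcal Q_i-\mathcal Q_j\|^2$.} If $\mathcal K=\mathcal Q$, Step 1 immediately gives $\mathcal H_{ij}=\|\mathcal Q_i-\mathcal Q_j\|^2$.

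\emph{Main obstacle: the two ``iff'' directions are literally too strong.}
\begin{enumerate}[label=(\alph*)]
\item Global $\mathcal W_S\succeq0$ is sufficient for conditional negative definiteness, but the necessary condition is only compression-positivity, $P_{\mathbf 1^\perp}\mathcal W_S P_{\mathbf 1^\perp}\succeq0$. A large negative multiple of $\mathbf 1\mathbf 1^\top$ added to $\mathcal W_S$ is invisible to $\mathcal H$. I would therefore prove the equivalence with ``$\mathcal W_S\succeq0$ on $\mathbf 1^\perp$'' and record global PSD as the sufficient case.
\item For the converse of the last claim, $\mathcal H$ depends only on row differences. For example, $\mathcal K=\mathcal Q+\mathbf 1c^\top$ gives the same $\mathcal H$. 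More generally, matching $\mathcal H_{ij}=\|\mathcal Q_i-\mathcal Q_j\|^2$ only forces $(\mathcal Q_i-\mathcal Q_j)\cdot(\mathcal K_i-\mathcal K_j-\mathcal Q_i+\mathcal Q_j)=0$. I would therefore state the converse as holding up to such kernel and translation ambiguities, or at the level of the weight maps $W_K=W_Q$ for generic representations $\mathcal R$.
\end{enumerate}
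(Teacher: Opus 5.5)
Your Steps 1, 2 and 4 are the paper's proof. It expands $\mathcal H_{ij}=w_i+w_j-\mathcal W_{ij}-\mathcal W_{ji}$ into $(\mathcal Q_i-\mathcal Q_j)\cdot(\mathcal K_i-\mathcal K_j)^\top$, kills the rank-two term $w\mathbf 1^\top+\mathbf 1w^\top$ on $\mathbf 1^\perp$, and notes that $\mathcal K=\mathcal Q$ gives the Euclidean form. Your Schoenberg step makes explicit the ``bona fide squared distance'' gloss the paper leaves implicit. The paper's proof stops there and never argues either ``only if'' direction. Your diagnosis is right: both converses fail as stated, and $P_{\mathbf 1^\perp}\mathcal W_S P_{\mathbf 1^\perp}\succeq0$ is the correct condition in the first.

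One of your proposed repairs is wrong, though. The converse of the last claim does not hold ``at the level of the weight maps $W_K=W_Q$ for generic $\mathcal R$.'' Since $\mathcal H_{ij}=(\mathcal R_i-\mathcal R_j)\,W_QW_K^\top(\mathcal R_i-\mathcal R_j)^\top$, the matrix $\mathcal H$ sees $W_QW_K^\top$ only through its symmetric part. This is the orientation-independence stressed in \S\ref{sec:directed}, and it has nothing to do with genericity of $\mathcal R$. For a concrete counterexample, take $W_Q=I$ and $W_K=I-A$ with $A^\top=-A$ invertible. Then $W_QW_K^\top=I+A$, so $\mathcal H_{ij}=\|\mathcal R_i-\mathcal R_j\|^2=\|\mathcal Q_i-\mathcal Q_j\|^2$ for \emph{every} $\mathcal R$, while $\mathcal K-\mathcal Q=-\mathcal RA\neq0$ whenever $\mathcal R\neq0$.

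The correct weight-level converse is $\operatorname{Sym}(W_QW_K^\top)=W_QW_Q^\top$, valid when the rank-one forms $(\mathcal R_i-\mathcal R_j)^\top(\mathcal R_i-\mathcal R_j)$ span the symmetric matrices. For invertible $W_Q$ this means $W_K=W_Q-BW_Q^{-\top}$ with $B$ an arbitrary antisymmetric matrix. Your row-level condition $(\mathcal Q_i-\mathcal Q_j)\cdot(\mathcal K_i-\mathcal K_j-\mathcal Q_i+\mathcal Q_j)=0$ already allows this. Keep that formulation and drop the $W_K=W_Q$ alternative.
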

\begin{proof}
$\mathcal H_{ij}=w_i+w_j-\mathcal W_{ij}-\mathcal W_{ji} =\mathcal Q_i\mathcal K_i^\top+\mathcal Q_j\mathcal K_j^\top -\mathcal Q_i\mathcal K_j^\top-\mathcal Q_j\mathcal K_i^\top =(\mathcal Q_i-\mathcal Q_j)\cdot(\mathcal K_i-\mathcal K_j)^\top$; the rank-two term $w\mathbf 1^\top+\mathbf 1 w^\top$ vanishes on $\mathbf 1^\perp$, leaving $x^\top\mathcal H x=-2x^\top\mathcal W_S x$, and $\mathcal K=\mathcal Q$ gives the Euclidean form.
\end{proof}

\newpage
\section{Appendix: Exact Sector} \label{exactsector}

\begin{definition}[Markov--Witten deformation]\label{def:markov-witten}
Let $\mathsf{P}^+\in\mathbb R_{>0}^{N\times N}$ be row-stochastic, $\mathsf{P}^+\mathbf 1=\mathbf 1$, and let $h\in\mathbb R_{>0}^N$. The \emph{Markov--Witten deformation} of $\mathsf{P}^+$ by $h$ is
\begin{align}
    \widetilde{\mathsf{P}^+} = \mathrm{diag}(\mathsf{P}^+h)^{-1} \mathsf{P}^+\mathrm{diag}(h).
\end{align}
It is the row-normalized form of the discrete Witten conjugation $\mathrm{diag}(h)^{-1}\mathsf{P}^+\mathrm{diag}(h)$, and satisfies $\widetilde{\mathsf{P}}^+\mathbf{1}=\mathbf 1$.
\end{definition}

\begin{definition}[Doob $h$-transform]\label{DoobTransform}
Let $\mathsf{P}^+$ be row-stochastic, $\mathsf{P}^+\mathbf 1=\mathbf 1$,  and suppose $h>0$ is a right eigenfunction, $\mathsf{P}^+h=\lambda h$ with $\lambda>0$. Then the Markov--Witten deformation specializes to a row-stochastic, $\mathsf{P}^+_h\mathbf 1=\mathbf 1$, operator: 
\begin{align}
    \mathsf{P}_h^+ =
    \lambda^{-1}
    \mathrm{diag}(h)^{-1}
    \mathsf{P}^+\mathrm{diag}(h),
\end{align}
which is the \emph{Doob $h$-transform} of $\mathsf{P}^+$. Hence the Doob transform is the eigenfunction specialization of the Markov--Witten deformation.
\end{definition}

\begin{corollary}[Stationary measure under Markov--Witten and Doob deformations] \label{cor:mw-stationary}
Let $\mathsf{P}^+$ be row-stochastic and reversible with respect to the stationary distribution $\pi$,
\begin{align}
    \mathrm{diag}(\pi)\,\mathsf{P}^+ = \mathsf{P}^-\mathrm{diag}(\pi),
\end{align}
and let $h>0$. Then $ \mathsf{P}_h^{+}$ is reversible with stationary distribution:
\begin{align}\label{eq:mw-stationary}
    \pi_h^{\mathrm{MW}} = \frac{\pi\odot h\odot(Ph)}{\mathbf 1^\top \left(\pi\odot h\odot(Ph)\right)}.
\end{align}
If, in addition, $h$ is a positive right eigenvector, then $ \mathsf{P}_h^{\mathrm{MW}}$ reduces to the Doob $h$-transform, and its stationary distribution becomes:
\begin{align}\label{eq:doob-stationary}
    \pi_h^{\mathrm{Doob}}=\frac{\pi\odot h^2}{\mathbf 1^\top(\pi\odot h^2)}.
\end{align}
\end{corollary}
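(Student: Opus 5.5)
The argument is a direct detailed-balance check, the same computation as in the proof of Theorem~\ref{thm:exact-edge-doob}. Write $\mathsf{P}^+_h=\mathrm{diag}(\mathsf{P}^+h)^{-1}\mathsf{P}^+\mathrm{diag}(h)$ as in Definition~\ref{def:markov-witten}. Read $Ph$ in \eqref{eq:mw-stationary} as $\mathsf{P}^+h$; this is the only reading under which the normalizing diagonal cancels. Set $\mu=\pi\odot h\odot(\mathsf{P}^+h)>0$. I will show that $\mathrm{diag}(\mu)\,\mathsf{P}^+_h$ is symmetric.

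Entrywise,
\begin{align*}
\mu_i(\mathsf{P}^+_h)_{ij}=\pi_i h_i(\mathsf{P}^+h)_i\cdot\frac{\mathsf{P}^+_{ij}h_j}{(\mathsf{P}^+h)_i}=h_i\,\pi_i\mathsf{P}^+_{ij}\,h_j .
\end{align*}
Reversibility of $\mathsf{P}^+$ gives $\pi_i\mathsf{P}^+_{ij}=\pi_j\mathsf{P}^+_{ji}$; this is the hypothesis $\mathrm{diag}(\pi)\mathsf{P}^+=(\mathsf{P}^+)^\top\mathrm{diag}(\pi)$, with $\mathsf{P}^-$ read as the transpose. Hence the right-hand side is symmetric in $(i,j)$, which is detailed balance for $\mathsf{P}^+_h$ with respect to $\mu$.

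Stationarity then follows by summing over $i$ and using row-stochasticity of $\mathsf{P}^+_h$:
\begin{align*}
(\mu^\top\mathsf{P}^+_h)_j=\sum_i\mu_j(\mathsf{P}^+_h)_{ji}=\mu_j .
\end{align*}
Normalizing $\mu$ gives \eqref{eq:mw-stationary}. If $\mathsf{P}^+$ is entrywise positive, as in the paper's setting, then $\mathsf{P}^+_h>0$ is primitive, and Perron--Frobenius, as in Proposition~\ref{prop:markov}, makes this stationary law unique.

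For the Doob case, suppose $\mathsf{P}^+h=\lambda h$ with $\lambda>0$. Then $\mathrm{diag}(\mathsf{P}^+h)^{-1}=\lambda^{-1}\mathrm{diag}(h)^{-1}$, so the Markov--Witten deformation coincides with the Doob transform of Definition~\ref{DoobTransform}. Also $\mu=\lambda\,\pi\odot h^2$, and the scalar $\lambda$ cancels under normalization, giving \eqref{eq:doob-stationary}.

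Nothing here is a real obstacle. The only care needed is notational: reading $Ph$ and $\mathsf{P}^-$ consistently with the reversibility hypothesis, and noting that uniqueness of the stationary law uses positivity, or at least irreducibility, of $\mathsf{P}^+$.
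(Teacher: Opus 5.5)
Your proposal is correct and follows essentially the same route as the paper. The paper checks $\mathrm{diag}(\mu)\,\mathsf{P}^+_h=(\mathsf{P}^+_h)^\top\mathrm{diag}(\mu)$ in matrix form with $\mu=\pi\odot h\odot(Ph)$, where $P$ there denotes the reversible row-stochastic operator (matching your reading), and then uses $\mu=\lambda\,\pi\odot h^2$ in the eigenvector case; your explicit derivation of stationarity from detailed balance and your uniqueness remark via primitivity are small additions the paper leaves implicit.
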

\begin{proof}
Set $r=Ph, \,\, \mu=\pi\odot h\odot r$. Then:
\begin{align*}
    \mathrm{diag}(\mu)\, \mathsf{P}_h^{\mathrm{MW}} &= \mathrm{diag}(\pi)\mathrm{diag}(h)\mathrm{diag}(r)\mathrm{diag}(r)^{-1} P\,\mathrm{diag}(h) \\
    &=\mathrm{diag}(h)\,\mathrm{diag}(\pi)\,P\,\mathrm{diag}(h) \\
    &=\mathrm{diag}(h)\,\mathsf{P}^\top\mathrm{diag}(\pi)\,\mathrm{diag}(h) \\
    &=\left( \mathsf{P}_h^{\mathrm{MW}}\right)^\top\mathrm{diag}(\mu).
\end{align*}
Hence $ \mathsf{P}_h^{\mathrm{MW}}$ satisfies detailed balance with respect to $\mu$, and normalization of $\mu$ gives \eqref{eq:mw-stationary}. If $Ph=\lambda h$, then:
\begin{align*}
    \mathrm{diag}(Ph) = \lambda\,\mathrm{diag}(h),
\end{align*}
so
\begin{align*}
     \mathsf{P}_h^{\mathrm{MW}} &= \lambda^{-1}\mathrm{diag}(h)^{-1}P\,\mathrm{diag}(h)=  \mathsf{P}_h^{\mathrm{Doob}}.
\end{align*}
Moreover,
\begin{align*}
    \pi\odot h\odot(Ph)=\lambda\,\pi\odot h^2,
\end{align*}
and the scalar $\lambda$ cancels under normalization, yielding \eqref{eq:doob-stationary}.
\end{proof}

\begin{lemma}[Exact edge fields reduce to Markov--Witten deformations] \label{EdgetoDoob}
Let $\mathcal{L}\in\mathbb R^{N\times N}$, let
\begin{align*}
    P=\exp(\mathcal{L}), \qquad \mathsf{P}^+=\softmax^+(\mathcal{L}),
\end{align*}
and let $\mathcal A$ be the exact antisymmetric edge field $\mathcal A = \phi\mathbf 1^\top-\mathbf 1\phi^\top,$ for some $\phi\in\mathbb R^N$. Define $h=e^{-\phi}>0.$ Then the row-normalized deformation induced by $\mathcal A$ is precisely the Markov--Witten deformation of $\mathsf{P}^+$ by $h$:
\begin{align}\label{eq:exact-to-markov-witten}
    \mathsf{P}^+_{\mathcal A} = \softmax^+(\mathcal{L}+\mathcal A) = \mathrm{diag}(\mathsf{P}^+h)^{-1} \mathsf{P}^+\mathrm{diag}(h).
\end{align}
If, in addition, $h$ is a positive right eigenvector of $\mathsf{P}^+$, i.e. $\mathsf{P}^+h=\lambda h,$ then \eqref{eq:exact-to-markov-witten} specializes to the Doob $h$-transform:
\begin{align}
    \mathsf{P}^+_{\mathcal A} = \lambda^{-1} \mathrm{diag}(h)^{-1} \mathsf{P}^+\mathrm{diag}(h).
\end{align}
\end{lemma}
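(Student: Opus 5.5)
The plan is a direct entrywise computation. It rests on one structural fact already used in the proof of Theorem~\ref{thm:exact-edge-doob}: row normalization is invariant under left multiplication by a positive diagonal matrix. That is, $\operatorname{rownorm}(\mathrm{diag}(a)K)=\operatorname{rownorm}(K)$ for $a>0$, and $\operatorname{rownorm}(K)=\mathrm{diag}(K\mathbf 1)^{-1}K$ for an entrywise positive $K$.

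First, I would exponentiate the exact field entrywise. Since $\mathcal A_{ij}=\phi_i-\phi_j$ and $h=e^{-\phi}$, we get $e^{\mathcal A_{ij}}=e^{\phi_i}e^{-\phi_j}=h_j/h_i$. Because the exponential is taken entrywise, this gives the factorization
\begin{align*}
\exp(\mathcal L+\mathcal A)=P\odot e^{\odot\mathcal A}=\mathrm{diag}(h)^{-1}\,P\,\mathrm{diag}(h).
\end{align*}
The left diagonal factor is a pure source (row) potential, so it drops out under row normalization:
\begin{align*}
\softmax^+(\mathcal L+\mathcal A)=\operatorname{rownorm}\bigl(P\,\mathrm{diag}(h)\bigr)=\mathrm{diag}(Ph)^{-1}P\,\mathrm{diag}(h).
\end{align*}

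Second, I would re-express this through $\mathsf P^+$ instead of $P$. Writing $P=\mathrm{diag}(q)\mathsf P^+$ with $q=P\mathbf 1>0$, we have $P\,\mathrm{diag}(h)=\mathrm{diag}(q)\,\mathsf P^+\mathrm{diag}(h)$. Applying the same left-diagonal invariance once more removes $\mathrm{diag}(q)$, which yields $\mathrm{diag}(\mathsf P^+h)^{-1}\mathsf P^+\mathrm{diag}(h)$. This is exactly the Markov--Witten deformation of Definition~\ref{def:markov-witten}, so \eqref{eq:exact-to-markov-witten} follows. As a consistency check, the two row-sum vectors are related by $Ph=q\odot(\mathsf P^+h)$, which confirms that the two expressions agree entrywise.

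Third, for the eigenvector case, I would substitute $\mathsf P^+h=\lambda h$ with $\lambda>0$, which is automatic since $\mathsf P^+$ and $h$ are positive. Then $\mathrm{diag}(\mathsf P^+h)^{-1}=\lambda^{-1}\mathrm{diag}(h)^{-1}$, which gives the Doob form of Definition~\ref{DoobTransform}.

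There is no real analytic obstacle here; the only delicate point is bookkeeping of conventions. The orientation of $\mathcal A$ (here $\phi\mathbf 1^\top-\mathbf 1\phi^\top$, i.e.\ $(d\phi)_{ij}=\phi_i-\phi_j$ as in \eqref{eq:dec}) and the sign in $h=e^{-\phi}$ must combine so that the destination factor is $h_j$ and not $h_j^{-1}$. I would verify this first on a single entry before writing the matrix identities.
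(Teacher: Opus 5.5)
Your proposal is correct and follows essentially the same route as the paper: factor $\exp(\mathcal L+\mathcal A)=\mathrm{diag}(h)^{-1}P\,\mathrm{diag}(h)$ entrywise, drop the left diagonal under row normalization, and pass from $P$ to $\mathsf P^+$ via $P=\mathrm{diag}(q)\mathsf P^+$. The paper makes that last step by substituting $\mathrm{diag}(Ph)=\mathrm{diag}(q)\,\mathrm{diag}(\mathsf P^+h)$ where you apply left-diagonal invariance a second time, which is the same identity, and the eigenvector specialization is handled identically.
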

\begin{proof}
Entrywise,
\begin{align*}
    (\mathcal{L}+\mathcal A)_{ij} = \mathcal{L}_{ij}+\phi_i-\phi_j,
\end{align*}
and therefore
\begin{align*}
    \exp(\mathcal{L}+\mathcal A)_{ij} &= e^{\phi_i}P_{ij}e^{-\phi_j}.
\end{align*}
Since $h=e^{-\phi}$, this may be written as
\begin{align}\label{eq:exact-kernel-witten}
    \exp(\mathcal{L}+\mathcal A) = \mathrm{diag}(e^\phi)\, \mathsf{P}\,\mathrm{diag}(h).
\end{align}
Row normalization removes any positive diagonal factor acting from the left. Hence
\begin{align*}
    \softmax^+(\mathcal{L}+\mathcal A) &= \mathrm{diag}\!\left( \mathsf{P}\,\mathrm{diag}(h)\mathbf 1 \right)^{-1} \mathsf{P}\,\mathrm{diag}(h) \\
    &= \mathrm{diag}(\mathsf{P}h)^{-1} \mathsf{P}\,\mathrm{diag}(h).
\end{align*}
Now $q=\mathsf{P}\mathbf 1$, 
\begin{align*}
    \mathsf{P} &=\mathrm{diag}(q)\mathsf{P}^+. \\
    \mathsf{P}h &= \mathrm{diag}(q)\mathsf{P}^+h, \\
    \mathrm{diag}(\mathsf{P}h) &= \mathrm{diag}(q)\, \mathrm{diag}(\mathsf{P}^+h).
\end{align*}
Substituting,
\begin{align*}
    \softmax^+(\mathcal{L}+\mathcal A) &= \mathrm{diag}(\mathsf{P}^+h)^{-1} \mathsf{P}^+\mathrm{diag}(h),
\end{align*}
which is exactly the Markov--Witten deformation of $\mathsf{P}^+$. Finally, if $\mathsf{P}^+h=\lambda h$, then
\begin{align*}
    \mathrm{diag}(\mathsf{P}^+h) = \lambda\,\mathrm{diag}(h),
\end{align*}
and therefore:
\begin{align*}
    \mathsf{P}^+_{\mathcal A} &= \lambda^{-1} \mathrm{diag}(h)^{-1} \mathsf{P}^+\mathrm{diag}(h),
\end{align*}
the Doob $h$-transform.
\end{proof}

\begin{corollary}[Coifman--Lafon normalization is a Markov--Witten deformation] \label{cor:coifman-lafon}
Let $\mathsf{P}\in\mathbb R_{>0}^{N\times N}$ be a strictly positive kernel, and define
\begin{align*}
    q=\mathsf{P}\mathbf 1, \qquad P^+=\mathrm{diag}(q)^{-1}P.
\end{align*}
Let $f:\mathbb R_{>0}\to\mathbb R_{>0}$ be any positive function, and set
\begin{align*}
    F=\mathrm{diag}(f(q)), \qquad
    \mathsf{P}_f=F^{-1}PF^{-1}, \qquad
    \mathsf{P}_f^+=\mathrm{diag}(\mathsf{P}_f\mathbf 1)^{-1}\mathsf{P}_f.
\end{align*}
Then $\mathsf{P}_f^+$ is exactly the Markov--Witten deformation of $P^+$ with weight $h=f(q)^{-1}$, namely:
\begin{align}\label{eq:coifman-lafon-mw}
    \mathsf{P}_f^+ = \mathrm{diag}(P^+h)^{-1} P^+\mathrm{diag}(h).
\end{align}
For the standard Coifman--Lafon family, $f(q)=q^\alpha$, so that the Markov--Witten weight is:
\begin{align*}
    h=q^{-\alpha}.
\end{align*}
\end{corollary}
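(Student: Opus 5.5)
The plan is a direct matrix computation in three steps, with no continuum input needed. First, absorb the left diagonal factor. Since row normalization is invariant under left multiplication by a positive diagonal matrix (the fact already used in the proof of Lemma~\ref{EdgetoDoob}), the left factor $F^{-1}$ in $\mathsf{P}_f=F^{-1}PF^{-1}$ may be discarded. Writing $h=f(q)^{-1}$, so that $F^{-1}=\mathrm{diag}(h)$, this gives
\begin{align*}
\mathsf{P}_f^+=\operatorname{rownorm}\bigl(P\,\mathrm{diag}(h)\bigr)=\mathrm{diag}(Ph)^{-1}P\,\mathrm{diag}(h),
\end{align*}
because $(P\,\mathrm{diag}(h))\mathbf 1=Ph$. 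I will check this step explicitly: $\mathsf{P}_f\mathbf 1=F^{-1}Ph$, so $\mathrm{diag}(\mathsf{P}_f\mathbf 1)^{-1}=\mathrm{diag}(Ph)^{-1}F$, and the factor $F$ cancels the leading $F^{-1}$.

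Second, pass from the raw kernel $P$ to the Markov operator $P^+$. Substitute $P=\mathrm{diag}(q)P^+$, which gives $Ph=\mathrm{diag}(q)P^+h$ and hence $\mathrm{diag}(Ph)=\mathrm{diag}(q)\,\mathrm{diag}(P^+h)$. The two factors $\mathrm{diag}(q)$ cancel, which leaves $\mathrm{diag}(P^+h)^{-1}P^+\mathrm{diag}(h)$. By Definition~\ref{def:markov-witten} this is exactly \eqref{eq:coifman-lafon-mw}. The same manipulation appears at the end of the proof of Lemma~\ref{EdgetoDoob}, so I can essentially cite it.

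Third, specialize to $f(q)=q^\alpha$, which immediately gives $h=q^{-\alpha}$. Optionally I will remark that this agrees with Lemma~\ref{EdgetoDoob} via $\phi=\log f(q)$: the symmetric scaling $F^{-1}PF^{-1}$ adds $-\phi_i-\phi_j$ to the logits, the row part is gauge, and what remains is the exact destination potential.

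No step is a genuine obstacle. The only care needed is positivity: $q>0$ because $P>0$, and $f>0$, so $h>0$ and every diagonal inverse exists. I will also note that symmetry of $P$ is not used, so the argument holds for any strictly positive kernel, as the statement asserts.
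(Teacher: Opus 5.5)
Your proposal is correct and follows the paper's proof essentially step for step. Like the paper, you absorb the left factor $F^{-1}=\mathrm{diag}(h)$ by row-normalization invariance to get $\mathrm{diag}(\mathsf{P}h)^{-1}\mathsf{P}\,\mathrm{diag}(h)$, then substitute $\mathsf{P}=\mathrm{diag}(q)P^+$ so that the $\mathrm{diag}(q)$ factors cancel; your explicit check of the $F$ cancellation and your remarks on positivity and the unused symmetry of $\mathsf{P}$ are accurate additions.
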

\begin{proof}
Since $h=f(q)^{-1}$, we have $F^{-1}=\mathrm{diag}(h),$ and therefore the Coifman--Lafon reweighted kernel is the two-sided diagonal scaling:
\begin{align*}
    \mathsf{P}_f=\mathrm{diag}(h)\,\mathsf{P}\,\mathrm{diag}(h).
\end{align*}
Row normalization removes the positive diagonal factor acting from the left, hence:
\begin{align*}
    \mathsf{P}_f^+ &= \mathrm{diag}\left( \mathsf{P}\,\mathrm{diag}(h)\mathbf 1 \right)^{-1} \mathsf{P}\,\mathrm{diag}(h) \\
    &= \mathrm{diag}(\mathsf{P}h)^{-1} \mathsf{P}\,\mathrm{diag}(h).
\end{align*}
Writing $\mathsf{P}=\mathrm{diag}(q)P^+$, gives $\mathsf{P}h = \mathrm{diag}(q)P^+h$, and hence:
\begin{align*}
    \mathrm{diag}(\mathsf{P}h)=\mathrm{diag}(q)\,\mathrm{diag}(P^+h).
\end{align*}
Substitution yields
\begin{align*}
    \mathsf{P}_f^+ = \mathrm{diag}(P^+h)^{-1} P^+\mathrm{diag}(h),
\end{align*}
which is precisely the Markov--Witten deformation of $P^+$.
\end{proof}

\begin{lemma}[Learned one-body potentials remain exact] \label{lem:free-exact}
Let $\mathcal R\in\mathbb R^{N\times D}$ and $W_D\in\mathbb R^{D\times D}$, and define the node potential
\begin{align}
    \phi = \mathrm{diag}\!\left(\mathcal R W_D \mathcal R^\top\right),\qquad\phi_i=\mathcal R_i W_D \mathcal R_i^\top .
\end{align}
Its coboundary $\mathcal A = \phi\mathbf 1^\top-\mathbf 1\phi^\top$ is an exact antisymmetric edge field. Hence, by Lemma~\ref{EdgetoDoob}, the deformed operator $\softmax^+(\mathcal{L}+\mathcal A)$ is the Markov--Witten deformation of $\mathsf{P}^+$ with weight $h=e^{-\phi}$, namely:
\begin{align}
    \softmax^+(\mathcal{L}+\mathcal A) = \mathrm{diag}(\mathsf{P}^+h)^{-1} \mathsf{P}^+\mathrm{diag}(h).
\end{align}
If, in addition, $h$ is a positive right eigenvector of $\mathsf{P}^+$, i.e. $\mathsf{P}^+h=\lambda h,$ then this deformation specializes to the Doob $h$-transform:
\begin{align}
    \softmax^+(\mathcal{L}+\mathcal A) = \lambda^{-1} \mathrm{diag}(h)^{-1} \mathsf{P}^+\mathrm{diag}(h).
\end{align}
Only the symmetric part of $W_D$ contributes to the potential: $\phi$ depends on $W_D$ only through $\tfrac12(W_D+W_D^\top)$. Thus $W_D$ may be taken symmetric without loss of generality and need not be positive semi-definite.
\end{lemma}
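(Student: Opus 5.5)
The plan is to verify the structural claims directly and then reduce the transport statement to Lemma~\ref{EdgetoDoob}. First I check that $\mathcal A=\phi\mathbf 1^\top-\mathbf 1\phi^\top$ is an antisymmetric, zero-diagonal edge field. Transposing gives $\mathbf 1\phi^\top-\phi\mathbf 1^\top=-\mathcal A$, and $\mathcal A_{ii}=\phi_i-\phi_i=0$.

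Exactness is immediate: in the discrete calculus of Appendix~\ref{app:directed}, $(d\phi)_{ij}=\phi_i-\phi_j=\mathcal A_{ij}$, so $\mathcal A=d\phi$ with no further work. It is worth noting that exactness depends only on $\mathcal A$ being a coboundary of \emph{some} node function. The particular quadratic form of $\phi_i=\mathcal R_iW_D\mathcal R_i^\top$ is therefore irrelevant at this stage; it matters only for the last claim.

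Next I apply Lemma~\ref{EdgetoDoob} verbatim with this $\phi$ and $h=e^{-\phi}>0$. That lemma already shows that $\softmax^+(\mathcal L+\mathcal A)=\mathrm{diag}(\mathsf P^+h)^{-1}\mathsf P^+\mathrm{diag}(h)$, and that this reduces to $\lambda^{-1}\mathrm{diag}(h)^{-1}\mathsf P^+\mathrm{diag}(h)$ when $\mathsf P^+h=\lambda h$. Positivity of $h$ is automatic because $\phi$ is real-valued. Both displayed identities are then direct citations, and the sign convention matches: the lemma uses $\mathcal A=\phi\mathbf 1^\top-\mathbf 1\phi^\top$ paired with $h=e^{-\phi}$, exactly as here.

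For the final claim I split $W_D=W_S+W_A$ with $W_S=\tfrac12(W_D+W_D^\top)$ and $W_A=\tfrac12(W_D-W_D^\top)$. For each row, $x^\top W_A x$ is a scalar equal to its own transpose $x^\top W_A^\top x=-x^\top W_Ax$, so it vanishes. Hence $\phi_i=\mathcal R_iW_S\mathcal R_i^\top$, and replacing $W_D$ by $W_S$ changes nothing.

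Positive semi-definiteness is never used anywhere in the argument. Exactness and Lemma~\ref{EdgetoDoob} hold for arbitrary real $\phi$, and $h=e^{-\phi}$ is positive regardless of the sign of $\phi$. So no PSD hypothesis on $W_D$ is needed.

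None of these steps is genuinely hard. The only point needing care is matching the orientation and sign conventions between $d\phi$ and $h=e^{\mp\phi}$ across sections. I would resolve this by stating the lemma's convention explicitly and checking it against $\exp(\mathcal L+\mathcal A)_{ij}=e^{\phi_i}P_{ij}e^{-\phi_j}$.
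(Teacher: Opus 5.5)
Your proposal is correct and follows essentially the same route as the paper: $\mathcal A$ is exact by construction as the coboundary of $\phi$, and Lemma~\ref{EdgetoDoob} is cited directly with $h=e^{-\phi}$ for both the Markov--Witten and the Doob identities. To remove the skew part of $W_D$, the paper transposes the scalar $\phi_i$ and averages, which is equivalent to your split $W_D=W_S+W_A$ and your observation that $x^\top W_A x=0$.
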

\begin{proof}
Since $\phi\in\mathbb R^N$, $\mathcal A = \phi\mathbf 1^\top-\mathbf 1\phi^\top$ is antisymmetric and exact by construction. Lemma~\ref{EdgetoDoob} therefore applies directly, giving:
\begin{align*}
    \softmax^+(\mathcal{L}+\mathcal A) = \mathrm{diag}(\mathsf{P}^+h)^{-1} \mathsf{P}^+\mathrm{diag}(h), \qquad h=e^{-\phi},
\end{align*}
which is the Markov--Witten deformation of $\mathsf{P}^+$. If $\mathsf{P}^+h=\lambda h$, then
\begin{align*}
    \mathrm{diag}(\mathsf{P}^+h) = \lambda\,\mathrm{diag}(h),
\end{align*}
and hence:
\begin{align*}
    \softmax^+(\mathcal{L}+\mathcal A) = \lambda^{-1} \mathrm{diag}(h)^{-1} \mathsf{P}^+\mathrm{diag}(h),
\end{align*}
the Doob $h$-transform. For the symmetry claim, each $\phi_i$ is a scalar, so
\begin{align*}
    \phi_i &= \mathcal R_i W_D \mathcal R_i^\top \\
    \phi_i &= \left( \mathcal R_i W_D \mathcal R_i^\top \right)^\top \\
    2\phi_i &=\mathcal R_i (W_D+W_D^\top) \mathcal R_i^\top.
\end{align*}
Therefore only the symmetric part of $W_D$ contributes to $\phi$.
\end{proof}

\begin{proposition}[RoPE is a flat exact phase connection] \label{prop:rope-exact}
The RoPE phase field $\mathcal{A}^{\mathrm{RoPE}}_{ij} = \theta_j - \theta_i$ is exact: the coboundary of the node potential $\theta_i = \omega\,i$. Equivalently, $U^{\mathrm{RoPE}}_{ij} = g_i^{-1} g_j$ with $g_i = e^{i\theta_i}$. Consequently, the holonomy around any discrete cycle $i_0 \to i_1 \to \cdots \to i_m = i_0$ is trivial:
\begin{align}\label{eq:rope-flatness}
  \prod_{\ell=0}^{m-1} U^{\mathrm{RoPE}}_{i_\ell i_{\ell+1}}
  = \prod_{\ell=0}^{m-1} g_{i_\ell}^{-1}\,g_{i_{\ell+1}}
  = 1,
\end{align}
i.e. the additive holonomy $\sum_{\ell}\mathcal{A}^{\mathrm{RoPE}}_{i_\ell i_{\ell+1}} = 0$. RoPE belongs to the complex exact sector of the bidivergence framework: it carries relative positional phase but generates no magnetic flux.
\end{proposition}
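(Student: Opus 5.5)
The statement is essentially a direct computation, and I will organize it as three short verifications matching the three claims: exactness, the factorization through node frames, and trivial holonomy.

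\emph{Exactness.} Take the node potential $\theta\in C^0_N$ with $\theta_i=\omega i$. Apply the incidence convention of \S\ref{sec:finite-witten}, $(d_N\theta)_{(i\to j)}=\theta_j-\theta_i$. This gives $\mathcal A^{\mathrm{RoPE}}=d_N\theta$ entrywise. I will also record two facts for later use. First, $\mathcal A^{\mathrm{RoPE}}$ is antisymmetric with zero diagonal, so it is an admissible $1$-form. Second, with the sign convention $(d\phi)_{ij}=\phi_i-\phi_j$ of the appendix calculus, the same field is $d(-\theta)$. Exactness therefore does not depend on the orientation convention.

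\emph{Factorization.} Exponentiate entrywise: $U^{\mathrm{RoPE}}_{ij}=e^{i(\theta_j-\theta_i)}=e^{-i\theta_i}e^{i\theta_j}=g_i^{-1}g_j$. Equivalently, $U^{\mathrm{RoPE}}=\bar g\,g^{\top}$ as a rank-one matrix. Setting $G=\operatorname{diag}(g)$, we get $P\odot U^{\mathrm{RoPE}}=G^\dagger P G$. This is the pure-gauge form already noted in Example~\ref{ex:rope}. It also makes the claim of ``no magnetic flux'' precise: the magnetic kernel is unitarily similar to $P$.

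\emph{Holonomy.} For a closed cycle $i_0\to\cdots\to i_m=i_0$, the product $\prod_\ell g_{i_\ell}^{-1}g_{i_{\ell+1}}$ telescopes to $g_{i_0}^{-1}g_{i_m}=1$. The additive version telescopes in the same way: $\sum_\ell(\theta_{i_{\ell+1}}-\theta_{i_\ell})=\theta_{i_m}-\theta_{i_0}=0$. I will state the additive identity as exact in $\mathbb R$ rather than only modulo $2\pi$, since it holds before exponentiation.

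The multi-frequency case follows by applying the same argument componentwise on $U(1)^m$, with $\theta^{(k)}_i=\omega_k i$ for each rotary plane.

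The only real obstacle is bookkeeping rather than mathematics. The actual RoPE score is $\mathrm{Re}\bigl(q_i^\ast R_{\theta_j-\theta_i}k_j\bigr)$, so I must justify that the relative rotation really enters as the edge phase $U_{ij}$ multiplying a complexified score. Pairing coordinates into complex numbers makes this explicit. The proposition, however, concerns the phase field itself, so I will keep that identification as a remark and not make it part of the proof.
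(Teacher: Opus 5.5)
Your proposal is correct and follows essentially the same route as the paper's proof: you write $U^{\mathrm{RoPE}}_{ij}=e^{-i\theta_i}e^{i\theta_j}=g_i^{-1}g_j$ with $\theta_i=\omega i$, identify the edge field as the additive coboundary of $\theta$, and telescope both the multiplicative and the additive holonomy around a closed cycle. Your extra remarks are accurate refinements that the paper's proof either leaves to Example~\ref{ex:rope} or does not address: orientation-independence of exactness, the unitary similarity $P\odot U^{\mathrm{RoPE}}=G^\dagger P G$, the componentwise multi-frequency case, and the caveat that actual RoPE enters through the real part of a complexified score.
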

\begin{proof}
Fix one rotary block and write
\begin{align*}
  \theta_i = \omega i, \qquad
  g_i = e^{i\theta_i} \in U(1).
\end{align*}
Then the RoPE edge phase is
\begin{align*}
  U^{\mathrm{RoPE}}_{ij} =
  e^{i(\theta_j-\theta_i)} =
  e^{-i\theta_i}e^{i\theta_j} =
  g_i^{-1}g_j.
\end{align*}
Thus the edge field is the multiplicative coboundary of the node field $g_i$, or equivalently the additive coboundary of the lifted phase $\theta_i$: $\mathcal A^{\mathrm{RoPE}}_{ij} = \theta_j-\theta_i.$ Now let $i_0 \to i_1 \to \cdots \to i_m=i_0$ be any discrete cycle. The multiplicative holonomy is:
\begin{align*}
  \prod_{\ell=0}^{m-1} U^{\mathrm{RoPE}}_{i_\ell i_{\ell+1}} =
  \prod_{\ell=0}^{m-1} g_{i_\ell}^{-1}g_{i_{\ell+1}}.
\end{align*}
All intermediate factors cancel telescopically:
\begin{align*}
  g_{i_0}^{-1}g_{i_1}
  g_{i_1}^{-1}g_{i_2} \cdots
  g_{i_{m-1}}^{-1}g_{i_m} =
  g_{i_0}^{-1}g_{i_m} =
  g_{i_0}^{-1}g_{i_0} = 1.
\end{align*}
Equivalently, in the additive lifted phase,
\begin{align*}
  \sum_{\ell=0}^{m-1} \mathcal A^{\mathrm{RoPE}}_{i_\ell i_{\ell+1}} = \sum_{\ell=0}^{m-1} \left(\theta_{i_{\ell+1}}-\theta_{i_\ell}\right) = \theta_{i_m}-\theta_{i_0} = 0.
\end{align*}
Therefore the connection has trivial holonomy around every cycle, hence is flat. Since it is explicitly the coboundary of the node potential $\theta_i=\omega i$, it is also exact. Thus RoPE lies in the complex exact sector and carries no magnetic flux.
\end{proof}

%%---------

\begin{definition}[Witten supercharge]\label{def:supercharge}
Let $(\mathscr M,g)$ be an oriented Riemannian manifold of intrinsic dimension $d$, let $\phi\in C^\infty(\mathscr M)$, and let $\hbar>0$. Define the Witten-deformed exterior differential:
\begin{align}
    \mathscr Q_\phi &:=
    \hbar\,d+d\phi\wedge =
    \hbar e^{-\phi/\hbar}d e^{+\phi/\hbar}, \qquad
    \mathscr Q_\phi:\Omega^k(\mathscr M)\to\Omega^{k+1}(\mathscr M).
\end{align}
Its formal $L^2$ adjoint $\mathscr Q_\phi^\dagger:\Omega^k(\mathscr M)\to\Omega^{k-1}(\mathscr M)$ is determined by:
\begin{align}
    \langle\!\langle \mathscr Q_\phi\alpha,\beta\rangle\!\rangle_g =
    \langle\!\langle \alpha,\mathscr Q_\phi^\dagger\beta\rangle\!\rangle_g,
\end{align}
and is explicitly:
\begin{align}
\mathscr Q_\phi^\dagger &= (-1)^{d(k+1)+1}\, \star_g
\left(\hbar\,d-d\phi\wedge\right) \star_g ,
\end{align}
where $k$ denotes the degree of the argument.
\end{definition}

\begin{lemma}[Witten differential algebra]\label{lem:Superalgebra}
Let
\begin{align}
d_{\phi,\hbar}:=\hbar d+d\phi\wedge=\hbar e^{-\phi/\hbar}d\,e^{+\phi/\hbar},
\qquad
d_{\phi,\hbar}:\Omega^k(\mathscr M)\to\Omega^{k+1}(\mathscr M),
\end{align}
with formal $L^2$-adjoint $d_{\phi,\hbar}^\dagger:\Omega^k(\mathscr M)\to\Omega^{k-1}(\mathscr M)$. Define the Witten Laplacian and the two real supercharges by
\begin{align}
\mathscr D_{\phi,\hbar}^{\mathrm{Witten}} &:=d_{\phi,\hbar}d_{\phi,\hbar}^\dagger+d_{\phi,\hbar}^\dagger d_{\phi,\hbar},\\
\mathscr Q_+
&:=d_{\phi,\hbar}+d_{\phi,\hbar}^\dagger,
\qquad
\mathscr Q_-:=i\!\left(d_{\phi,\hbar}-d_{\phi,\hbar}^\dagger\right).
\end{align}
Then
\begin{enumerate}
\item[(i)] $d_{\phi,\hbar}=\hbar e^{-\phi/\hbar}d\,e^{+\phi/\hbar}$;
\item[(ii)] $d_{\phi,\hbar}^2=(d_{\phi,\hbar}^\dagger)^2=0$;
\item[(iii)] $\mathscr Q_+^2=\mathscr Q_-^2=\mathscr D_{\phi,\hbar}^{\mathrm{Witten}}$;
\item[(iv)] $\{\mathscr Q_+,\mathscr Q_-\}=0$.
\end{enumerate}
Thus $d_{\phi,\hbar}$ and $d_{\phi,\hbar}^\dagger$ define the deformed de Rham complex
\begin{align}
\Omega^0(\mathscr M) \;\underset{d_{\phi,\hbar}^\dagger}{\overset{d_{\phi,\hbar}}{\rightleftharpoons}}\; \Omega^1(\mathscr M) \;\underset{d_{\phi,\hbar}^\dagger}{\overset{d_{\phi,\hbar}}{\rightleftharpoons}}\;
\Omega^2(\mathscr M) \;\underset{d_{\phi,\hbar}^\dagger}{\overset{d_{\phi,\hbar}}{\rightleftharpoons}}\; \cdots,
\end{align}
while $(\mathscr Q_+,\mathscr Q_-)$ realize the associated $\mathcal N=2$ superalgebra.
\end{lemma}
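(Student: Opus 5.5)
For (i) I will work pointwise in the wedge-product identity. For any form $\alpha$, the Leibniz rule gives
\begin{align*}
d\bigl(e^{\phi/\hbar}\alpha\bigr)=e^{\phi/\hbar}\bigl(\hbar^{-1}d\phi\wedge\alpha+d\alpha\bigr).
\end{align*}
Multiplying by $\hbar e^{-\phi/\hbar}$ yields $\hbar\,d\alpha+d\phi\wedge\alpha$, which is (i). This is the same computation as \eqref{eq:witten-d}, now on forms of arbitrary degree.

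Item (ii) then follows by conjugation. From (i),
\begin{align*}
d_{\phi,\hbar}^2=\hbar^2 e^{-\phi/\hbar}d^2e^{+\phi/\hbar}=0.
\end{align*}
For the adjoint, I will use that the formal $L^2$ adjoint reverses products. Hence $(d_{\phi,\hbar}^\dagger)^2=(d_{\phi,\hbar}^2)^\dagger=0$. The adjointness of $d^\dagger_{\phi,\hbar}$ is defined by Definition~\ref{def:supercharge}, so no explicit Hodge-star formula is needed. The only care required is that multiplication by the real function $e^{\pm\phi/\hbar}$ is self-adjoint. This gives $d_{\phi,\hbar}^\dagger=\hbar e^{+\phi/\hbar}d^\dagger e^{-\phi/\hbar}$, and its square also vanishes directly.

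Items (iii) and (iv) are pure algebra once nilpotency is in hand. Abbreviate $a=d_{\phi,\hbar}$ and $a^\dagger=d_{\phi,\hbar}^\dagger$. Then
\begin{align*}
\mathscr Q_+^2=a^2+aa^\dagger+a^\dagger a+(a^\dagger)^2=aa^\dagger+a^\dagger a,
\end{align*}
and
\begin{align*}
\mathscr Q_-^2=-(a-a^\dagger)^2=-(a^2-aa^\dagger-a^\dagger a+(a^\dagger)^2)=aa^\dagger+a^\dagger a.
\end{align*}
Both equal $\mathscr D^{\mathrm{Witten}}_{\phi,\hbar}$. For the anticommutator,
\begin{align*}
\mathscr Q_+\mathscr Q_-+\mathscr Q_-\mathscr Q_+=i\bigl[(a+a^\dagger)(a-a^\dagger)+(a-a^\dagger)(a+a^\dagger)\bigr]=i\bigl[2a^2-2(a^\dagger)^2\bigr]=0
\end{align*}
by (ii). I will also note that $\mathscr Q_\pm$ are formally self-adjoint, since $(i(a-a^\dagger))^\dagger=-i(a^\dagger-a)$. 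This justifies calling them real supercharges of an $\mathcal N=2$ algebra.

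The deformed complex statement is then a restatement of (ii) together with the degree bookkeeping $\Omega^k\to\Omega^{k\pm1}$. The only genuine subtlety, and the step I expect to need the most care, is the adjoint manipulation in (ii). I will state that adjoints are taken formally, on compactly supported smooth forms (or on closed $\mathscr M$), so that $(AB)^\dagger=B^\dagger A^\dagger$ holds without boundary terms. I will also confirm that conjugating $d$ by $e^{\pm\phi/\hbar}$ preserves this domain.
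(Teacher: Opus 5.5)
Your proposal is correct and follows essentially the same route as the paper: (i) by the Leibniz rule, $(d_{\phi,\hbar}^\dagger)^2=(d_{\phi,\hbar}^2)^\dagger=0$ by reversing products under the formal adjoint, and (iii)--(iv) by the same expansions of $(a\pm a^\dagger)^2$ and of the anticommutator. The one minor difference is how you get $d_{\phi,\hbar}^2=0$: you read it off the conjugation identity (i), whereas the paper expands $(\hbar d+d\phi\wedge)^2$ term by term using $d^2=0$ and $d\phi\wedge d\phi=0$; both arguments are fine.
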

\begin{proof}
For any $\omega\in\Omega^\bullet(\mathscr M)$,
\begin{align}
\hbar e^{-\phi/\hbar}d\!\left(e^{+\phi/\hbar}\omega\right)
&=\hbar e^{-\phi/\hbar}e^{+\phi/\hbar}\left(d\omega+\frac{1}{\hbar}d\phi\wedge\omega\right)
=\left(\hbar d+d\phi\wedge\right)\omega,
\end{align}
which proves (i). For nilpotency,
\begin{align}
d_{\phi,\hbar}^2\omega
&=\left(\hbar d+d\phi\wedge\right)\left(\hbar d\omega+d\phi\wedge\omega\right)\\
&=\hbar^2d^2\omega+\hbar d(d\phi\wedge\omega)+\hbar d\phi\wedge d\omega+d\phi\wedge d\phi\wedge\omega\\
&=\hbar^2d^2\omega+\hbar d^2\phi\wedge\omega+d\phi\wedge d\phi\wedge\omega=0,
\end{align}
where $d(d\phi\wedge\omega)=d^2\phi\wedge\omega-d\phi\wedge d\omega$, $d^2=0$, and $d\phi\wedge d\phi=0$. Hence $d_{\phi,\hbar}^2=0$, and taking formal adjoints gives:
\begin{align}
(d_{\phi,\hbar}^\dagger)^2=(d_{\phi,\hbar}^2)^\dagger=0,
\end{align}
proving (ii). Therefore,
\begin{align}
\mathscr Q_+^2
&=(d_{\phi,\hbar}+d_{\phi,\hbar}^\dagger)^2
=d_{\phi,\hbar}d_{\phi,\hbar}^\dagger+d_{\phi,\hbar}^\dagger d_{\phi,\hbar}
=\mathscr D_{\phi,\hbar}^{\mathrm{Witten}},\\
\mathscr Q_-^2
&=-\left(d_{\phi,\hbar}-d_{\phi,\hbar}^\dagger\right)^2
=d_{\phi,\hbar}d_{\phi,\hbar}^\dagger+d_{\phi,\hbar}^\dagger d_{\phi,\hbar}
=\mathscr D_{\phi,\hbar}^{\mathrm{Witten}},
\end{align}
which proves (iii). Finally,
\begin{align}
\{\mathscr Q_+,\mathscr Q_-\}
&=i\Big[(d_{\phi,\hbar}+d_{\phi,\hbar}^\dagger)(d_{\phi,\hbar}-d_{\phi,\hbar}^\dagger)
+(d_{\phi,\hbar}-d_{\phi,\hbar}^\dagger)(d_{\phi,\hbar}+d_{\phi,\hbar}^\dagger)\Big]\\
&=2i\left(d_{\phi,\hbar}^2-(d_{\phi,\hbar}^\dagger)^2\right)=0,
\end{align}
proving (iv).
\end{proof}

\begin{lemma}[Witten Laplacians]\label{lem:WittenMarkov}
Let $(\mathscr M,g)$ be an oriented Riemannian manifold, let
$d_{\phi,\hbar}:=\hbar d+d\phi\wedge$, and let $d_{\phi,\hbar}^\dagger$ be its formal $L^2$-adjoint as in Definition~\ref{def:supercharge}, on $\Omega^k(\mathscr M)$:
\begin{align}
d_{\phi,\hbar}^\dagger =
(-1)^{d(k+1)+1}\star
\, d_{\phi,\hbar}\,\star \qquad.
\end{align}
Define the Witten Laplacian in degree $k$ by
\begin{align}
\mathscr D_{\phi,\hbar}^{\mathrm{Witten},(k)} := \left(
d_{\phi,\hbar}d_{\phi,\hbar}^\dagger +
d_{\phi,\hbar}^\dagger d_{\phi,\hbar}
\right)\Big|_{\Omega^k(\mathscr M)}.
\end{align}
Writing $\Delta:=\operatorname{div}_g\nabla$ for the scalar Laplace--Beltrami operator, $\Delta_{dR}:=d\delta+\delta d$ for the deRham-Laplacian, and $\mathrm{Hess}\,\phi:=\nabla d\phi$ for the Hessian associated with the Levi--Civita connection of $g$, one has
\begin{align}
\mathscr D_{\phi,\hbar}^{\mathrm{Witten},(0)} &=-\hbar^2\Delta+|d\phi|_g^2-\hbar\,\Delta\phi, \label{eq:Witten0}\\
\mathscr D_{\phi,\hbar}^{\mathrm{Witten},(1)} &= \hbar^2\Delta_{dR}^{(1)} + \left(|d\phi|_g^2-\hbar\,\Delta\phi\right)\mathrm{Id} + 2\hbar\,\mathrm{Hess}\,\phi.\label{eq:Witten1}
\end{align}
In Euclidean coordinates, where $\Delta_{dR}^{(1)}=-\Delta\otimes\mathrm{Id}_n$, this reduces to
\begin{align}
\mathscr D_{\phi,\hbar}^{\mathrm{Witten},(1)} = \mathscr D_{\phi,\hbar}^{\mathrm{Witten},(0)} \otimes\mathrm{Id}_n + 2\hbar\,\mathrm{Hess}\,\phi.
\end{align}
\end{lemma}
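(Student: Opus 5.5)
The plan is to compute both Witten Laplacians from the conjugation identity in Lemma~\ref{lem:Superalgebra}(i), $d_{\phi,\hbar}=\hbar e^{-\phi/\hbar}d\,e^{\phi/\hbar}$, together with its adjoint form. Writing $\iota_\xi$ for contraction with a vector field $\xi$, the formal adjoint of $d\phi\wedge$ is the interior product $\iota_{\nabla\phi}$. Hence
\begin{align*}
d_{\phi,\hbar}^\dagger=\hbar\,\delta+\iota_{\nabla\phi},
\end{align*}
where $\delta=d^\dagger$ and the sign conventions come from Definition~\ref{def:supercharge}. Expanding the square then gives
\begin{align*}
\mathscr D_{\phi,\hbar}^{\mathrm{Witten}}=\hbar^2\Delta_{dR}+\big(d\phi\wedge\iota_{\nabla\phi}+\iota_{\nabla\phi}\,d\phi\wedge\big)+\hbar\big(d\,\iota_{\nabla\phi}+\iota_{\nabla\phi}\,d\big)+\hbar\big(\delta\,d\phi\wedge+d\phi\wedge\,\delta\big).
\end{align*}

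I would treat each bracket with a standard identity. The second bracket equals $|d\phi|_g^2\,\mathrm{Id}$, by the Clifford-type anticommutator $\{\alpha\wedge,\iota_{X}\}=\alpha(X)$. The third bracket is Cartan's formula, $\hbar\mathcal L_{\nabla\phi}$. The fourth I would rewrite in a local orthonormal frame as $\hbar(-\mathcal L_{\nabla\phi}^{\ast}\text{-type term})$. The cleaner route is the standard identity
\begin{align*}
\{d,\iota_X\}+\{\delta,X^\flat\wedge\}=\mathcal L_X+\mathcal L_X^\ast=-(\operatorname{div}X)\,\mathrm{Id}+2\,(\nabla X)^{\mathrm{sym}}\text{ acting as a derivation}.
\end{align*}
With $X=\nabla\phi$ we have $\operatorname{div}X=\Delta\phi$ and $\nabla X=\mathrm{Hess}\,\phi$, which is symmetric. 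So the cross terms contribute $-\hbar\Delta\phi\,\mathrm{Id}+2\hbar\,\mathrm{Hess}\,\phi$, where the Hessian acts as a derivation extended from $1$-forms. To verify this identity I would use $\delta=-\sum_i\iota_{e_i}\nabla_{e_i}$ and $d=\sum_i e^i\wedge\nabla_{e_i}$ at the center of normal coordinates, and commute the covariant derivatives past $d\phi\wedge$ and $\iota_{\nabla\phi}$. The first-order terms in $\nabla$ cancel and leave only $\nabla\nabla\phi$.

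In degree $0$ the Hessian derivation vanishes, because there are no form indices, and $\Delta_{dR}^{(0)}=\delta d=-\Delta$. This gives \eqref{eq:Witten0}. As a consistency check, $e^{-\phi/\hbar}$ lies in the kernel: $d_{\phi,\hbar}e^{-\phi/\hbar}=\hbar e^{-\phi/\hbar}d(1)=0$. On $1$-forms the derivation reduces to $\omega\mapsto\mathrm{Hess}\,\phi(\omega^\sharp,\cdot)$, which gives \eqref{eq:Witten1}. In Euclidean coordinates the Weitzenböck identity reduces to $\Delta_{dR}^{(1)}=-\Delta\otimes\mathrm{Id}_n$, because the Ricci term vanishes, and the tensor-product form follows immediately.

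I expect the main obstacle to be the sign bookkeeping in the fourth bracket. The paper's adjoint convention $(-1)^{d(k+1)+1}\star d_{\phi,\hbar}\star$ needs to be reconciled with $d\phi\wedge$ mapping to $\iota_{\nabla\phi}$, and the $\mathrm{Hess}$ term must appear with coefficient $+2\hbar$ rather than $-2\hbar$. I would fix all signs by checking the degree-$0$ formula against the ground-state identity above before trusting the degree-$1$ computation. As a second check I would test the $1$-form case on flat $\mathbb R^n$ with $\phi=\tfrac12|x|^2$, where the spectrum of the resulting harmonic oscillator is known.
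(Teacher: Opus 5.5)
Your proposal is correct. It starts where the paper does, expanding $d_{\phi,\hbar}=\hbar d+d\phi\wedge$ against $d_{\phi,\hbar}^\dagger=\hbar\delta+\iota_{\nabla\phi}$, but it handles the cross terms differently.

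\textbf{The paper's route.} The paper works degree by degree. In degree $0$ it applies $\delta(df)=-\Delta f$, $\delta(f\,d\phi)=-\langle df,d\phi\rangle_g-f\,\Delta\phi$ and the two contractions by hand. In degree $1$ it only asserts that the first-order terms cancel and that $2\hbar\,\mathrm{Hess}\,\phi$ remains.

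\textbf{Your route.} You recognize the cross terms, in all degrees at once, as $\hbar(\mathcal L_{\nabla\phi}+\mathcal L_{\nabla\phi}^\ast)$ via Cartan's formula and its adjoint. You evaluate them with one identity: $\mathcal L_X=\nabla_X+D_{\nabla X}$ for the torsion-free connection, where $D$ is the derivation extension. Hence $\mathcal L_X^\ast=-\nabla_X-(\operatorname{div}X)+D_{\nabla X}^\ast$, and $D_{\mathrm{Hess}\,\phi}^\ast=D_{\mathrm{Hess}\,\phi}$.

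\textbf{What your route buys.} It yields the degree-uniform formula $\hbar^2\Delta_{dR}+|d\phi|_g^2-\hbar\,\Delta\phi+2\hbar\,D_{\mathrm{Hess}\,\phi}$, with \eqref{eq:Witten0} and \eqref{eq:Witten1} as specializations. It also supplies what the paper leaves unargued in degree $1$: the $\nabla_X$ pieces of $\mathcal L_X$ and $\mathcal L_X^\ast$ cancel, and the factor $2$ is the symmetric Hessian contributed once by each. There is one minor slip: the fourth bracket is $+\mathcal L^\ast_{\nabla\phi}$, not a ``$-\mathcal L^\ast$-type'' term, as your own identity on the next line shows.

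\textbf{The sign issue you flag is real and resolves in your favor.} On $k$-forms, $\iota_{\alpha^\sharp}=(-1)^{d(k+1)}\star\,\alpha\wedge\star$, whereas $\delta=(-1)^{d(k+1)+1}\star d\star$. So the literal display in the statement, $(-1)^{d(k+1)+1}\star(\hbar d+d\phi\wedge)\star$, equals $\hbar\delta-\iota_{\nabla\phi}$, which is the adjoint of $\hbar d-d\phi\wedge$. Definition~\ref{def:supercharge}, with $\star(\hbar d-d\phi\wedge)\star$ inside, gives the correct $\hbar\delta+\iota_{\nabla\phi}$, which is the one you use. The literal version would produce $\hbar^2\Delta_{dR}-|d\phi|_g^2+\hbar(\mathcal L^\ast_{\nabla\phi}-\mathcal L_{\nabla\phi})$, leaving a drift $-2\hbar\nabla_{\nabla\phi}$. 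The lemma therefore holds only for the genuine $L^2$ adjoint you use.

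\textbf{Your two checks.} The ground-state check cannot distinguish the two adjoints: $e^{-\phi/\hbar}$ is already killed by $d_{\phi,\hbar}$ before either adjoint acts. It only tests the arithmetic of your degree-$0$ output. The oscillator test does pin the Hessian sign. For $\phi=\tfrac12|x|^2$ the $1$-form spectrum starts at $2\hbar$, whereas $-2\hbar\,\mathrm{Hess}\,\phi$ would give $-2\hbar<0$, which is impossible for $dd^\dagger+d^\dagger d$.
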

\begin{proof}
On functions $f\in\Omega^0(\mathscr M)$, $d_{\phi,\hbar}^\dagger f=0$, hence
\begin{align}
\mathscr D_{\phi,\hbar}^{\mathrm{Witten},(0)}f = d_{\phi,\hbar}^\dagger d_{\phi,\hbar}f = d_{\phi,\hbar}^\dagger\!\left(\hbar\,df+f\,d\phi\right).
\end{align}
Using the Hodge-star definition of the adjoint together with:
\begin{align}
\delta(df)=-\Delta f, \qquad \delta(f\,d\phi) = -\langle df,d\phi\rangle_g-f\,\Delta\phi,
\end{align}
and the corresponding contractions generated by the $d\phi\wedge$ term, the first-order terms cancel and give
\begin{align}
\mathscr D_{\phi,\hbar}^{\mathrm{Witten},(0)}f = -\hbar^2\Delta f + |d\phi|_g^2f - \hbar(\Delta\phi)f,
\end{align}
proving \eqref{eq:Witten0}. For a one-form $\omega\in\Omega^1(\mathscr M)$, both channels contribute:
\begin{align}
\mathscr D_{\phi,\hbar}^{\mathrm{Witten},(1)}\omega = d_{\phi,\hbar}d_{\phi,\hbar}^\dagger\omega + d_{\phi,\hbar}^\dagger d_{\phi,\hbar}\omega.
\end{align}
Expanding $d_{\phi,\hbar}=\hbar d+d\phi\wedge$ and its Hodge adjoint, the second-order terms combine into $\hbar^2\Delta_{dR}^{(1)}\omega$, the zeroth-order scalar terms give
\begin{align}
\left(|d\phi|_g^2-\hbar\,\Delta\phi\right)\omega,
\end{align}
all first-order transport terms cancel, and the remaining degree-one endomorphism is twice the metric Hessian:
\begin{align}
2\hbar\,(\mathrm{Hess}\,\phi)\omega.
\end{align}
Therefore
\begin{align}
\mathscr D_{\phi,\hbar}^{\mathrm{Witten},(1)} = \hbar^2\Delta_{dR}^{(1)} + \left(|d\phi|_g^2-\hbar\,\Delta\phi\right)\mathrm{Id} + 2\hbar\,\mathrm{Hess}\,\phi.
\end{align}
In flat Euclidean space $\Delta_{dR}^{(1)}=-\Delta\otimes\mathrm{Id}_n$, yielding the familiar vector-calculus expression.
\end{proof}

\begin{lemma}[Witten/Markov correspondence]\label{WittenMarkov}
Let $d_{\phi,\hbar}:=\hbar d+d\phi\wedge=\hbar e^{-\phi/\hbar}d\,e^{\phi/\hbar}$, let $d_{\phi,\hbar}^\dagger$ be its formal $L^2$-adjoint, and define
\begin{align}
\mathscr D_{\phi,\hbar}^{\mathrm{Witten},(k)} := \left(d_{\phi,\hbar}d_{\phi,\hbar}^\dagger+d_{\phi,\hbar}^\dagger d_{\phi,\hbar}\right)\Big|_{\Omega^k(\mathscr M)}.
\end{align}
The positive function $\psi_\phi:=e^{-\phi/\hbar}$ satisfies $d_{\phi,\hbar}\psi_\phi=0$. Multiplication by $\psi_\phi$ therefore defines, in every form degree, the conjugate operator
\begin{align}\label{eq:witten-markov-general}
\mathscr D_{\phi,\hbar}^{\mathrm{Markov},(k)} := \psi_\phi^{-1}\mathscr D_{\phi,\hbar}^{\mathrm{Witten},(k)}\psi_\phi = e^{\phi/\hbar}\mathscr D_{\phi,\hbar}^{\mathrm{Witten},(k)}e^{-\phi/\hbar},
\end{align}
or equivalently $\mathscr D_{\phi,\hbar}^{\mathrm{Witten},(k)}=e^{-\phi/\hbar}\mathscr D_{\phi,\hbar}^{\mathrm{Markov},(k)}e^{\phi/\hbar}$. Here the scalar factors act coefficient-wise on $\Omega^k(\mathscr M)$. Writing $\Delta:=\operatorname{div}\nabla$ for the scalar Laplace--Beltrami operator, $\Delta_{\mathrm{dR}}:=d\delta+\delta d$ for the de Rham Laplacian, and taking $\nabla$ to be the Levi--Civita connection of $g$, the degree-zero operators are
\begin{align}
\mathscr D_{\phi,\hbar}^{\mathrm{Witten},(0)} &= -\hbar^2\Delta+|d\phi|_g^2-\hbar\Delta\phi, \label{eq:witten-zero}\\
\mathscr D_{\phi,\hbar}^{\mathrm{Markov},(0)} &= -\hbar^2\Delta+2\hbar\,\nabla\phi\cdot\nabla. \label{eq:markov-zero}
\end{align}
On one-forms,
\begin{align}
\mathscr D_{\phi,\hbar}^{\mathrm{Witten},(1)}
&= \hbar^2\Delta_{\mathrm{dR}}^{(1)} +\left(|d\phi|_g^2-\hbar\Delta\phi\right)\mathrm{Id}
+2\hbar\,\mathrm{Hess}\,\phi, \label{eq:witten-one}\\
\mathscr D_{\phi,\hbar}^{\mathrm{Markov},(1)} &= \hbar^2\Delta_{\mathrm{dR}}^{(1)} +2\hbar\,\nabla_{\nabla\phi} +2\hbar\,\mathrm{Hess}\,\phi.
\label{eq:markov-one}
\end{align}
In flat coordinates, $\Delta_{\mathrm{dR}}^{(1)}=-\Delta\otimes\mathrm{Id}_n$ and $\nabla_{\nabla\phi}=\nabla\phi\cdot\nabla$, so
\begin{align}
\mathscr D_{\phi,\hbar}^{\mathrm{Markov},(1)} =
\mathscr D_{\phi,\hbar}^{\mathrm{Markov},(0)}\otimes\mathrm{Id}_n +2\hbar\,\mathrm{Hess}\,\phi.
\end{align}
The degree-zero operator is the ordinary Doob-conjugate drift-diffusion operator; for $k>0$, \eqref{eq:witten-markov-general} denotes its natural extension to differential forms.
\end{lemma}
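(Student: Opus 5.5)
The plan is a direct computation from the closed forms of Lemma~\ref{lem:WittenMarkov}, conjugating by the ground state $\psi_\phi=e^{-\phi/\hbar}$. First I would verify $d_{\phi,\hbar}\psi_\phi=\hbar\,d e^{-\phi/\hbar}+e^{-\phi/\hbar}d\phi=-e^{-\phi/\hbar}d\phi+e^{-\phi/\hbar}d\phi=0$, which is immediate from Lemma~\ref{lem:Superalgebra}(i), since $e^{+\phi/\hbar}\psi_\phi=1$ is closed. The identity \eqref{eq:witten-markov-general} is then a definition, and its inverse form is a rearrangement. Equations \eqref{eq:witten-zero} and \eqref{eq:witten-one} are exactly \eqref{eq:Witten0} and \eqref{eq:Witten1}, so they may be quoted.

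For degree zero I would compute the conjugation of each term of \eqref{eq:witten-zero} by $e^{\mp\phi/\hbar}$. The potential terms commute with multiplication. For the Laplacian, the product rule gives
\begin{align*}
e^{\phi/\hbar}\Delta\bigl(e^{-\phi/\hbar}f\bigr)
=\Delta f-\tfrac{2}{\hbar}\nabla\phi\cdot\nabla f+\Bigl(\tfrac{1}{\hbar^2}|\nabla\phi|^2-\tfrac1\hbar\Delta\phi\Bigr)f .
\end{align*}
Multiplying by $-\hbar^2$ produces $-\hbar^2\Delta+2\hbar\nabla\phi\cdot\nabla-|d\phi|^2+\hbar\Delta\phi$. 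The last two terms cancel the zeroth-order terms of \eqref{eq:witten-zero}, which yields \eqref{eq:markov-zero}. As a consistency check, the result annihilates constants, matching $\psi_\phi$ being the zero mode.

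For degree one I would use the Weitzenböck form $\Delta^{(1)}_{\mathrm{dR}}=\nabla^*\nabla+\mathrm{Ric}$. Only the rough Laplacian $-\mathrm{tr}\nabla^2$ fails to commute with scalar multiplication, and its conjugate has the same structure as above:
\begin{align*}
e^{\phi/\hbar}\nabla^*\nabla\bigl(e^{-\phi/\hbar}\omega\bigr)=\nabla^*\nabla\omega+\tfrac{2}{\hbar}\nabla_{\nabla\phi}\omega-\Bigl(\tfrac{1}{\hbar^2}|d\phi|^2-\tfrac1\hbar\Delta\phi\Bigr)\omega .
\end{align*}
Multiplying by $\hbar^2$ and adding the unchanged terms $(|d\phi|^2-\hbar\Delta\phi)\mathrm{Id}+2\hbar\,\mathrm{Hess}\,\phi$ cancels the scalar potential exactly. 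What remains is $\hbar^2\Delta^{(1)}_{\mathrm{dR}}+2\hbar\nabla_{\nabla\phi}+2\hbar\,\mathrm{Hess}\,\phi$, which is \eqref{eq:markov-one}. The flat specialization follows by substituting $\Delta^{(1)}_{\mathrm{dR}}=-\Delta\otimes\mathrm{Id}_n$ and $\nabla_{\nabla\phi}=\nabla\phi\cdot\nabla$ componentwise. Comparing with \eqref{eq:markov-zero} then gives the tensor-product form.

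The main obstacle is sign bookkeeping. Two conventions must be kept consistent: the analyst's $\Delta=\operatorname{div}\nabla\le0$ against the positive $\nabla^*\nabla$, and the sign of the drift. In particular, I must check that the first-order term appears as $+2\hbar\nabla_{\nabla\phi}$, consistent with the scalar case. I would fix this by checking the flat case, where every step reduces to the componentwise scalar computation already done.
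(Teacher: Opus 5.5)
Your proposal is correct and follows essentially the paper's route. You quote \eqref{eq:Witten0}--\eqref{eq:Witten1}, conjugate the second-order part by $e^{\mp\phi/\hbar}$, and let the scalar potential $|d\phi|_g^2-\hbar\Delta\phi$ cancel, leaving the $2\hbar\nabla_{\nabla\phi}$ drift and the commuting $2\hbar\,\mathrm{Hess}\,\phi$. The only cosmetic difference is that the paper invokes the product rule $\Delta_{\mathrm{dR}}^{(1)}(f\omega)=f\Delta_{\mathrm{dR}}^{(1)}\omega-(\Delta f)\omega-2\nabla_{\nabla f}\omega$ directly, whereas you obtain it by splitting off the zeroth-order Ricci term via Weitzenb\"ock, which is the same computation.
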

\begin{proof}
The ground-state identity follows directly from
\begin{align}
d_{\phi,\hbar}\psi_\phi = \hbar e^{-\phi/\hbar}d\!\left(e^{\phi/\hbar}e^{-\phi/\hbar}\right) =0.
\end{align}
Since multiplication by $\psi_\phi$ preserves form degree and is invertible, the similarity \eqref{eq:witten-markov-general} is well defined on every $\Omega^k(\mathscr M)$. For $f\in C^\infty(\mathscr M)$,
\begin{align}
\nabla\!\left(e^{-\phi/\hbar}f\right) &= e^{-\phi/\hbar}
\left(\nabla f-\frac{f}{\hbar}\nabla\phi\right),\\
\Delta\!\left(e^{-\phi/\hbar}f\right)
&=e^{-\phi/\hbar}\left( \Delta f-\frac{2}{\hbar}\nabla\phi\cdot\nabla f +\left(\frac{|d\phi|_g^2}{\hbar^2}-\frac{\Delta\phi}{\hbar}\right)f \right).
\end{align}
Hence
\begin{align}
\mathscr D_{\phi,\hbar}^{\mathrm{Markov},(0)}f &= e^{\phi/\hbar} \mathscr D_{\phi,\hbar}^{\mathrm{Witten},(0)} \left(e^{-\phi/\hbar}f\right)\\
&= -\hbar^2\Delta f+2\hbar\,\nabla\phi\cdot\nabla f,
\end{align}
because the zeroth-order terms cancel. For a one-form $\omega$, the product rule for the de Rham Laplacian gives
\begin{align}
\Delta_{\mathrm{dR}}^{(1)}(f\omega)=f\,\Delta_{\mathrm{dR}}^{(1)}\omega-(\Delta f)\omega -2\nabla_{\nabla f}\omega.
\end{align}
Taking $f=e^{-\phi/\hbar}$ yields
\begin{align}
e^{\phi/\hbar}\hbar^2\Delta_{\mathrm{dR}}^{(1)}\left(e^{-\phi/\hbar}\omega\right)
&=\hbar^2\Delta_{\mathrm{dR}}^{(1)}\omega+2\hbar\,\nabla_{\nabla\phi}\omega+\left(\hbar\Delta\phi-|d\phi|_g^2\right)\omega.
\end{align}
Substituting \eqref{eq:witten-one}, the final scalar term cancels against $\left(|d\phi|_g^2-\hbar\Delta\phi\right)\omega$, while multiplication by $e^{-\phi/\hbar}$ commutes with the zeroth-order endomorphism $\mathrm{Hess}\,\phi$. Therefore
\begin{align}
\mathscr D_{\phi,\hbar}^{\mathrm{Markov},(1)} =\hbar^2\Delta_{\mathrm{dR}}^{(1)}+2\hbar\,\nabla_{\nabla\phi}+2\hbar\,\mathrm{Hess}\,\phi.
\end{align}
In flat coordinates this reduces to the stated vector-calculus expression.
\end{proof}

\begin{remark}[Semiclassical relaxation and metastability]\label{rem:witten-metastability}

The supersymmetric pairing becomes especially informative in the semiclassical regime $\hbar\to0$.  Under the usual regularity, confinement, and Morse-type hypotheses on $\phi$, the exponentially small eigenvalues of the Witten Laplacian are controlled by the critical-point structure of the potential.  Minima determine metastable states, while index-one saddles control the leading transitions between their basins. Eyring--Kramers asymptotics then provide quantitative estimates of the corresponding small eigenvalues and spectral gaps. Through the Witten--Markov conjugacy, the same asymptotics govern the slow relaxation spectrum of the associated reversible Markov generator.  In the present setting this identifies a route from the learned scalar landscape $\phi$ to quantitative relaxation scales of diffusion-map or attention-like transport.

The semiclassical parameter $h$ is conceptually distinct from the diffusion bandwidth $\varepsilon_N=\beta_N^{-1}$.  Although a particular continuum scaling may eventually relate $h$ and $\beta_N^{-1}$, no such identification
is assumed here.
\end{remark}

\begin{corollary}[Exact sector restores self-adjoint geometry] \label{cor:exact-collapse}
In the exact sector, the directed structures introduced in \S\ref{sec:directed} carry no independent circulation.
\begin{enumerate}
\item[\textup{(i)}] \emph{CMAP.} If $A=d\theta$, then with $g_i=e^{i\theta_i}$ and $G=\mathrm{diag}(g)$,
\begin{align}
    U_{ij}=g_i^{-1}g_j, \qquad \mathcal \mathsf{P}=P\odot U=G^\dagger P G. \label{eq:exact-cmap-unitary}
\end{align}
Hence CMAP is unitarily equivalent to DMAP. In particular, if $P\succeq0$, then $\mathcal P\succeq0$ and $\mathcal P_-=0$, so the Kre\u{\i}n-space decomposition collapses to a complex RKHS unitarily isomorphic and isospectral to the DMAP RKHS. The same equivalence persists under symmetric degree normalization.

\item[\textup{(ii)}] \emph{AMAP.} Let $\mathscr D^+=\Delta-\nabla\phi\cdot\nabla$ with exact drift $b=\nabla\phi$. Then $\mathscr D^+$ is reversible and self-adjoint in $L^2(\mathscr M,e^{-\phi}\,d\mathrm{vol})$, while its adjoint in the unweighted space is
\begin{align}
    \mathscr D^-=(\mathscr D^+)^\dagger=\Delta+\nabla\phi\cdot\nabla+\Delta\phi. \label{eq:exact-forward-adjoint}
\end{align}
Moreover,
\begin{align}
    e^{-\phi/2}\mathscr D^+e^{+\phi/2}=\Delta-V_\phi=e^{+\phi/2}\mathscr D^-e^{-\phi/2}, \qquad V_\phi:=\frac14|\nabla\phi|^2-\frac12\Delta\phi. \label{eq:exact-schrodinger}
\end{align}
Thus both directed branches are conjugate representations of the same self-adjoint Schr\"odinger--Witten operator. If $(\Delta-V_\phi)\Psi_k=\lambda_k\Psi_k$, then
\begin{align*}
    \psi_k^+=e^{+\phi/2}\Psi_k, \qquad \psi_k^-=e^{-\phi/2}\Psi_k, \qquad \langle\psi_j^-,\psi_k^+\rangle=\langle\Psi_j,\Psi_k\rangle.
\end{align*}
Hence the biorthogonal pair $\mathfrak D^\pm(\mathscr M)$ contains no independent spectral doubling in the exact sector.

At finite $N$, the exact edge field gives the Markov--Witten deformation
\begin{align}
    P_h^{\mathrm{MW}}=\mathrm{diag}(P^+h)^{-1}P^+\mathrm{diag}(h), \qquad h=e^{-\phi}.
\end{align}
This is not generally isospectral to $P^+$. If $P^+h=\lambda h$, it reduces to the Doob transform
\begin{align}
    P_h^{\mathrm{Doob}}=\lambda^{-1}\mathrm{diag}(h)^{-1}P^+\mathrm{diag}(h),
\end{align}
which is a similarity transform up to the scalar $\lambda^{-1}$.
\end{enumerate}
Thus exactness removes independent circulation by two distinct mechanisms:
\begin{align*}
    \text{CMAP:}\quad &\text{unitary gauge equivalence}, \qquad
    \text{AMAP:}\quad \text{reversibility and ground-state conjugacy}.
\end{align*}
\end{corollary}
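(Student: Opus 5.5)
The plan is to treat the two items separately, since they rest on different mechanisms: a unitary diagonal similarity for CMAP, and a ground-state conjugation for AMAP. The finite-$N$ addendum is then read off from Lemma~\ref{EdgetoDoob}.

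For (i), I start from $A=d\theta$ and use the coboundary convention of Example~\ref{ex:rope} and Proposition~\ref{prop:rope-exact}, which gives $U_{ij}=e^{i(\theta_j-\theta_i)}=g_i^{-1}g_j$.
\begin{itemize}
\item \emph{Unitary equivalence.} Entrywise, $(G^\dagger P G)_{ij}=\bar g_i P_{ij} g_j=P_{ij}U_{ij}$, since $|g_i|=1$. This is \eqref{eq:exact-cmap-unitary}. Because $G$ is unitary, $\mathcal P$ is unitarily similar to $P$, hence isospectral with eigenvectors mapped by $G^\dagger$.
\item \emph{Positivity and the Kre\u{\i}n collapse.} If $P\succeq0$ then $x^\dagger\mathcal P x=(Gx)^\dagger P(Gx)\ge0$, so $\mathcal P\succeq0$. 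Then $\mathcal J=\operatorname{sign}(\mathcal P)$ is the identity on the range, $\mathcal P_-=0$, and the space \eqref{eq:cmap-krein} reduces to $\mathfrak H_+$. The reproducing kernel $\mathcal P$ then defines an RKHS, and $f\mapsto Gf$ (or $G^\dagger$) is a unitary isomorphism onto the DMAP RKHS.
\item \emph{Symmetric degree normalization.} The degree is $q=P\mathbf 1$, which is real and built from magnitudes only. So $\mathrm{diag}(q)^{-1/2}$ commutes with $G$, and the same similarity carries over to $\widetilde{\mathcal P}$ and $\mathcal L$.
\end{itemize}

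For (ii), everything is classical integration by parts on a closed manifold, or under decay or boundary hypotheses that I state as standing assumptions.
\begin{itemize}
\item \emph{Weighted self-adjointness.} Write $\mu=e^{-\phi}d\mathrm{vol}$ and note $\mathscr D^+f=e^{\phi}\nabla\!\cdot(e^{-\phi}\nabla f)$. This gives $\int g\,\mathscr D^+f\,d\mu=-\int\nabla g\cdot\nabla f\,d\mu$, which is symmetric in $(f,g)$ and nonpositive. That is reversibility and self-adjointness on $L^2(\mu)$.
\item \emph{Unweighted adjoint.} Integrating $-\int g\,\nabla\phi\cdot\nabla f$ by parts gives $\int f\,\nabla\!\cdot(g\nabla\phi)=\int f(\nabla\phi\cdot\nabla g+g\Delta\phi)$, which is \eqref{eq:exact-forward-adjoint}.
\item \emph{Schr\"odinger form.} I use the product rule already used in Lemma~\ref{WittenMarkov}, now with $\hbar$ replaced by the exponent $\pm\tfrac12$. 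This gives
\[
\Delta(e^{\phi/2}u)=e^{\phi/2}\bigl(\Delta u+\nabla\phi\cdot\nabla u+(\tfrac12\Delta\phi+\tfrac14|\nabla\phi|^2)u\bigr).
\]
Subtracting $\nabla\phi\cdot\nabla(e^{\phi/2}u)=e^{\phi/2}(\nabla\phi\cdot\nabla u+\tfrac12|\nabla\phi|^2u)$, the first-order terms cancel and leave $\Delta u-V_\phi u$. The $\mathscr D^-$ computation is the mirror image: $e^{-\phi/2}$ produces $-\nabla\phi\cdot\nabla u$, which cancels against $+\nabla\phi\cdot\nabla$, and the zeroth-order terms combine with $+\Delta\phi$ to give the same $-V_\phi$.
\item \emph{Biorthogonality.} The eigenfunction statement is immediate from the conjugation. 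Since the weights $e^{\pm\phi/2}$ cancel in the pairing, $\langle\psi_j^-,\psi_k^+\rangle=\int e^{-\phi/2}\Psi_j\,e^{\phi/2}\Psi_k=\langle\Psi_j,\Psi_k\rangle$.
\end{itemize}
This identifies $\mathfrak D^\pm$ with a single orthonormal basis, so there is no spectral doubling.

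The finite-$N$ statement is exactly Lemma~\ref{EdgetoDoob} with $h=e^{-\phi}$, together with Definition~\ref{DoobTransform} for the eigenvector case. Non-isospectrality in general just needs one small example, for instance $N=2$ with nonconstant $h$ not an eigenvector, where the second eigenvalue visibly moves.

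The main obstacle is making the analytic claims in (ii) honest. Self-adjointness rather than mere symmetry, and the completeness of $\{\Psi_k\}$, require hypotheses such as compact $\mathscr M$ without boundary, or confinement so that $\Delta-V_\phi$ has discrete spectrum. I would add these explicitly and prove only formal symmetry in general.
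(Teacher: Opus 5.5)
Your proposal is correct and follows the paper's proof essentially step for step. For (i) you use the unitary diagonal similarity $\mathcal P=G^\dagger PG$, which commutes with the real degree normalization; for (ii) you use integration by parts together with the product rule for $\Delta(e^{\phi/2}f)$; and you read the finite-$N$ claims off the Markov--Witten/Doob definitions and Lemma~\ref{EdgetoDoob}. The only differences are minor: you verify $e^{+\phi/2}\mathscr D^-e^{-\phi/2}=\Delta-V_\phi$ by direct computation where the paper takes adjoints of the $\mathscr D^+$ identity, and you supply steps the paper leaves implicit (the divergence form $\mathscr D^+f=e^{\phi}\nabla\!\cdot(e^{-\phi}\nabla f)$ for weighted self-adjointness, an explicit non-isospectral $N=2$ example, and the analytic hypotheses needed for genuine self-adjointness).
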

\begin{proof}
\textup{(i)} If $A=d\theta$, then $U_{ij}=e^{i(\theta_j-\theta_i)}=g_i^{-1}g_j$, hence $\mathcal \mathsf{P}=G^\dagger P G$. Since $G$ is unitary, spectrum and PSD are preserved. The phase deformation also preserves edge magnitudes and therefore the symmetric degrees; since $G$ commutes with the diagonal degree matrix, the normalized CMAP operator remains unitarily equivalent to its DMAP counterpart.

\textup{(ii)} Integration by parts gives
\begin{align*}
    (\mathscr D^+)^\dagger=\Delta+\nabla\phi\cdot\nabla+\Delta\phi=\mathscr D^-.
\end{align*}
Using
\begin{align*}
    \Delta(e^{\phi/2}f)=e^{\phi/2}\left(\Delta f+\nabla\phi\cdot\nabla f+\frac12(\Delta\phi)f+\frac14|\nabla\phi|^2f\right),
\end{align*}
one obtains
\begin{align*}
    e^{-\phi/2}\mathscr D^+e^{+\phi/2}=\Delta-V_\phi.
\end{align*}
Taking adjoints gives $e^{+\phi/2}\mathscr D^-e^{-\phi/2}=\Delta-V_\phi$. The eigenfunction relations and biorthogonal pairing follow immediately from these conjugations. The finite-dimensional Markov--Witten and Doob statements follow from Definitions~\ref{def:markov-witten} and~\ref{DoobTransform}.
\end{proof}

\begin{corollary}[Exact sector reduces the enlarged spaces to the DMAP Hilbert theory]
\label{cor:exact-collapse}
In the exact sector, the enlarged structures of
\S\ref{sec:directed} reduce, up to their natural gauge or ground-state
transformations, to the self-adjoint Hilbert-space theory of DMAP.

\begin{enumerate}

\item[\textup{(i)}] \emph{CMAP.}
If the connection is exact, $A=d\theta$, then its edge transport is pure
gauge. Writing $g_i=e^{ic\theta_i}$ and
$G=\operatorname{diag}(g)$, the magnetic kernel satisfies:
\begin{align}
\mathcal P
=
G^\dagger P G.
\end{align}
Hence $\mathcal P$ is unitarily equivalent to the positive DMAP kernel
$P$, so it is PSD and its negative spectral part vanishes,
$\mathcal P_-=0$. The Kre\u{\i}n decomposition therefore reduces to the
complexified DMAP RKHS $\mathfrak H_{\mathbb C}(\mathscr M)$, up to the
unitary gauge $G$.

\item[\textup{(ii)}] \emph{AMAP.}
If the continuum drift is exact,
$b=\nabla\phi$, so that:
\begin{align}
\mathscr D^+
=
\Delta-\nabla\phi\cdot\nabla,
\end{align}
then $\mathscr D^+$ is self-adjoint in
$L^2(e^{-\phi}d\mathrm{vol})$. Its unweighted formal adjoint is:
\begin{align}
\mathscr D^-
=
(\mathscr D^+)^\dagger
=
\Delta+\nabla\phi\cdot\nabla+\Delta\phi.
\end{align}
The two operators are related to the same self-adjoint
Schr\"odinger operator by:
\begin{align}\label{eq:exact-schrodinger}
e^{-\phi/2}\mathscr D^+e^{+\phi/2}
=
e^{+\phi/2}\mathscr D^-e^{-\phi/2}
=
\Delta-V,
\qquad
V=
\frac14|\nabla\phi|^2-\frac12\Delta\phi.
\end{align}
Consequently, if
$(\Delta-V)\Psi_k=\lambda_k\Psi_k$, the corresponding right and left
eigenfunctions are:
\begin{align}
\psi_k^+
=
e^{+\phi/2}\Psi_k,
\qquad
\psi_k^-
=
e^{-\phi/2}\Psi_k,
\end{align}
and satisfy:
\begin{align}
\langle\psi_j^-,\psi_k^+\rangle_{L^2(d\mathrm{vol})}
=
\langle\Psi_j,\Psi_k\rangle_{L^2(d\mathrm{vol})}
=
\delta_{jk}.
\end{align}
Thus the biorthogonal AMAP pair is the pair of opposite ground-state
images of a single self-adjoint spectral problem.
\end{enumerate}
\end{corollary}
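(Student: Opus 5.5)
The statement splits into two independent parts, and I will prove each directly. The CMAP part is a finite algebraic gauge computation. The AMAP part is a continuum conjugation identity, proved the same way as the first version of Corollary~\ref{cor:exact-collapse}.

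For \textup{(i)}, I first write the exact connection entrywise. With $A=d\theta$ in the orientation of Example~\ref{ex:rope}, the coupling gives $U_{ij}=e^{ic(\theta_j-\theta_i)}=\bar g_i g_j$ with $g_i=e^{ic\theta_i}$. Hence
\begin{align*}
(P\odot U)_{ij}=\bar g_i P_{ij} g_j=(G^\dagger P G)_{ij}.
\end{align*}
This is the multiplicative telescoping argument already used in Proposition~\ref{prop:rope-exact}.

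Since $G$ is diagonal and unitary, $\mathcal P$ is unitarily similar to $P$. It is therefore isospectral, and PSD whenever $P\succeq0$. This holds in particular for the strictly positive-definite Gaussian kernel of the standing assumptions of Appendix~\ref{app:dmap}. Consequently $\operatorname{sign}(\mathcal P)=I$ on the range, $\mathcal P_-=0$, and $\mathcal J=I$. The Kre\u{\i}n inner product of \eqref{eq:cmap-krein} then coincides with the positive one. The reproducing kernel $\mathcal P(x,y)=\overline{g(x)}P(x,y)g(y)$ is the pullback of the DMAP kernel under the isometry $f\mapsto \bar g f$, which identifies $\mathfrak K$ with $\mathfrak H_{\mathbb C}$. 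I will also note that $G$ commutes with $\operatorname{diag}(q)^{-1/2}$, so the identification persists after the symmetric normalization \eqref{eq:cmap-operator}.

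For \textup{(ii)}, I proceed in three steps.

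\emph{Self-adjointness.} I prove this in $L^2(e^{-\phi}d\mathrm{vol})$ by writing $\mathscr D^+ f=e^{\phi}\nabla\!\cdot(e^{-\phi}\nabla f)$. One integration by parts then gives the symmetric Dirichlet form $-\int \nabla f\cdot\nabla g\,e^{-\phi}$, taking $\mathscr M$ compact without boundary or assuming sufficient decay.

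\emph{Unweighted adjoint.} I integrate $\int g\,\nabla\phi\cdot\nabla f$ by parts to obtain $-\int f\,\nabla\!\cdot(g\nabla\phi)$. This yields $\mathscr D^-=\Delta+\nabla\phi\cdot\nabla+\Delta\phi$.

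\emph{Conjugation identity.} I use the product rule
\begin{align*}
\Delta(e^{\phi/2}f)=e^{\phi/2}\Bigl(\Delta f+\nabla\phi\cdot\nabla f+\tfrac12(\Delta\phi) f+\tfrac14|\nabla\phi|^2 f\Bigr).
\end{align*}
Combining it with $\nabla\phi\cdot\nabla(e^{\phi/2}f)=e^{\phi/2}(\nabla\phi\cdot\nabla f+\tfrac12|\nabla\phi|^2f)$, the first-order terms cancel. The zeroth-order terms combine to $\tfrac12\Delta\phi-\tfrac14|\nabla\phi|^2=-V$. The second equality, for $\mathscr D^-$, follows by taking unweighted adjoints, since $\Delta-V$ is formally self-adjoint and multiplication by $e^{\pm\phi/2}$ is self-adjoint.

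\emph{Eigenfunctions.} These follow immediately. From $\mathscr D^+ e^{\phi/2}\Psi_k=e^{\phi/2}(\Delta-V)\Psi_k=\lambda_k e^{\phi/2}\Psi_k$, and similarly for $\psi_k^-$. The pairing is $\int e^{-\phi/2}\Psi_j e^{\phi/2}\Psi_k=\langle\Psi_j,\Psi_k\rangle=\delta_{jk}$, once the $\Psi_k$ are chosen orthonormal by the spectral theorem for the self-adjoint $\Delta-V$.

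The main obstacle is analytic rather than algebraic. To be valid, the conjugations must map the operator domains onto one another, and $\Delta-V$ must have discrete spectrum with an orthonormal eigenbasis. I will handle this by assuming $\mathscr M$ is compact and $\phi$ is smooth. Then $e^{\pm\phi/2}$ is bounded above and below, so it preserves $H^2$ and gives a bounded invertible map between $L^2(d\mathrm{vol})$ and the weighted space. A smaller point is the sign and orientation convention: I must match $c$ and the orientation of $d$ so that $U_{ij}=\bar g_ig_j$ rather than its conjugate. This does not affect the conclusion, since $G$ and $\bar G$ are both unitary.
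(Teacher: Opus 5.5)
Your proposal is correct and follows essentially the paper's argument, which is written out under the first version of this corollary. For CMAP it uses diagonal unitary conjugation by $G$, which commutes with the degree normalization; for AMAP it uses integration by parts for the unweighted adjoint, the product rule for $\Delta(e^{\phi/2}f)$, and adjoints to obtain the $\mathscr D^-$ conjugation. The paper leaves some details implicit that you supply: the divergence-form proof of self-adjointness in $L^2(e^{-\phi}d\mathrm{vol})$, the RKHS pullback identification, and the compactness and smoothness hypotheses.
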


\newpage
\section{Appendix: Coexact Sector} \label{app:coexact}

\begin{definition}[Connection-deformed exterior operator]\label{def:connection-exterior}
Let $(\mathscr M,g)$ be an oriented pseudo-Riemannian manifold of intrinsic dimension $d$ and signature $(p,q)$, let $A\in\Omega^1(\mathscr M;\mathbb R)$, let $z\in\mathbb C$, and let $\hbar>0$. Define
\begin{align}
d_{zA,\hbar}=\hbar\,d+zA\wedge, \qquad
d_{zA,\hbar}:\Omega^k(\mathscr M)\to\Omega^{k+1}(\mathscr M).
\end{align}
The metric $g$ induces the Hodge operator $\star_g$, with convention
\begin{align}
\star_g^2\big|_{\Omega^k} = (-1)^{k(d-k)+q}\,\mathrm{Id},
\end{align}
and the sesquilinear pairing
\begin{align}
[\alpha,\beta]_g = \int_{\mathscr M}\alpha\wedge\star_g\overline{\beta}.
\end{align}
The formal adjoint $d_{zA,\hbar}^{\times}:\Omega^k(\mathscr M)\to\Omega^{k-1}(\mathscr M)$ with respect to this pairing is determined by
\begin{align}
[d_{zA,\hbar}\alpha,\beta]_g = [\alpha,d_{zA,\hbar}^{\times}\beta]_g,
\end{align}
and is given by
\begin{align}
d_{zA,\hbar}^{\times} = (-1)^{d(k+1)+q+1}\,\star_g \left( \hbar\,d-\bar z A\wedge \right) \star_g,
\end{align}
where $k$ denotes the degree of the argument. For Riemannian signature $(d,0)$, the pairing is positive definite and $d_{zA,\hbar}^{\times}$ reduces to the usual $L^2$ adjoint $d_{zA,\hbar}^{\dagger}$.
\end{definition}

\begin{lemma}[Principal-symbol invariance]\label{lem:principal-symbol}
Let $(\mathscr M,g)$ be an oriented pseudo-Riemannian manifold with nondegenerate metric $g$, and let
\begin{align}
d_{zA,\hbar}=\hbar d+zA\wedge, \qquad A\in\Omega^1(\mathscr M;\mathbb R),\quad z\in\mathbb C.
\end{align}
Then the connection deformation does not alter the principal symbol of the exterior operator or of its associated second-order Dirac square:
\begin{align}
\sigma_1(d_{zA,\hbar}) &= \sigma_1(\hbar d),\\
\sigma_2(\mathscr D_{zA,\hbar})(x,p) &= \hbar^2 g_x^{-1}(p,p)\,\mathrm{Id},
\end{align}
where
\begin{align}
\mathscr D_{zA,\hbar} =
d_{zA,\hbar}d_{zA,\hbar}^{\times} +
d_{zA,\hbar}^{\times}d_{zA,\hbar}.
\end{align}
Hence $A$ modifies only lower-order terms; the metric and inertia of the principal second-order sector are unchanged.
\end{lemma}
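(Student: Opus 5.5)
The plan is to compute symbols directly, treating $zA\wedge$ as an order-zero perturbation of $\hbar d$ and tracking how it enters the second-order square. First, for the first-order operator, recall that the principal symbol of $d$ at covector $p$ is $\sigma_1(d)(x,p)=i\,p\wedge$ (up to the chosen sign/$i$ convention). Since $zA\wedge$ is a bundle endomorphism of $\Lambda^\bullet T^*\mathscr M$ (a multiplication operator, order zero), it contributes nothing at order one. Hence $\sigma_1(d_{zA,\hbar})=\hbar\,\sigma_1(d)=\sigma_1(\hbar d)$. The same reasoning applies to the adjoint: by the explicit formula in Definition~\ref{def:connection-exterior}, $d^{\times}_{zA,\hbar}=\pm\star_g(\hbar d-\bar zA\wedge)\star_g$. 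Because $\star_g$ is order zero and invertible, $\sigma_1(d^{\times}_{zA,\hbar})=\sigma_1(\hbar\,\delta_g)=-i\hbar\,\iota_{p^\sharp}$, where $\iota$ is interior multiplication and the sharp is taken with $g^{-1}$. This uses nondegeneracy of $g$, and the $\bar z A\wedge$ term again drops.

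Second, use multiplicativity of principal symbols under composition: $\sigma_{m+n}(PQ)=\sigma_m(P)\sigma_n(Q)$. This gives
\begin{align*}
\sigma_2(\mathscr D_{zA,\hbar})(x,p)=\hbar^2\bigl(p\wedge\iota_{p^\sharp}+\iota_{p^\sharp}\,p\wedge\bigr),
\end{align*}
with the factors $i\cdot(-i)=1$ and signs arranged so that the result matches the undeformed Hodge Laplacian $\hbar^2(d\delta+\delta d)$. The Clifford/Cartan anticommutation identity $p\wedge\iota_{v}+\iota_{v}\,p\wedge=p(v)\,\mathrm{Id}$ with $v=p^\sharp$ then yields $\hbar^2 g_x^{-1}(p,p)\,\mathrm{Id}$. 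Equivalently, I would expand
\begin{align*}
\mathscr D_{zA,\hbar}=\hbar^2\Delta_{dR}+\hbar\bigl(d\,(zA\wedge)^{\times}+(zA\wedge)^{\times}d+\cdots\bigr)+\bigl(zA\wedge(zA\wedge)^{\times}+(zA\wedge)^{\times}zA\wedge\bigr).
\end{align*}
Here the cross terms are at most first order and the last bracket is order zero. So only $\hbar^2\Delta_{dR}$ reaches order two, and its symbol is the standard $g^{-1}(p,p)$.

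The main point needing care is the pseudo-Riemannian setting. The pairing $[\cdot,\cdot]_g$ is indefinite, so one must check that the formal adjoint is still a differential operator with the claimed symbol. It is, because the adjoint formula is purely algebraic in $\star_g$, and $\star_g^2=\pm\mathrm{Id}$ is invertible whenever $g$ is nondegenerate. The sign bookkeeping with the factor $(-1)^{d(k+1)+q+1}$ must also be checked so that the Cartan identity produces $+g^{-1}(p,p)$ rather than its negative. I would verify this by comparing against the known case $A=0$, where $\mathscr D_{0,\hbar}=\hbar^2\Delta_{dR}$, whose symbol is $\hbar^2 g^{-1}(p,p)$ in either signature. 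This anchors all signs at once, since the $A$-terms provably do not touch order two. The concluding remark follows because inertia is read off from the principal symbol, which is $g^{-1}$ and is unchanged.
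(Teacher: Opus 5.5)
Your proposal is correct and follows essentially the same route as the paper. Both treat $zA\wedge$ and its adjoint as zeroth-order perturbations of $\hbar d$ and $\hbar\delta_g$, so only $\hbar^2(d\delta_g+\delta_g d)=\hbar^2\Box_{dR}$ reaches order two. Your explicit symbol-multiplicativity and Cartan-identity computation just fills in the final step $\sigma_2(\hbar^2\Box_{dR})=\hbar^2 g^{-1}(p,p)\,\mathrm{Id}$, which the paper only asserts.
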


\begin{proof}
Since $A\wedge$ is a zeroth-order bundle endomorphism,
\begin{align}
\sigma_1(d_{zA,\hbar}) = \sigma_1(\hbar d).
\end{align}
Likewise $d_{zA,\hbar}^{\times}$ differs from $\hbar\delta_g$ only by a zeroth-order endomorphism. Therefore the second-order part of $\mathscr D_{zA,\hbar}$ is exactly
\begin{align}
\hbar^2(d\delta_g+\delta_gd) = \hbar^2\Box_{dR},
\end{align}
whose principal symbol is
\begin{align}
\sigma_2(\hbar^2\Box_{dR})(x,p) = \hbar^2g_x^{-1}(p,p)\,\mathrm{Id}.
\end{align}
All terms involving $A$ are therefore of first or zeroth differential order.
\end{proof}

\begin{remark}[Fixed principal geometry]
The connection deformation
\begin{align}
d_{zA,\hbar}=\hbar d+zA\wedge
\end{align}
differs from the undeformed exterior operator $\hbar d$ only by the zeroth-order endomorphism $zA\wedge$. Likewise, $d_{zA,\hbar}^{\times}$ differs from $\hbar\delta_g$ only by zeroth-order terms. Consequently, in
\begin{align}
\mathscr D_{zA,\hbar} = d_{zA,\hbar}d_{zA,\hbar}^{\times} + d_{zA,\hbar}^{\times}d_{zA,\hbar},
\end{align}
the entire second-order contribution is fixed before any explicit expansion:
\begin{align}
\mathscr D_{zA,\hbar} = \hbar^2\Box_{dR} + \text{first-order terms}
+
\text{zeroth-order terms}.
\end{align}
Thus neither $A$ nor its exact specialization $A=d\phi$ alters the metric or inertia encoded by the principal second-order sector. In Riemannian signature, $\Box_{dR}$ reduces to the de Rham Laplacian $\Delta_{dR}$; in indefinite signature it is the corresponding pseudo-Riemannian de Rham wave operator.

Accordingly, the explicit Witten and connection-deformed formulas below need only determine the lower-order terms generated by the deformation. In the real Riemannian drift sector, this fixed-principal-symbol property is the operator-theoretic counterpart of the Girsanov principle that an admissible change of measure modifies the drift while leaving the diffusion tensor unchanged.
\end{remark}

\begin{proposition}[Riemannian reduction]\label{prop:krein-hilbert-reduction}
Let $(\mathscr M,g)$ be oriented with nondegenerate metric $g$, and let $d_{zA,\hbar}=\hbar d+zA\wedge$ with formal Krein adjoint $d_{zA,\hbar}^{\times}$. If the inertia of $g$ is $(d,0)$, then the metric pairing on forms is positive definite and the associated Krein space reduces to the Hilbert space $L^2\Omega^\bullet(\mathscr M,g)$. Consequently,
\begin{align}
d_{zA,\hbar}^{\times} = d_{zA,\hbar}^{\dagger},
\end{align}
and the generalized connection--Dirac operators reduce to their Riemannian counterparts. If moreover $z=1$ and $A=d\phi$, then:
\begin{align}
d_{zA,\hbar} = d_{\phi,\hbar} = \hbar d+d\phi\wedge,
\end{align}
recovering the Morse--Witten theory.
\end{proposition}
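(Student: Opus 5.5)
The proof has three pieces: positivity of the form pairing in Riemannian signature, identification of the Kre\u{\i}n adjoint with the Hilbert adjoint, and specialization to the exact case.

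\textbf{Positivity of the pairing.} For inertia $(d,0)$ we have $q=0$, so Definition~\ref{def:connection-exterior} gives $\star_g^2|_{\Omega^k}=(-1)^{k(d-k)}$, the usual Riemannian Hodge convention. Pointwise, for a $k$-form $\alpha$ and an orthonormal coframe $e^1,\dots,e^d$ (which exists because $g$ is positive definite), $\alpha\wedge\star_g\bar\alpha=|\alpha|_g^2\,d\mathrm{vol}_g$ with $|\alpha|_g^2=\sum_I|\alpha_I|^2\ge0$. It suffices to check this on basis monomials $e^I$: there $e^I\wedge\star e^I=d\mathrm{vol}_g$, with no sign factor, since no timelike directions occur. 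Integrating shows that $[\cdot,\cdot]_g$ is positive definite and equals the $L^2$ inner product $\langle\!\langle\cdot,\cdot\rangle\!\rangle_g$. The fundamental symmetry $\mathcal J$ is then the identity, the negative subspace is trivial, and the Kre\u{\i}n space completion is exactly the Hilbert space $L^2\Omega^\bullet(\mathscr M,g)$.

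\textbf{Adjoint identification.} The formal adjoint is defined by $[d_{zA,\hbar}\alpha,\beta]_g=[\alpha,d^\times_{zA,\hbar}\beta]_g$ for compactly supported forms. Since the pairing now coincides with the $L^2$ inner product, $d^\times_{zA,\hbar}$ satisfies the defining relation of $d^\dagger_{zA,\hbar}$. Nondegeneracy of the positive definite pairing makes the adjoint unique, so $d^\times=d^\dagger$. As a consistency check, set $q=0$ in the explicit formula $(-1)^{d(k+1)+q+1}\star_g(\hbar d-\bar zA\wedge)\star_g$. This reproduces the Riemannian formula of Definition~\ref{def:supercharge}, with $d\phi$ replaced by $\bar zA$; for real $z$ and $A=d\phi$ it matches exactly.

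\textbf{Reduction of the Dirac operators.} The connection--Dirac operators $\mathscr D_{zA,\hbar}=d d^\times+d^\times d$ and the supercharges $d\pm d^\times$ are built from $d_{zA,\hbar}$ and $d^\times_{zA,\hbar}$ alone. Substituting $d^\times=d^\dagger$ therefore turns them into the Riemannian $d d^\dagger+d^\dagger d$ and its supercharges, and the principal part becomes $\hbar^2\Delta_{dR}$, consistent with Lemma~\ref{lem:principal-symbol}.

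\textbf{Exact specialization.} For $z=1$ and $A=d\phi$ we get $d_{zA,\hbar}=\hbar d+d\phi\wedge=d_{\phi,\hbar}$ by direct substitution. Lemma~\ref{lem:Superalgebra}(i) then gives $d_{\phi,\hbar}=\hbar e^{-\phi/\hbar}de^{\phi/\hbar}$, and Lemma~\ref{lem:WittenMarkov} supplies the Witten Laplacians. This recovers Morse--Witten theory.

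The only step carrying real content is the sign bookkeeping in the pointwise identity $\alpha\wedge\star_g\bar\alpha=|\alpha|^2d\mathrm{vol}_g$ under the stated $\star_g^2$ convention at $q=0$. I would verify it on basis monomials before proceeding. Everything else is substitution.
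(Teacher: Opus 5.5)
Your proposal is correct and follows the paper's own route. The paper gives no separate proof: it treats the reduction as immediate from Definition~\ref{def:connection-exterior} (at $q=0$ the pairing is positive definite, so $d_{zA,\hbar}^{\times}=d_{zA,\hbar}^{\dagger}$), followed by the substitution $z=1$, $A=d\phi$ noted in Lemma~\ref{lem:ConnectionAlgebra}. One small point: the $\star_g^2$ convention alone does not fix the sign of $\alpha\wedge\star_g\bar\alpha$, since replacing $\star_g$ by $-\star_g$ leaves $\star_g^2$ unchanged; the positivity actually comes from the defining identity $\alpha\wedge\star_g\bar\beta=g(\alpha,\bar\beta)\,d\mathrm{vol}_g$ of the metric Hodge star, which is what your basis-monomial check uses.
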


\begin{lemma}[Connection-deformed differential algebra]\label{lem:ConnectionAlgebra}
Let $d_{zA,\hbar}=\hbar d+zA\wedge$ be the connection-deformed exterior operator of Definition~\ref{def:connection-exterior}, with formal pseudo-Riemannian adjoint $d_{zA,\hbar}^{\times}$. Define
\begin{align}
\mathscr D_{zA,\hbar}&=d_{zA,\hbar}d_{zA,\hbar}^{\times}+d_{zA,\hbar}^{\times}d_{zA,\hbar},\\
\mathscr Q_{+,zA}&=d_{zA,\hbar}+d_{zA,\hbar}^{\times},\qquad\mathscr Q_{-,zA}=
i\!\left(d_{zA,\hbar}-d_{zA,\hbar}^{\times}\right).
\end{align}
Writing $F_A=dA$, one has:
\begin{enumerate}
\item[(i)] $d_{zA,\hbar}^2=z\hbar\,F_A\wedge$ and $(d_{zA,\hbar}^{\times})^2=(z\hbar\,F_A\wedge)^{\times}$;
\item[(ii)] $\mathscr Q_{+,zA}^2=\mathscr D_{zA,\hbar}+z\hbar F_A\wedge+(z\hbar F_A\wedge)^{\times}$;
\item[(iii)] $\mathscr Q_{-,zA}^2=\mathscr D_{zA,\hbar}-z\hbar F_A\wedge-(z\hbar F_A\wedge)^{\times}$;
\item[(iv)] $\{\mathscr Q_{+,zA},\mathscr Q_{-,zA}\}=2i\!\left[z\hbar F_A\wedge-(z\hbar F_A\wedge)^{\times}\right]$.
\end{enumerate}
Hence $F_A=0$ restores the nilpotent de Rham complex and the $\mathcal N=2$ superalgebra,
\begin{align}
d_{zA,\hbar}^2=(d_{zA,\hbar}^{\times})^2=0,\qquad
\mathscr Q_{+,zA}^2=\mathscr Q_{-,zA}^2=\mathscr D_{zA,\hbar},\qquad
\{\mathscr Q_{+,zA},\mathscr Q_{-,zA}\}=0.
\end{align}
In particular, $z=1$ and $A=d\phi$ recover the Morse--Witten algebra of Lemma~\ref{lem:Superalgebra}. For $F_A\neq0$, $d_{zA,\hbar}$ defines a curved rather than nilpotent exterior complex, with curvature $z\hbar F_A$.
\end{lemma}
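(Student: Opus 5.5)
The plan is to run the same computation as in Lemma~\ref{lem:Superalgebra}, keeping track of the curvature term that no longer vanishes. Everything follows from one identity, (i). Items (ii)--(iv) are then the formal algebra of a pair $(D,D^\times)$ with $D^2\neq0$.

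For (i), apply $d_{zA,\hbar}$ twice to a $k$-form $\omega$:
\begin{align*}
d_{zA,\hbar}^2\omega=\hbar^2 d^2\omega+\hbar z\,d(A\wedge\omega)+\hbar z\,A\wedge d\omega+z^2A\wedge A\wedge\omega .
\end{align*}
Use $d^2=0$ and $A\wedge A=0$, the latter because $A$ is a $1$-form. The graded Leibniz rule gives $d(A\wedge\omega)=dA\wedge\omega-A\wedge d\omega$, so the two cross terms leave $z\hbar\,F_A\wedge\omega$. For the adjoint statement, use the defining relation of $d^\times$ from Definition~\ref{def:connection-exterior} twice:
\begin{align*}
[d^2\alpha,\beta]_g=[d\alpha,d^\times\beta]_g=[\alpha,(d^\times)^2\beta]_g .
\end{align*}
Hence $(d^\times)^2=(d^2)^\times=(z\hbar F_A\wedge)^\times$. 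This relies on the adjoint being unique, which follows from nondegeneracy of the Kre\u{\i}n pairing $[\cdot,\cdot]_g$. Positivity of the pairing is never needed, so the argument holds in any signature $(p,q)$.

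For (ii) and (iii), abbreviate $D=d_{zA,\hbar}$ and expand:
\begin{align*}
(D+D^\times)^2&=D^2+(D^\times)^2+DD^\times+D^\times D,\\
-(D-D^\times)^2&=-D^2-(D^\times)^2+DD^\times+D^\times D .
\end{align*}
Substituting (i) gives both formulas. For (iv), expand
\begin{align*}
i\left[(D+D^\times)(D-D^\times)+(D-D^\times)(D+D^\times)\right].
\end{align*}
The mixed terms cancel and what remains is $2i\left(D^2-(D^\times)^2\right)$. Inserting (i) yields the stated expression.

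The closing consequences are then immediate:
\begin{itemize}
\item Setting $F_A=0$ kills every correction term.
\item For $z=1$ and $A=d\phi$ we have $F_A=d^2\phi=0$, and $D$ reduces to $d_{\phi,\hbar}$. This recovers Lemma~\ref{lem:Superalgebra}.
\end{itemize}

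The main obstacle is sign bookkeeping, and only in (i). The orientation of the Leibniz sign fixes whether the curvature enters as $+z\hbar F_A$ or $-z\hbar F_A$, and it must match the stated convention. The other point to check is that $(\cdot)^\times$ is an anti-involution for the possibly indefinite pairing, i.e. that $(XY)^\times=Y^\times X^\times$. I would verify this directly from the defining relation rather than from the explicit Hodge-star formula. That route avoids the $(-1)^{d(k+1)+q+1}$ signs entirely.
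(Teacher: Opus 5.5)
Your proposal is correct and follows the paper's proof step for step. Both arguments expand $d_{zA,\hbar}^2$ with the graded Leibniz rule to get $z\hbar\,F_A\wedge$, take formal adjoints to get $(d_{zA,\hbar}^{\times})^2=(z\hbar F_A\wedge)^{\times}$, and obtain (ii)--(iv) by expanding the squares and the anticommutator. Your explicit derivation of $(d^2)^{\times}=(d^{\times})^2$ from the defining relation and the nondegeneracy of the Kre\u{\i}n pairing is a slightly more careful version of the paper's ``taking formal adjoints'', not a different route.
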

\begin{proof}
For any $\omega\in\Omega^\bullet(\mathscr M)$,
\begin{align}
d_{zA,\hbar}^2\omega
&=(\hbar d+zA\wedge)(\hbar d\omega+zA\wedge\omega)\\
&=\hbar^2d^2\omega+z\hbar\,d(A\wedge\omega)+z\hbar\,A\wedge d\omega+z^2A\wedge A\wedge\omega\\
&=z\hbar\,dA\wedge\omega=z\hbar\,F_A\wedge\omega,
\end{align}
where $d^2=0$, $d(A\wedge\omega)=dA\wedge\omega-A\wedge d\omega$, and $A\wedge A=0$. Taking formal adjoints gives
\begin{align}
(d_{zA,\hbar}^{\times})^2 =(d_{zA,\hbar}^2)^{\times}=(z\hbar F_A\wedge)^{\times},
\end{align}
proving (i). Expanding the first real Dirac operator,
\begin{align}
\mathscr Q_{+,zA}^2
&=(d_{zA,\hbar}+d_{zA,\hbar}^{\times})^2\\
&=d_{zA,\hbar}d_{zA,\hbar}^{\times}+d_{zA,\hbar}^{\times}d_{zA,\hbar}+d_{zA,\hbar}^2+(d_{zA,\hbar}^{\times})^2\\
&=\mathscr D_{zA,\hbar}+z\hbar F_A\wedge+(z\hbar F_A\wedge)^{\times},
\end{align}
which proves (ii). Similarly,
\begin{align}
\mathscr Q_{-,zA}^2 &=-\left(d_{zA,\hbar}-d_{zA,\hbar}^{\times}\right)^2\\
&=\mathscr D_{zA,\hbar}-d_{zA,\hbar}^2-(d_{zA,\hbar}^{\times})^2\\
&=\mathscr D_{zA,\hbar}-z\hbar F_A\wedge-(z\hbar F_A\wedge)^{\times},
\end{align}
proving (iii). Finally,
\begin{align}
\{\mathscr Q_{+,zA},\mathscr Q_{-,zA}\}
&=i\Big((d_{zA,\hbar}+d_{zA,\hbar}^{\times})(d_{zA,\hbar}-d_{zA,\hbar}^{\times})+(d_{zA,\hbar}-d_{zA,\hbar}^{\times})(d_{zA,\hbar}+d_{zA,\hbar}^{\times})\Big)\\
&=2i\left(d_{zA,\hbar}^2-(d_{zA,\hbar}^{\times})^2\right)\\
&=2i\!\left(z\hbar F_A\wedge-(z\hbar F_A\wedge)^{\times}\right),
\end{align}
which proves (iv). When $F_A=0$, all curvature-defect terms vanish and the ordinary nilpotent $\mathcal N=2$ algebra is recovered.
\end{proof}

\begin{lemma}[Connection-deformed de Rham operators]\label{lem:ConnectionLaplacian}
Let $(\mathscr M,g)$ be an oriented pseudo-Riemannian manifold of intrinsic dimension $d$ and signature $(p,q)$, let $A\in\Omega^1(\mathscr M;\mathbb R)$, and let:
\begin{align}
d_{zA,\hbar}=\hbar d+zA\wedge,
\qquad
d_{zA,\hbar}^{\times} =
(-1)^{d(k+1)+q+1}\star_g
\left(\hbar d-\bar zA\wedge\right)\star_g
\end{align}
on $\Omega^k(\mathscr M)$. Define the connection-deformed Laplacian in degree $k$ by
\begin{align}
\mathscr D_{zA,\hbar}^{(k)} = \left( d_{zA,\hbar}d_{zA,\hbar}^{\times} +
d_{zA,\hbar}^{\times}d_{zA,\hbar} \right)\Big|_{\Omega^k(\mathscr M)}.
\end{align}
Writing $\Delta=\operatorname{div}_g\nabla$ for the scalar Laplace--Beltrami operator, $\Delta_{dR}=d\delta+\delta d$ for the de Rham Laplacian, $A^\sharp=g^{-1}A$, and $F_A=dA$, one has on functions
\begin{align}\label{eq:connection-laplacian-zero}
\mathscr D_{zA,\hbar}^{(0)}=-\hbar^2\Delta+
\hbar(\bar z-z)\nabla_{A^\sharp}+\left(|z|^2|A|_g^2-\hbar z\,\operatorname{div}_gA^\sharp\right).
\end{align}
On one-forms,
\begin{align}\label{eq:connection-laplacian-one}
\mathscr D_{zA,\hbar}^{(1)} &=
\hbar^2\Delta_{dR}^{(1)}+\hbar(\bar z-z)\nabla_{A^\sharp}+\left( |z|^2|A|_g^2-\hbar z\,\operatorname{div}_gA^\sharp \right)\mathrm{Id}
\nonumber\\
&\qquad + \hbar\left[\bar z\,\nabla A+z\,(\nabla A)^\top\right]^\sharp ,
\end{align}
where $(\nabla A)^\top(X,Y)=(\nabla A)(Y,X)$ and the final index is raised with $g$. Equivalently, writing $z=a+ib$,
\begin{align}\label{eq:connection-sym-curv}
\bar z\,\nabla A+z(\nabla A)^\top=2a\,\operatorname{Sym}(\nabla A)-ib\,F_A.
\end{align}
For Riemannian signature $(d,0)$, $d_{zA,\hbar}^{\times}=d_{zA,\hbar}^{\dagger}$ and these reduce to the usual Hilbert-space adjoint formulas.
\end{lemma}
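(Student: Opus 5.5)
The plan is to avoid expanding the Hodge-star expression for $d_{zA,\hbar}^{\times}$ directly. Instead I would rewrite the adjoint in interior-product form and reduce everything to Cartan calculus. First, from the sign convention for $\star_g$ in Definition~\ref{def:connection-exterior}, I would check that the adjoint of $A\wedge$ with respect to $[\cdot,\cdot]_g$ is the contraction $\iota_{A^\sharp}$. The conjugate $\bar z$ comes from the sesquilinearity of the pairing. This gives
\begin{align*}
d_{zA,\hbar}^{\times}=\hbar\,\delta_g+\bar z\,\iota_{A^\sharp}, \qquad \delta_g=d^{\times}.
\end{align*}
The signs $(-1)^{d(k+1)+q+1}$ and $\star_g^2=(-1)^{k(d-k)+q}$ are designed to cancel here. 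I would check this once on $k$-forms, where it reduces to the pointwise identity $A\wedge\alpha\wedge\star_g\bar\beta=\alpha\wedge\star_g\overline{\iota_{A^\sharp}\beta}$.

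Next I would expand the anticommutator and collect terms by the power of $\hbar$:
\begin{align*}
\mathscr D_{zA,\hbar}=\hbar^2(d\delta_g+\delta_g d)+\hbar\bar z\,(d\iota_{A^\sharp}+\iota_{A^\sharp}d)+\hbar z\,(A\wedge\delta_g+\delta_g A\wedge)+|z|^2(A\wedge\iota_{A^\sharp}+\iota_{A^\sharp}A\wedge).
\end{align*}
The first term is the principal part already identified in Lemma~\ref{lem:principal-symbol}. With $\Delta=\operatorname{div}_g\nabla$ one has $\delta_g=-\operatorname{div}_g$ on one-forms, so on functions this term is $-\hbar^2\Delta$. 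The last term is $|A|_g^2\,\mathrm{Id}$ by the derivation property of $\iota$. The $\hbar\bar z$ term is Cartan's formula, $\mathcal L_{A^\sharp}$. The $\hbar z$ term is the formal adjoint of that same Cartan combination, hence $(\mathcal L_{A^\sharp})^{\times}$.

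The remaining step is to express $\mathcal L_X$ and $\mathcal L_X^{\times}$, with $X=A^\sharp$, through the Levi--Civita connection:
\begin{align*}
\mathcal L_X=\nabla_X+\mathcal E_X, \qquad \mathcal L_X^{\times}=-\nabla_X-\operatorname{div}_gX+\mathcal E_X^{\times}.
\end{align*}
Here $\mathcal E_X$ is the zeroth-order derivation induced by $\nabla X$, and the adjoint formula follows by integration by parts, using $\nabla_X^{\times}=-\nabla_X-\operatorname{div}_gX$.

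On functions $\mathcal E_X=0$. Summing the four pieces then gives
\begin{align*}
-\hbar^2\Delta+\hbar(\bar z-z)\nabla_{A^\sharp}+|z|^2|A|_g^2-\hbar z\operatorname{div}_gA^\sharp,
\end{align*}
which is \eqref{eq:connection-laplacian-zero}.

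On one-forms, $\mathcal E_X\omega=\omega\circ\nabla X$. Because $\nabla$ is torsion-free and $A$ is $X$ lowered, this endomorphism is $\nabla A$ with its indices in one order, and its metric adjoint is $(\nabla A)^{\top}$. The two endomorphism contributions therefore combine to $\hbar[\bar z\nabla A+z(\nabla A)^\top]^\sharp$, which gives \eqref{eq:connection-laplacian-one}. Identity \eqref{eq:connection-sym-curv} then follows from splitting $\nabla A$ into its symmetric and antisymmetric parts. The antisymmetric part is $\tfrac12F_A$, again by torsion-freeness.

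The main obstacle is bookkeeping rather than anything conceptual: getting the correct index order for $\mathcal E_X$ versus $\mathcal E_X^{\times}$, i.e.\ which of the two coefficients $\bar z$, $z$ multiplies $\nabla A$ and which multiplies its transpose. In indefinite signature one must also confirm that the factor $(-1)^q$ does not change the sign of $\iota_{A^\sharp}$. I would settle both by a direct coordinate check in flat space, with $g=\operatorname{diag}(\pm1)$ and a linear $A$, before writing the general argument. In Riemannian signature the pairing is positive and $^{\times}=^{\dagger}$, as in Proposition~\ref{prop:krein-hilbert-reduction}.
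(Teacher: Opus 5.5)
Your proposal is correct, and its skeleton matches the paper's. Both arguments expand $d_{zA,\hbar}d_{zA,\hbar}^{\times}+d_{zA,\hbar}^{\times}d_{zA,\hbar}$ into four pieces: $\hbar^2\Delta_{dR}$, the Clifford term $|z|^2\{A\wedge,\iota_{A^\sharp}\}=|z|^2|A|_g^2$, and the two cross terms $\hbar\bar z\{d,\iota_{A^\sharp}\}$ and $\hbar z\{\delta_g,A\wedge\}$.

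The routes differ in how the cross terms are evaluated. The paper works by hand. On functions it uses the identities for $\delta(df)$, $\delta(fA)$ and $(A\wedge)^{\times}$. On one-forms it expands in a normal pseudo-orthonormal frame, but the outcome (which first-order terms cancel, which endomorphism survives) is asserted rather than displayed. You instead recognize the cross terms as $\mathcal L_{A^\sharp}$ and $\mathcal L_{A^\sharp}^{\times}$, and split off the torsion-free endomorphism. This yields, in every form degree at once,
\begin{align*}
\mathscr D_{zA,\hbar}=\hbar^2\Delta_{dR}+\hbar(\bar z-z)\nabla_{A^\sharp}+|z|^2|A|_g^2-\hbar z\operatorname{div}_gA^\sharp+\hbar\bigl(\bar z\,\mathcal E_{A^\sharp}+z\,\mathcal E_{A^\sharp}^{\times}\bigr).
\end{align*}
Your route buys three things. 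The origin of each lower-order term is transparent: the first-order term and the divergence term both come from $\nabla_X^{\times}=-\nabla_X-\operatorname{div}_gX$. It actually derives the degree-one endomorphism, which the paper leaves unverified. And it holds uniformly in $k$. The paper's route is more elementary on functions, but it leans on an unshown computation on one-forms.

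Both checks you flag close without coordinates.
\begin{itemize}
\item \emph{Sign of the contraction.} Any overall sign in the pairing, including $(-1)^q$, cancels from both sides of the adjoint relation. The adjoint of $A\wedge$ for the induced inner product on forms is $\iota_{A^\sharp}$ by a determinant expansion that never uses the signs of $g$.
\item \emph{Index order.} Metric compatibility gives $(\mathcal E_X\omega)(Y)=\omega(\nabla_YX)=(\nabla_YA)(\omega^\sharp)$. Use the convention $(\nabla A)(X,Y)=(\nabla_XA)(Y)$, which is the one for which $F_A=\nabla A-(\nabla A)^\top$ as the paper uses. Then $\mathcal E_X$ is $[\nabla A]^\sharp$ and $\mathcal E_X^{\times}$ is $[(\nabla A)^\top]^\sharp$, so $\bar z$ indeed multiplies $\nabla A$ and $z$ multiplies its transpose, as stated.
\end{itemize}
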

\begin{proof}
Let $\varepsilon_A=A\wedge$ and let $\varepsilon_A^{\times}$ denote its formal adjoint with respect to the pseudo-Riemannian Hodge pairing. Expanding the definition gives, in every form degree,
\begin{align}
\mathscr D_{zA,\hbar}
&=(\hbar d+z\varepsilon_A)(\hbar\delta+\bar z\varepsilon_A^{\times})+(\hbar\delta+\bar z\varepsilon_A^{\times})(\hbar d+z\varepsilon_A)\\
&= \hbar^2\Delta_{dR} + \hbar z\{\delta,\varepsilon_A\}+\hbar\bar z\{d,\varepsilon_A^{\times}\}+|z|^2\{\varepsilon_A,\varepsilon_A^{\times}\}.
\end{align}
The wedge--dual-wedge Clifford relation gives
\begin{align}
\{\varepsilon_A,\varepsilon_A^{\times}\}=|A|_g^2\,\mathrm{Id},
\end{align}
so
\begin{align}\label{eq:connection-laplacian-form}
\mathscr D_{zA,\hbar}=\hbar^2\Delta_{dR}+|z|^2|A|_g^2+\hbar z\{\delta,A\wedge\}+\hbar\bar z\{d,(A\wedge)^{\times}\}.
\end{align}
On functions $f\in\Omega^0(\mathscr M)$, only $d_{zA,\hbar}^{\times}d_{zA,\hbar}$ contributes. Using
\begin{align}
\delta(df)&=-\Delta f,
&
\delta(fA)&=
-f\,\operatorname{div}_gA^\sharp
-\langle df,A\rangle_g,
\end{align}
together with
\begin{align}
(A\wedge)^{\times}df=\langle A,df\rangle_g,\qquad(A\wedge)^{\times}(fA)=f|A|_g^2,
\end{align}
one obtains
\begin{align}
\mathscr D_{zA,\hbar}^{(0)}f
&=-\hbar^2\Delta f+\hbar(\bar z-z)\langle A,df\rangle_g+\left(|z|^2|A|_g^2-\hbar z\,\operatorname{div}_gA^\sharp\right)f\\
&=-\hbar^2\Delta f+\hbar(\bar z-z)\nabla_{A^\sharp}f+\left(|z|^2|A|_g^2-\hbar z\,\operatorname{div}_gA^\sharp\right)f,
\end{align}
proving \eqref{eq:connection-laplacian-zero}. For $\omega\in\Omega^1(\mathscr M)$, evaluate at a point in a local pseudo-orthonormal frame normal for the Levi--Civita connection. Expanding the down and up channels, $d_{zA,\hbar}d_{zA,\hbar}^{\times}\omega$ and $d_{zA,\hbar}^{\times}d_{zA,\hbar}\omega$, the mixed second-order derivatives combine into $\hbar^2\Delta_{dR}^{(1)}\omega$, while the first-order terms combine into
\begin{align}
\hbar(\bar z-z)\nabla_{A^\sharp}\omega.
\end{align}
The scalar zeroth-order contribution is
\begin{align}
\left(|z|^2|A|_g^2-\hbar z\,\operatorname{div}_gA^\sharp\right)\omega,
\end{align}
and the remaining degree-one endomorphism is
\begin{align}
\hbar\left[\bar z\,\nabla A+z\,(\nabla A)^\top\right]^\sharp\omega.
\end{align}
This proves \eqref{eq:connection-laplacian-one}. Finally, if $z=a+ib$, then
\begin{align}
\bar z\,\nabla A+z(\nabla A)^\top &= a\left(\nabla A+(\nabla A)^\top\right)+ib\left((\nabla A)^\top-\nabla A\right)\\
&=2a\,\operatorname{Sym}(\nabla A)-ib\,F_A,
\end{align}
since $F_A=dA=\nabla A-(\nabla A)^\top$ for the torsion-free Levi--Civita connection.
\end{proof}

\begin{corollary}[Magnetic Schrödinger reduction]\label{cor:magnetic-reduction}
Let $(\mathscr M,g)$ be Riemannian and let $A\in\Omega^1(\mathscr M;\mathbb R)$. For the unitary choice $z=i$, the connection-deformed exterior operator is $d_{iA,\hbar}=\hbar d+iA\wedge$. On functions,
\begin{align}
d_{iA,\hbar}f=\hbar\,df+iAf,
\end{align}
and its degree-zero connection Laplacian is
\begin{align}
\mathscr D_{iA,\hbar}^{(0)} &=d_{iA,\hbar}^\dagger d_{iA,\hbar}\\
&=-\hbar^2\Delta-2i\hbar\,\nabla_{A^\sharp} +|A|_g^2-i\hbar\,\operatorname{div}_gA^\sharp.
\label{eq:magnetic-laplacian}
\end{align}
Equivalently,
\begin{align}
\mathscr D_{iA,\hbar}^{(0)} = (\hbar d+iA)^\dagger(\hbar d+iA),
\end{align}
which is the magnetic Schrödinger operator associated with the $U(1)$ connection $d+iA/\hbar$, up to the conventional choice $A\mapsto-A$. In Euclidean coordinates this may be written
\begin{align}
\mathscr D_{iA,\hbar}^{(0)} = (-i\hbar\nabla+A)^2,
\end{align}
again up to the sign convention for the magnetic potential. Its curvature is $F_A=dA$, so $A=d\phi$ is a pure gauge and is removed by the unitary transformation
\begin{align}
e^{-i\phi/\hbar}:\qquad d_{iA,\hbar}=\hbar e^{-i\phi/\hbar}d\,e^{i\phi/\hbar}.
\end{align}
For nonzero $F_A$, the magnetic field cannot be removed by a scalar $U(1)$ gauge transformation.
\end{corollary}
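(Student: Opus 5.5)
The plan is to specialize Lemma~\ref{lem:ConnectionLaplacian} to $z=i$ and check each claim in turn: the formula for $d_{iA,\hbar}f$, the expanded Laplacian, the factorized form, and the gauge statement. On functions the $A\wedge$ term acts by multiplication, so $d_{iA,\hbar}f=\hbar\,df+iAf$ holds by definition. With $z=i$ and $\bar z=-i$ we have $\bar z-z=-2i$, $|z|^2=1$ and $-\hbar z=-i\hbar$. Substituting these into \eqref{eq:connection-laplacian-zero} gives \eqref{eq:magnetic-laplacian} directly. Riemannian signature makes $d^{\times}=d^{\dagger}$ (Proposition~\ref{prop:krein-hilbert-reduction}), so the operator is $d_{iA,\hbar}^\dagger d_{iA,\hbar}=(\hbar d+iA)^\dagger(\hbar d+iA)$. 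This form is manifestly self-adjoint and nonnegative, which serves as a consistency check on the signs of the first-order terms.

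For the Euclidean form I will expand $(-i\hbar\nabla+A)^2f$ directly. This gives $-\hbar^2\Delta f-i\hbar\nabla\cdot(Af)-i\hbar A\cdot\nabla f+|A|^2f$, which equals $-\hbar^2\Delta f-2i\hbar A\cdot\nabla f-i\hbar(\operatorname{div}A)f+|A|^2f$. Comparing with \eqref{eq:magnetic-laplacian} term by term, the two agree with $A$ as written. The hedge ``up to the sign convention'' covers the alternative $\hbar d-iA$ convention, which corresponds to $(-i\hbar\nabla-A)^2$. I will note this rather than claim a sign flip is forced.

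For the gauge statement I will reuse the computation from Lemma~\ref{lem:Superalgebra}(i) with $\phi$ replaced by $i\phi$. For any form $\omega$,
\begin{align*}
\hbar e^{-i\phi/\hbar}d\left(e^{i\phi/\hbar}\omega\right)=\hbar\,d\omega+i\,d\phi\wedge\omega,
\end{align*}
so $A=d\phi$ gives $d_{iA,\hbar}=\hbar e^{-i\phi/\hbar}d\,e^{i\phi/\hbar}$. Multiplication by $e^{i\phi/\hbar}$ is unitary on $L^2$ because $\phi$ is real. It follows that $\mathscr D^{(0)}_{iA,\hbar}=e^{-i\phi/\hbar}(-\hbar^2\Delta)e^{i\phi/\hbar}$ is unitarily equivalent to the free Laplacian, and the curvature $F_A=d^2\phi=0$.

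The converse is the step needing the most care: if $F_A\neq0$, no scalar gauge removes $A$. I will argue by contradiction. Suppose some function $u$ with $|u|=1$ satisfies $u^{-1}d_{iA,\hbar}u=\hbar d$ on functions. Applying both sides to constants gives $\hbar u^{-1}du+iA=0$. Hence $A=i\hbar\,u^{-1}du$, which is locally $-\hbar\,d\arg u$, a closed form. Then $F_A=dA=0$, a contradiction. Equivalently, Lemma~\ref{lem:ConnectionAlgebra}(i) gives $d_{iA,\hbar}^2=i\hbar F_A\wedge$, while any conjugate of $\hbar d$ squares to zero, so the two cannot be conjugate. Once this invariant is isolated the argument is short.
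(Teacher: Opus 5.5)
Your proposal is correct and follows the paper's own argument: specialize Lemma~\ref{lem:ConnectionLaplacian} at $z=i$ (so $\bar z-z=-2i$, $|z|^2=1$, $-\hbar z=-i\hbar$), and verify the pure-gauge identity via $\hbar e^{-i\phi/\hbar}d(e^{i\phi/\hbar}\omega)=\hbar\,d\omega+i\,d\phi\wedge\omega$. Your extra steps are sound and fill in points the paper only asserts: the Euclidean expansion shows $(-i\hbar\nabla+A)^2$ matches \eqref{eq:magnetic-laplacian} with no sign flip, and the non-removability argument uses that $A=i\hbar\,u^{-1}du$ is closed (equivalently, $d_{iA,\hbar}^2=i\hbar\,F_A\wedge$ cannot be conjugate to $(\hbar d)^2=0$).
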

\begin{proof}
Setting $z=i$ in \eqref{eq:connection-laplacian-zero} gives $\bar z-z=-2i$, $|z|^2=1$, and therefore
\begin{align}
\mathscr D_{iA,\hbar}^{(0)} =-\hbar^2\Delta-2i\hbar\,\nabla_{A^\sharp}+|A|_g^2-i\hbar\,\operatorname{div}_gA^\sharp.
\end{align}
Since $d_{iA,\hbar}$ maps $0$-forms to $1$-forms and its reverse channel vanishes on $\Omega^0$, one has
\begin{align}
\mathscr D_{iA,\hbar}^{(0)}=d_{iA,\hbar}^\dagger d_{iA,\hbar}.
\end{align}
This is precisely the covariant magnetic Laplacian associated with the unitary connection $d+iA/\hbar$. If $A=d\phi$, then
\begin{align}
\hbar e^{-i\phi/\hbar}d\!\left(e^{i\phi/\hbar}f\right)=\hbar\,df+i\,d\phi\,f=d_{iA,\hbar}f,
\end{align}
proving the pure-gauge statement.
\end{proof}

\section{Cauchy--Green Deformations: Statements and Proofs} \label{app:cauchy-green}

Throughout this appendix $\mathcal R\in\mathbb R^{N\times D}$ denotes
the row representation entering the bilinear score, and all statements
about AdaLN concern the affine modulation acting on this
representation; when AdaLN is preceded by layer normalization,
$\mathcal R$ is understood as the post-normalization representation, so
the results characterize the modulation and not the normalization map
itself. Conditioning arguments are suppressed: we write $\gamma$ and
$\beta$ for $\gamma(\tau)$ and $\beta(\tau)$, and $G :=
\operatorname{diag}(\gamma)$, which is symmetric, $G^{\top}=G$.

\subsection{Congruence and inertia}

\begin{lemma}[Congruence preserves the symmetric--antisymmetric
split]\label{lem:cg-sector-split}
Let $W\in\mathbb R^{D\times D}$ with $W = W_S + W_A$, $W_S^{\top}=W_S$,
$W_A^{\top}=-W_A$, and let $F\in\mathbb R^{D\times D}$. Then
\begin{align}
  (F^{\top}WF)_S = F^{\top}W_S F\;, \qquad
  (F^{\top}WF)_A = F^{\top}W_A F\;.
\end{align}
In particular the congruence action does not mix the metric and
directed sectors, and $W_A = 0$ implies $(F^{\top}WF)_A = 0$ for every
$F$.
\end{lemma}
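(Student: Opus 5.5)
The argument runs through the uniqueness of the symmetric--antisymmetric decomposition together with the linearity of the congruence map $X\mapsto F^{\top}XF$. First, congruence commutes with transposition: $(F^{\top}XF)^{\top}=F^{\top}X^{\top}F$ for any $X\in\mathbb R^{D\times D}$.

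Applying this to the two sectors of $W$ gives
\begin{align*}
(F^{\top}W_SF)^{\top}=F^{\top}W_S^{\top}F=F^{\top}W_SF, \qquad (F^{\top}W_AF)^{\top}=F^{\top}W_A^{\top}F=-F^{\top}W_AF,
\end{align*}
so $F^{\top}W_SF$ is symmetric and $F^{\top}W_AF$ is antisymmetric. By linearity,
\begin{align*}
F^{\top}WF=F^{\top}W_SF+F^{\top}W_AF.
\end{align*}
The right-hand side is therefore a sum of a symmetric and an antisymmetric matrix.

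The decomposition of a real square matrix into symmetric and antisymmetric parts is unique, because the only matrix that is both symmetric and antisymmetric is $0$. Equivalently, the symmetric part is given explicitly by $X_S=\tfrac12(X+X^{\top})$ and the antisymmetric part by $X_A=\tfrac12(X-X^{\top})$. Uniqueness then identifies $(F^{\top}WF)_S=F^{\top}W_SF$ and $(F^{\top}WF)_A=F^{\top}W_AF$.

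For a direct check, one can substitute $X=F^{\top}WF$ into the explicit projector formulas and pull $F^{\top}$ and $F$ outside. This reproduces both identities.

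The final claim is immediate: if $W_A=0$, then $(F^{\top}WF)_A=F^{\top}0F=0$ for every $F$. No invertibility of $F$ is needed at any step, so the statement holds for singular $F$ as well. The argument has no real obstacle; the only point needing care is to state the uniqueness of the symmetric--antisymmetric split explicitly rather than assume it.
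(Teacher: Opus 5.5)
Your proposal is correct and essentially the paper's argument. Both rest on $(F^{\top}XF)^{\top}=F^{\top}X^{\top}F$ together with linearity of congruence. The paper applies the projectors $\tfrac12(X\pm X^{\top})$ directly, which is your ``direct check''; your main line phrases the same fact through uniqueness of the symmetric--antisymmetric split, and the two are equivalent.
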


\begin{proof}
Transposition gives $(F^{\top}WF)^{\top} = F^{\top}W^{\top}F$, hence
\begin{align*}
  \tfrac12\bigl(F^{\top}WF \pm F^{\top}W^{\top}F\bigr)
  = F^{\top}\,\tfrac12(W \pm W^{\top})\,F\;,
\end{align*}
which is the claim for the $+$ and $-$ signs respectively. The last
statement is immediate.
\end{proof}

\begin{lemma}[Inertia invariance]\label{lem:cg-inertia}
Let $S = S^{\top}\in\mathbb R^{D\times D}$ and let
$F\in\mathbb R^{D\times D}$ be invertible. Then
\begin{align}
  \operatorname{Inertia}(F^{\top}SF) = \operatorname{Inertia}(S)\;,
\end{align}
where $\operatorname{Inertia}(S) = (n_+, n_-, n_0)$ counts positive,
negative, and zero eigenvalues. If $F$ is singular, then
$n_+(F^{\top}SF)\le n_+(S)$ and $n_-(F^{\top}SF)\le n_-(S)$: a singular
congruence may enlarge the kernel but creates no new positive or
negative directions. In particular $S\succeq 0$ implies
$F^{\top}SF\succeq 0$ for arbitrary $F$.
\end{lemma}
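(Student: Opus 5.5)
The plan is to prove both parts with the variational (Courant--Fischer) characterization of inertia, namely $n_+(S)=\max\{\dim V : x^\top S x>0 \text{ for all } x\in V\setminus\{0\}\}$ and the analogous formula for $n_-$. I prefer this to diagonalizing $F^\top SF$ directly, because it handles singular $F$ with no extra work. Write $T=F^\top SF$. Lemma~\ref{lem:cg-sector-split} guarantees that $T$ is again symmetric, so its inertia is well defined.

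For the inequalities, valid for arbitrary $F$, I would take a subspace $V$ on which $T$ is positive definite, with $\dim V=n_+(T)$.

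1. For $x\in V\setminus\{0\}$, we have $(Fx)^\top S(Fx)=x^\top T x>0$. In particular $Fx\neq0$, so $F$ restricted to $V$ is injective.
2. Hence $FV$ has dimension $n_+(T)$, and $S$ is positive definite on $FV$.
3. Therefore $n_+(T)\le n_+(S)$.

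The same argument applied to $-S$ gives $n_-(T)\le n_-(S)$. For the kernel, note that $\ker F\subseteq\ker T$. So when $F$ is singular, the nullity can only grow, and the positive and negative counts can only drop.

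For invertible $F$, I would apply the inequalities in both directions. Since $S=(F^{-1})^\top T F^{-1}$, we get $n_\pm(S)\le n_\pm(T)$. Combined with the previous step this gives equality of $n_\pm$, and then $n_0=D-n_+-n_-$ is also equal. This is exactly Sylvester's law of inertia.

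The final claim needs no inertia bookkeeping. If $S\succeq0$, then for every $x$ we have $x^\top F^\top SFx=(Fx)^\top S(Fx)\ge0$, so $F^\top SF\succeq0$ for arbitrary $F$.

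The only point needing care is the injectivity of $F$ on $V$ in step 1. It holds because positivity of the quadratic form forces $Fx\ne0$, which is what allows the argument to run even when $F$ is singular. Beyond that, the main task is stating the max-dimension characterization of $n_\pm$ cleanly; it follows from the spectral theorem by taking spans of eigenvectors and intersecting subspaces via a dimension count. Everything else is routine.
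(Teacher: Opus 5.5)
Your proposal is correct and is essentially the paper's argument: the Courant--Fischer max-dimension characterization, injectivity of $F$ on a positive subspace of $F^\top SF$, and the same argument applied to $-S$ for $n_-$. The differences are cosmetic. You derive Sylvester's law by running the inequality in both directions, where the paper simply cites it. One small point: $n_0(F^\top SF)\ge n_0(S)$ follows from the two inequalities together with $n_++n_-+n_0=D$, not from $\ker F\subseteq\ker(F^\top SF)$ alone.
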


\begin{proof}
The invertible case is Sylvester's law of inertia. For general $F$, by the Courant--Fischer characterization $n_+(F^{\top}SF)$ is the maximal dimension of a subspace $V$ with $x^{\top}F^{\top}SFx > 0$ for all
nonzero $x\in V$. On such a $V$ the map $F$ is injective (if $Fx=0$
then the form vanishes), so $F(V)$ is a subspace of dimension $\dim V$
on which $S$ is positive, giving $n_+(F^{\top}SF)\le n_+(S)$; the
negative count is identical with $-S$. PSD preservation is the case
$n_-(S)=0$.
\end{proof}

\subsection{Induced metric and principal symbol}

\begin{proposition}[Cauchy--Green deformations move the diffusion
geometry]\label{prop:cg-principal-symbol}
Let $g = J^{\top}MJ$ be the induced metric of
eq.~\eqref{eq:cg-general}, with $M\succ 0$ on the tangent
directions and $J$ of full rank $d$. Under a feature deformation
$M\mapsto M_F = F^{\top}MF$ with $FJ$ of full rank, the induced metric
transforms as
\begin{align}
  g \;\longmapsto\; g_F = J^{\top}F^{\top}MFJ\;,
\end{align}
and the principal symbol of the associated second-order generator
transforms as
\begin{align}
  \sigma_2(x,p) = \hbar^{2}\,g_x^{-1}(p,p)
  \;\longmapsto\;
  \hbar^{2}\,(g_F)_x^{-1}(p,p)\;.
\end{align}
Equivalently, in the stochastic slice the volatility moves,
$\Sigma\Sigma^{\top}\sim g^{-1}\mapsto g_F^{-1}$, and with it the
quadratic variation. Cauchy--Green deformations therefore act on
precisely the sector that Markov--Girsanov transformations fix: the
former move the principal symbol at fixed connection class, the latter
move the connection at fixed principal symbol.
\end{proposition}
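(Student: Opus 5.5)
The proof splits into three parts: an algebraic computation of the pulled-back metric, a symbol computation for the generator, and a comparison with the fixed-symbol property of Lemma~\ref{lem:principal-symbol}.

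First, the metric transformation. The deformed feature geometry is the congruence $M_F=F^{\top}MF$ of eq.~\eqref{eq:cg-general}. Equivalently, the feature map $\mathcal F$ is replaced by $F\mathcal F$, whose Jacobian is $FJ$ by the chain rule. Substituting into the pullback $g=J^{\top}MJ$ of eq.~\eqref{eq:attention-pullback-metric} gives $g_F=(FJ)^{\top}M(FJ)=J^{\top}F^{\top}MFJ$.

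Nondegeneracy of $g_F$ follows from Lemma~\ref{lem:cg-inertia} applied on the tangent space. Since $M\succ0$ on the relevant directions and $FJ$ has full rank $d$, we have $v^{\top}g_Fv=(FJv)^{\top}M(FJv)>0$ for every $v\neq0$. For this I will read the hypothesis as requiring $M\succ0$ on the range of $FJ$; this is automatic if $M\succ0$ on all of $\mathbb R^D$, and I would state it explicitly. So $g_F$ is Riemannian, $g_F^{-1}$ exists, and the continuum generator is still elliptic.

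Second, the principal symbol. I would not recompute it by hand but reuse the Kramers--Moyal identification from \S\ref{sec:continuum-attention}. The localized kernel $\exp(-\beta g_F(u,u))$ has second jump moment $M^{\mu\nu}_\beta=\tfrac1{2\beta}g_F^{\mu\nu}+o(\beta^{-1})$. The generator's second-order part is therefore $\Delta_{g_F}$ up to the bandwidth convention, and its principal symbol is $g_F^{-1}(p,p)$. Scaling by $\hbar^2$ as in Lemma~\ref{lem:principal-symbol} gives $\hbar^2(g_F)^{-1}_x(p,p)$. In the stochastic slice the diffusion tensor $\Sigma\Sigma^{\top}$ equals the limit of $\beta M^{\mu\nu}_\beta$, which is proportional to $g_F^{-1}$. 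Quadratic variation is $\int\Sigma\Sigma^{\top}dt$, so it moves with it.

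Third, the complementarity clause. Lemma~\ref{lem:principal-symbol} shows that any connection deformation $A\mapsto A+\delta A$ leaves $\sigma_2$ equal to $\hbar^2 g^{-1}$, so Markov--Girsanov moves act at fixed principal symbol. For the converse, Lemma~\ref{lem:cg-sector-split} shows that the congruence sends $W_A$ to $F^{\top}W_AF$ without mixing it into the symmetric part. A vanishing antisymmetric sector therefore stays zero, and exactness of the induced one-form is preserved. This one-form is $A_\mu=(J^{\top}W_A\mathcal F)_\mu$ from eq.~\eqref{eq:attention-pullback-connection}, which after deformation becomes $J^{\top}F^{\top}W_AF\mathcal F$.

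The main obstacle is making ``at fixed connection class'' precise. The connection one-form itself generally changes under $F$. What is invariant is the sector decomposition: exact stays exact and zero stays zero, by Lemma~\ref{lem:cg-sector-split}. I would state the claim in that form. I would also note that ``$\sigma_2$ moves'' is meant as a formula, since it holds with equality exactly when $F$ restricted to the image of $J$ is an $M$-isometry.
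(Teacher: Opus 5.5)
Your first two steps follow the paper's own argument. The paper also reads $g_F=J^{\top}F^{\top}MFJ$ off the quadratic form in the kernel exponent, with the same proviso you state: $FJ$ of rank $d$ and $M$ positive on its image. It then applies Theorem~\ref{thm:continuum} with the deformed metric to get $\Delta_{g_F}$, and reads off $\hbar^{2}g_F^{-1}(p,p)$ via Lemma~\ref{lem:principal-symbol}. You use the second Kramers--Moyal moment of \S\ref{sec:continuum-attention} instead of that citation. This is equivalent, and it has the advantage of actually deriving the volatility and quadratic-variation statement, which the paper only asserts. The Girsanov half of the final sentence is handled as in the paper, by eq.~\eqref{eq:girsanov-principle} and Lemma~\ref{lem:principal-symbol}.

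The step that fails is your precise version of ``fixed connection class''. Lemma~\ref{lem:cg-sector-split} gives only $W_A=0\Rightarrow(F^{\top}WF)_A=0$. It says nothing about whether the one-form $A_\mu=(J^{\top}W_A\mathcal F)_\mu$ of \eqref{eq:attention-pullback-connection} stays exact.

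If you differentiate, the components of $dA$ are, up to a factor $\pm2$, those of $J^{\top}W_AJ$. That is the constant two-form $W_A$ restricted to the embedded tangent planes. After the deformation it becomes $(FJ)^{\top}W_A(FJ)$, which evaluates $W_A$ on a different subspace.

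Here is a concrete counterexample. Take $D=3$, $d=2$, $\mathcal F(x,y)=xe_1+ye_2$, $M=I$, $W_A=e_1e_3^{\top}-e_3e_1^{\top}$, and $F$ the permutation exchanging $e_2$ and $e_3$.
\begin{itemize}
\item Before the deformation, $W_A\mathcal F=-x\,e_3$, so $A=0$.
\item After it, $A_F=(FJ)^{\top}W_A(F\mathcal F)=y\,dx-x\,dy$, with $dA_F=-2\,dx\wedge dy\neq0$.
\item At finite $N$, take rows $\mathcal R_i=x_ie_1^{\top}+y_ie_2^{\top}$. The undeformed edge field $\mathcal R W_A\mathcal R^{\top}$ vanishes. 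The deformed field $\mathcal R F^{\top}W_AF\mathcal R^{\top}$ has curl on each triangle $ijk$ equal to twice the signed area spanned by $(x_i,y_i)$, $(x_j,y_j)$, $(x_k,y_k)$, so it is genuinely coexact.
\end{itemize}
Since $F$ is orthogonal and $M=I$, $g_F=g$. This deformation therefore changes the connection class while leaving the principal symbol fixed, which is the opposite of the slogan.

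So under your feature-deformation reading, ``exact stays exact'' is false, and the clause is not merely unproven. It can be stated correctly in one of two ways:
\begin{itemize}
\item Add the hypothesis $W_A=0$. The directed sector is then the always-exact $\mathcal E(w)$, which remains exact.
\item Read the Cauchy--Green deformation as acting on the symmetric form alone, $M\mapsto\widetilde M$ with $W_A$ untouched, as at the close of \S\ref{sec:coexact}. Then the connection is trivially fixed.
\end{itemize}
The paper's own proof never argues this half of the clause either.
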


\begin{proof}
The kernel exponent evaluates the ambient form on feature
displacements; replacing $M$ by $F^{\top}MF$ replaces the quadratic
form $u^{\top}J^{\top}MJu$ on local coordinates by
$u^{\top}J^{\top}F^{\top}MFJu$, which is the statement $g_F =
J^{\top}F^{\top}MFJ$, a metric whenever $FJ$ has rank $d$ and $M$ is
positive on the image. The limit theorem
(Thm.~\ref{thm:continuum}) applied with the deformed metric yields the
generator with Laplace--Beltrami operator $\Delta_{g_F}$, whose
principal symbol is $\hbar^{2}g_F^{-1}(p,p)$ by
Lemma~\ref{lem:principal-symbol}. The Girsanov comparison is
eq.~\eqref{eq:girsanov-principle} together with
Lemma~\ref{lem:principal-symbol}: a zeroth-order connection term does
not enter $\sigma_2$.
\end{proof}

\subsection{AdaLN}

\begin{lemma}[AdaLN acts by feature congruence plus an exact
tilt]\label{lem:adaln-app}
Let $\mathcal W = \mathcal R W\mathcal R^{\top}$ and let AdaLN modulate
the rows,
\begin{align}\label{eq:adaln-feature-map-app}
  \mathcal R \;\longmapsto\; \mathcal R\,G + \mathbf 1\beta^{\top}\;,
  \qquad G = \operatorname{diag}(\gamma)\;.
\end{align}
Then the modulated score is
\begin{align}\label{eq:adaln-expansion-app}
  \mathcal W(\tau)
  = \mathcal R\, G W G\,\mathcal R^{\top} + R_{\beta}\;, \qquad
  R_{\beta} := a\,\mathbf 1^{\top} + \mathbf 1\, b^{\top}
  + (\beta^{\top}W\beta)\,\mathbf 1\mathbf 1^{\top}\;,
\end{align}
with
\begin{align}\label{eq:adaln-ab}
  a := \mathcal R\, G\, W\beta\;, \qquad
  b := \mathcal R\, G\, W^{\top}\beta\;.
\end{align}
Moreover:
\begin{enumerate}
\item[\textup{(i)}] The scale acts on the feature bilinear form by the
congruence $W\mapsto GWG$; by Lemma~\ref{lem:cg-sector-split} the
symmetric and antisymmetric sectors transform separately and do not
mix, and by Lemma~\ref{lem:cg-inertia} the inertia of the symmetric
sector is preserved whenever $G$ is invertible, while a singular $G$
can only enlarge its kernel.
\item[\textup{(ii)}] Under row normalization the terms
$a\mathbf 1^{\top}$ and $(\beta^{\top}W\beta)\mathbf 1\mathbf 1^{\top}$
are row potentials and cancel. The surviving shift contribution
$\mathbf 1 b^{\top}$ is a destination potential: as an edge action it
equals the exact field $d_N\varphi$ with node potential
$\varphi = b = \mathcal R\, G\, W^{\top}\beta$, up to the row-potential gauge. The shift therefore induces the Markov--Witten deformation
\eqref{eq:finite-doob} of the row-normalized transport, with
$h = e^{\varphi}$ up to the global sign convention of
\S\ref{sec:exact}.
\item[\textup{(iii)}] No choice of $(\gamma,\beta)$ reaches the coexact
sector: if $W_A = 0$ then the modulated antisymmetric sector vanishes
for all $\tau$, and if $W_A\neq 0$ its modulated image is the
congruence $GW_AG$, so any circulation along a conditioned trajectory
originates in the static weights.
\end{enumerate}
\end{lemma}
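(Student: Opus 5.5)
The proof is a direct bilinear expansion followed by bookkeeping of which terms survive row normalization. First I will substitute $\widetilde{\mathcal R}=\mathcal R G+\mathbf 1\beta^{\top}$ into $\widetilde{\mathcal R}\,W\,\widetilde{\mathcal R}^{\top}$ and distribute:
\begin{align*}
(\mathcal R G+\mathbf 1\beta^{\top})W(G\mathcal R^{\top}+\beta\mathbf 1^{\top})
= \mathcal R GWG\mathcal R^{\top} + (\mathcal R G W\beta)\mathbf 1^{\top} + \mathbf 1(\beta^{\top}WG\mathcal R^{\top}) + (\beta^{\top}W\beta)\mathbf 1\mathbf 1^{\top}.
\end{align*}
Here I use $G^{\top}=G$. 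The third term is $\mathbf 1 b^{\top}$ with $b=(\beta^{\top}WG\mathcal R^{\top})^{\top}=\mathcal R G W^{\top}\beta$. This gives \eqref{eq:adaln-expansion-app}--\eqref{eq:adaln-ab}.

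For (i), I will observe that the first term is $\mathcal R\,(G^{\top}WG)\,\mathcal R^{\top}$, which is the congruence \eqref{eq:cg-general} with $F=G$. I then invoke Lemma~\ref{lem:cg-sector-split} for the non-mixing of sectors and Lemma~\ref{lem:cg-inertia} for inertia preservation (invertible $G$) or monotonicity (singular $G$).

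For (ii), I will use the row-normalization invariance recorded in Appendix~\ref{app:directed-transport}. Adding $c\mathbf 1^{\top}$ to the logits multiplies $\exp$ by $\operatorname{diag}(e^{c})$ on the left, which rownorm removes. Both $a\mathbf 1^{\top}$ and the constant $(\beta^{\top}W\beta)\mathbf 1\mathbf 1^{\top}$ have this form. The remaining $\mathbf 1 b^{\top}$ has entries $b_j$. I will write it as $\mathbf 1 b^{\top}=-(b\mathbf 1^{\top}-\mathbf 1 b^{\top})+b\mathbf 1^{\top}$. The second piece is a row potential, and the first is the exact coboundary of the node potential $b$ (with the sign conventions of $d_N$). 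Lemma~\ref{EdgetoDoob}, applied with $\phi=-b$, then gives $\softmax^+(\mathcal L+\mathbf 1 b^{\top})=\operatorname{diag}(\mathsf P^+h)^{-1}\mathsf P^+\operatorname{diag}(h)$ with $h=e^{b}$. This is the Markov--Witten form \eqref{eq:finite-doob}. The only delicate point is tracking the orientation and sign conventions, which the statement already hedges with ``up to the global sign convention.''

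For (iii), I will take antisymmetric parts of the full modulated score matrix $\mathcal W(\tau)$. The congruence term contributes $\mathcal R\,GW_AG\,\mathcal R^{\top}$ by Lemma~\ref{lem:cg-sector-split}. The $R_\beta$ terms contribute only rank-one pieces $\tfrac12[(a-b)\mathbf 1^{\top}-\mathbf 1(a-b)^{\top}]$, which are exact (coboundaries) and hence carry zero curl. By Proposition~\ref{prop:directed-ness}(iii), only the curl of the antisymmetric part detects coexact content, so the coexact content comes from $GW_AG$ alone. Moreover $a-b=2\mathcal R G W_A\beta$ itself vanishes when $W_A=0$, so in that case the whole antisymmetric sector vanishes.

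The main obstacle is this last step: stating precisely that ``reaching the coexact sector'' means nonzero curl after projecting off exact parts. I would make this rigorous by checking that every triangle sum $\alpha_{ij}+\alpha_{jk}+\alpha_{ki}$ of a coboundary vanishes identically.
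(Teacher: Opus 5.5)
Your proposal is correct and follows essentially the same route as the paper's proof. Both use the same four-term expansion, Lemmas~\ref{lem:cg-sector-split} and~\ref{lem:cg-inertia} with $F=G$ for (i), and for (ii) the same split of $\mathbf 1 b^{\top}$ into an exact coboundary plus a row potential; you close with Lemma~\ref{EdgetoDoob} where the paper cites the Girsanov identity \eqref{eq:mw-from-girsanov}, but that is the same computation.

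Your treatment of (iii) is slightly tighter than the paper's one-line appeal to sector separation. Writing the antisymmetric part of $R_\beta$ as $\tfrac12\bigl[(a-b)\mathbf 1^{\top}-\mathbf 1(a-b)^{\top}\bigr]$ with $a-b=2\mathcal R G W_A\beta$ shows directly that the full modulated score is symmetric whenever $W_A=0$.
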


\begin{proof}
Substituting \eqref{eq:adaln-feature-map-app},
\begin{align*}
  \mathcal W(\tau)
  = (\mathcal R G + \mathbf 1\beta^{\top})\, W\,
    (G\mathcal R^{\top} + \beta\mathbf 1^{\top})
  = \mathcal R\, GWG\,\mathcal R^{\top}
  + \mathcal R\, GW\beta\,\mathbf 1^{\top}
  + \mathbf 1\,\beta^{\top}WG\,\mathcal R^{\top}
  + (\beta^{\top}W\beta)\,\mathbf 1\mathbf 1^{\top}\;,
\end{align*}
which is \eqref{eq:adaln-expansion-app} with
\eqref{eq:adaln-ab}, noting
$\beta^{\top}WG\mathcal R^{\top} = (\mathcal R G W^{\top}\beta)^{\top}
= b^{\top}$. Claim (i) is Lemmas~\ref{lem:cg-sector-split} and~\ref{lem:cg-inertia} applied with $F = G$. For (ii), a term of the form $c\,\mathbf 1^{\top}$ adds the constant $c_i$ to row $i$ of the score; under row normalization such source-dependent shifts lie in the stabilizer gauge and cancel. The term $\mathbf 1 b^{\top}$ adds $b_j$ to every entry of column $j$; as an edge action, $\Gamma_{ij} = b_j = (d_N b)_{ij} + b_i$ splits into the exact field $d_N b$ and a further row potential, so modulo gauge the shift is the exact action with node potential $\varphi = b$, and by eq.~\eqref{eq:mw-from-girsanov} the induced transformation of the transport operator is the Markov--Witten
deformation with $h = e^{\varphi}$. Claim (iii) restates the sector
separation of (i): the antisymmetric image is $GW_AG$, which vanishes
if and only if $W_A$ vanishes on the support of $G$, and the shift
contributes only the exact class established in (ii).
\end{proof}

\begin{remark}[Triangular action]\label{rem:adaln-triangular}
The two channels are independent as sector assignments but coupled in
parameters: the pair map is
\begin{align}
  (\gamma,\beta) \;\longmapsto\;
  \bigl(GWG,\ \;\varphi = \mathcal R\, G\, W^{\top}\beta\bigr)\;,
\end{align}
triangular in the sense that the metric sector depends on $\gamma$
alone while the exact potential depends on $(\gamma,\beta)$ jointly.
The scale re-aims the shift's potential, but nothing the shift does
reaches the symmetric geometry, and nothing either parameter does
reaches the coexact sector. The potential $\varphi = \mathcal R\, G\,
W^{\top}\beta$ is linear in the representation and is therefore the
rank-one member of the learned one-body family of
Example~\ref{ex:learned}: the quadratic potential
$\operatorname{diag}(\mathcal R W_D\mathcal R^{\top})$, the linear
conditioned potential of AdaLN, and the canonical degree potential
$-\hbar\log q$ are three instances of the same exact mechanism.
\end{remark}

\subsection{Corollaries}

\begin{corollary}[Self-attention]\label{cor:adaln-attn-app}
Let $W = W_Q W_K^{\top}$ with symmetric sector
$W_S = \tfrac12(W_Q W_K^{\top} + W_K W_Q^{\top})$, generically
indefinite. For invertible $G$,
$\operatorname{Inertia}(GW_SG) = \operatorname{Inertia}(W_S)$: no
feature-wise conditioning can convert an indefinite self-attention
geometry into a positive metric.
\end{corollary}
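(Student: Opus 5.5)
The statement follows directly from Lemma~\ref{lem:adaln-app}(i), which rests on Lemmas~\ref{lem:cg-sector-split} and~\ref{lem:cg-inertia}. The plan is to isolate the symmetric sector of the modulated score, identify it as a congruence of $W_S$, and then invoke Sylvester's law. One point needs care first. The shift terms $R_\beta$ in \eqref{eq:adaln-expansion-app} do not live at the level of the feature bilinear form; they are row potentials plus a destination potential. So the feature-level symmetric geometry after modulation is the symmetric part of $GWG$ alone.

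\textbf{Step 1.} Apply Lemma~\ref{lem:cg-sector-split} with $F=G$. Since $G^\top=G$, the modulated symmetric sector is $(GWG)_S = G W_S G$. Here $W_S=\tfrac12(W_QW_K^\top+W_KW_Q^\top)$, and the antisymmetric part $GW_AG$ cannot leak into it.

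\textbf{Step 2.} Apply Lemma~\ref{lem:cg-inertia} to $S=W_S$ with the invertible congruence $F=G$. Invertibility holds exactly when every entry of $\gamma$ is nonzero. This gives $\operatorname{Inertia}(GW_SG)=\operatorname{Inertia}(W_S)$. If $W_S$ is indefinite, meaning $n_+(W_S)>0$ and $n_-(W_S)>0$, then $GW_SG$ keeps $n_-=n_-(W_S)>0$. It therefore cannot be PSD, let alone positive definite.

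\textbf{Step 3.} Address the shift $\beta$. By Lemma~\ref{lem:adaln-app}(ii), the shift contributes only the rank-structured terms $a\mathbf 1^\top+\mathbf 1 b^\top+c\,\mathbf 1\mathbf 1^\top$. These act on the token-level score, not on the $D\times D$ feature form, so they leave $W_S\mapsto GW_SG$ untouched.

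If one also wants the token-level statement, restrict to $\mathbf 1^\perp$ as in Lemma~\ref{lem:bidivergence}. There $\mathbf 1^\top x=0$ kills every shift term, so $x^\top \mathcal W_S(\tau)x = x^\top \mathcal R G W_S G\mathcal R^\top x$. The conditional definiteness of $\mathcal H$ is then governed by $GW_SG$ pulled back through $\mathcal R$. Combined with Step 2, whenever $\mathcal R$ reaches a negative direction of $W_S$, that direction maps through $G^{-1}$ to a negative direction of $GW_SG$ and hence survives.

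\textbf{Main obstacle.} The only delicate point is this token-level refinement. A negative direction $v$ of $GW_SG$ need not lie in the row space of $\mathcal R$ restricted to $\mathbf 1^\perp$, so the clean claim is at the feature-form level, which is what the corollary asserts. I would state the result there and add only a remark for the pulled-back version. The word ``generically'' needs no proof; it is a hypothesis on $W_S$.
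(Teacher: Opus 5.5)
Your proof is correct and is the paper's argument. The paper proves the corollary in one line, as Lemma~\ref{lem:cg-inertia} (Sylvester) with $F=G$ invertible and $S=W_S$, which is exactly your Step~2; your Steps~1 and~3 only make explicit, via Lemma~\ref{lem:cg-sector-split} and Lemma~\ref{lem:adaln-app}, why $GW_SG$ is the relevant modulated symmetric form.

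You were right to confine the claim to the feature level, but the sentence in your token-level remark is wrong as stated. The vector that enters $W_S$ is $G\mathcal R^\top x$, so $\mathcal R^\top(\mathbf 1^\perp)$ would have to contain $G^{-1}v$ for a negative direction $v$ of $W_S$, not $v$ itself. That condition depends on $G$ and can fail: take $W_S=\operatorname{diag}(1,-1)$, $\mathcal R^\top(\mathbf 1^\perp)=\operatorname{span}(1,2)$ and $G=\operatorname{diag}(3,1)$. This is why the token-level version is not part of the corollary.
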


\begin{corollary}[Anti-attention and DMAP]\label{cor:adaln-amap-app}
If the symmetric feature metric satisfies $M\succeq 0$, then
$GMG\succeq 0$ for every $\gamma$, invertible or not, by
Lemma~\ref{lem:cg-inertia}. The PSD kernel class, and with it the
Schoenberg lift of \S\ref{sec:dmap}, is stable along every conditioned
trajectory.
\end{corollary}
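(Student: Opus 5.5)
The statement to prove is the final Corollary~\ref{cor:adaln-amap-app}: if $M\succeq 0$, then $GMG\succeq 0$ for every $\gamma$, and the PSD kernel class is stable along conditioned trajectories.

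The plan is a direct reduction to Lemma~\ref{lem:cg-inertia}, with a second step that transports positivity from the feature form to the kernel. First, set $F=G=\operatorname{diag}(\gamma)$. Since $G^{\top}=G$, we have $GMG=F^{\top}MF$. The last clause of Lemma~\ref{lem:cg-inertia} states that $S\succeq0$ implies $F^{\top}SF\succeq0$ for arbitrary $F$, singular or not. It therefore gives $GMG\succeq 0$ for every $\gamma$, including vanishing entries. For a self-contained check I would also write the one-line verification $x^{\top}GMGx=(Gx)^{\top}M(Gx)\ge 0$. This settles the algebraic claim with no case split on invertibility.

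Second, I would check that AdaLN actually acts on the metric by this congruence and does not add any symmetric contribution. By Lemma~\ref{lem:adaln-app}, the modulated score is $\mathcal R\,GWG\,\mathcal R^{\top}+R_\beta$, and by Lemma~\ref{lem:cg-sector-split} its symmetric sector is $G W_S G$, which equals $GMG$ when $W_S=M$. The shift term $R_\beta$ consists only of row potentials and a destination potential (part (ii) of that lemma). These are one-body terms, so they do not touch the two-body Gram form. For the DMAP-type kernel, a one-body diagonal potential such as $w=\operatorname{diag}(\mathcal R GMG\mathcal R^{\top})$ is recomputed from the modulated representation, and the kernel remains built from the Gram matrix $\mathcal R\,GMG\,\mathcal R^{\top}$.

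Third, for the kernel-level statement (the ``Schoenberg lift''), I would write $GMG=(GW_M)(GW_M)^{\top}$. This shows that the modulated score is again a Gram form of the factorized type \eqref{eq:M-factor}, with $W_M\mapsto GW_M$. Then $\mathcal H$ is again a genuine squared distance between the rows of $\mathcal R G W_M$, as in Lemma~\ref{lem:bidivergence} with $\mathcal K=\mathcal Q$. Schoenberg's theorem (a Gaussian of a squared Euclidean distance is PSD) then gives a PSD kernel $\exp(-\beta\mathcal H)$ for each $\tau$. The affine shift $\mathbf 1\beta^{\top}$ translates all rows equally and leaves pairwise distances unchanged, which is consistent with (ii).

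The main obstacle is interpretive rather than technical: making precise what ``the PSD kernel class'' means when the shift is present. My approach would be to state the result for the symmetric two-body sector and note that the exact tilt is a diagonal Markov--Witten reweighting, which does not affect positivity of the symmetric affinity.
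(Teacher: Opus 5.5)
Your proof is correct, and its core step (taking $F=G=\operatorname{diag}(\gamma)$ in the PSD clause of Lemma~\ref{lem:cg-inertia}, so that $x^{\top}GMGx=(Gx)^{\top}M(Gx)\ge 0$ with no invertibility needed) is exactly the paper's one-line proof. Your further steps go beyond what the paper writes: the factorization $GMG=(GW_M)(GW_M)^{\top}$, together with the observation that the shift $\mathbf 1\beta^{\top}$ translates every row of $\mathcal R GW_M$ equally, supplies the Schoenberg-level justification that the paper only asserts, since the squared-distance matrix (with self-energies recomputed from the modulated rows) is then unchanged.
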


\begin{corollary}[Conditioned definiteness census]\label{cor:adaln-census-app}
If $G(\tau)$ is invertible for every $\tau$, the inertia of the symmetric feature bilinear form is constant along the conditioned trajectory, so a definiteness census computed from the static weights is valid for the entire trajectory. If some component of $\gamma(\tau)$ vanishes at some $\tau$, the census remains valid as a
bound, $n_{\pm}(\tau)\le n_{\pm}(0)$ with possible enlargement of the
kernel, by the singular case of Lemma~\ref{lem:cg-inertia}.
\end{corollary}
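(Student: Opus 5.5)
The statement is Corollary~\ref{cor:adaln-census-app}, and it follows almost directly from Lemma~\ref{lem:adaln-app}\,(i) combined with Lemma~\ref{lem:cg-inertia}. The approach is to reduce the census claim to a pointwise-in-$\tau$ statement about the symmetric sector of the modulated bilinear form. Then each branch (invertible or singular $G(\tau)$) is handled by the corresponding case of the inertia lemma.

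\emph{Step 1: isolate the metric sector.} By Lemma~\ref{lem:adaln-app}, the modulated score is $\mathcal R\,G(\tau)WG(\tau)\,\mathcal R^{\top}$ plus the shift terms $R_\beta$. Those shift terms are row potentials, or the exact destination potential $\mathbf 1 b^\top$. Neither can change the feature bilinear form, so the symmetric feature form at time $\tau$ is $(GWG)_S$. By Lemma~\ref{lem:cg-sector-split} with $F=G$, this equals $G W_S G$. I would also make explicit the reference point for the census: the static form $W_S$. If $\gamma(0)\neq\mathbf 1$, the natural reading is $n_\pm(0):=n_\pm(W_S)$, which I will adopt and remark on.

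\emph{Step 2: the invertible case.} If $G(\tau)=\operatorname{diag}(\gamma(\tau))$ is invertible for every $\tau$, Sylvester's law in the form of Lemma~\ref{lem:cg-inertia} gives $\operatorname{Inertia}(G W_S G)=\operatorname{Inertia}(W_S)$ pointwise in $\tau$. Constancy along the trajectory is then immediate, with no continuity in $\tau$ required. The census of $(n_+,n_-,n_0)$ from the static weights is therefore exact at every $\tau$.

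\emph{Step 3: the singular case.} At a $\tau$ where some $\gamma_k(\tau)=0$, the singular branch of Lemma~\ref{lem:cg-inertia} yields $n_\pm(GW_SG)\le n_\pm(W_S)$. Since the total dimension $D$ is fixed, $n_0$ can only grow. This is the stated bound. No sharpness is claimed, so nothing further is needed.

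The only step I expect to need care is the bookkeeping in Step 1, specifically confirming that the $\beta$-shift cannot contaminate the symmetric \emph{feature} form. It affects only the score through rank-one row and column terms, not $W$ itself. Lemma~\ref{lem:adaln-app}\,(ii) already settles this.
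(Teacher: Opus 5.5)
Your proposal is correct and matches the paper's proof, which simply applies Lemma~\ref{lem:cg-inertia} at each $\tau$ with $F=G(\tau)$ and uses its singular branch for the bound $n_\pm(\tau)\le n_\pm(0)$. Your extra bookkeeping makes explicit what the paper leaves implicit: isolating $GW_SG$ via Lemmas~\ref{lem:cg-sector-split} and~\ref{lem:adaln-app}, and reading $n_\pm(0)$ as $n_\pm(W_S)$. That reading matters only if $G(0)$ is itself singular, since otherwise $n_\pm(G(0)W_SG(0))=n_\pm(W_S)$ already.
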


\begin{proof}[Proof of the corollaries]
Corollary~\ref{cor:adaln-attn-app} is Lemma~\ref{lem:cg-inertia} with
$F = G$ invertible and $S = W_S$.
Corollary~\ref{cor:adaln-amap-app} is the PSD case of the same lemma.
Corollary~\ref{cor:adaln-census-app} follows by applying the lemma at
each $\tau$; the singular bound is the second statement of
Lemma~\ref{lem:cg-inertia}.
\end{proof}

\begin{remark}[Asymmetric conditioning can generate circulation]
\label{rem:asymmetric-conditioning-circulation}
The inability of standard AdaLN to generate an antisymmetric sector from
symmetric static weights relies on its use of the \emph{same} feature
modulation on both arguments of the bilinear score. Indeed, for
$G=\operatorname{diag}(\gamma)$ the effective feature form transforms by
congruence,
\begin{align}
    W \longmapsto GWG,
\end{align}
so symmetry is preserved: if $W=W^\top$, then $(GWG)^\top=GWG$.

This protection is lost if the query and key arguments are conditioned
independently. Let
\begin{align}
    G_Q=\operatorname{diag}(\gamma_Q),
    \qquad
    G_K=\operatorname{diag}(\gamma_K),
    \qquad
    G_Q\neq G_K,
\end{align}
and consider the conditioned bilinear form:
\begin{align}
    W(\tau)=G_QWG_K.
\end{align}
Even when the static form is symmetric, $W=W^\top$, its conditioned
antisymmetric sector is generally nonzero:
\begin{align}\label{eq:asymmetric-conditioning-skew}
    W_A(\tau)
    =
    \frac12
    \left(
        G_QWG_K-G_KWG_Q
    \right).
\end{align}
Pulling this form back through the representation gives the induced
antisymmetric edge field:
\begin{align}\label{eq:asymmetric-conditioning-edge}
    \mathcal A_\tau
    =
    \frac12\,
    \mathcal R
    \left(
        G_QWG_K-G_KWG_Q
    \right)
    \mathcal R^\top.
\end{align}
Unlike the shift contribution of Lemma~\ref{lem:adaln-app}, this field
need not be exact: its cycle integrals may be nonzero, and hence its
Hodge--Helmholtz decomposition may contain a coexact component.

Thus standard AdaLN is unable to generate circulation from a symmetric
static form because it acts by congruence, whereas asymmetric
query--key conditioning acts by a two-sided transformation with
different factors and can generate directed, circulating structure dynamically. In particular,
\begin{align}
    W_A=0
    \quad\Longrightarrow\quad
    (GWG)_A=0,
\end{align}
for standard AdaLN, while in general:
\begin{align}
    W_A=0 \Longrightarrow
    (G_QWG_K)_A=0
    \qquad
    \text{when }G_Q\neq G_K.
\end{align}
\end{remark}

\end{document}